\documentclass{article}
\ifdefined\pdfminorversion\pdfminorversion=7\fi

\PassOptionsToPackage{numbers, sort&compress}{natbib} 
\usepackage[main,final]{neurips_2026}

\usepackage[utf8]{inputenc} 
\usepackage[T1]{fontenc}    
\usepackage{hyperref}       
\hypersetup{hypertexnames=false}
\usepackage{url}            
\usepackage{booktabs}       
\usepackage{amsfonts}       
\usepackage{nicefrac}       
\usepackage{microtype}      
\usepackage{xcolor}         
\usepackage{tikz}

\usepackage{booktabs}
\usepackage{amsfonts}       
\usepackage{amsmath, amssymb, amsthm} 
\usepackage{graphicx}       
\usepackage{subcaption}     
\usepackage{enumitem}

\usepackage{placeins}
\usepackage{longtable,array}

\newtheorem{theorem}{Theorem}     
\newtheorem{lemma}{Lemma} 

\usepackage[most]{tcolorbox}
\usepackage{enumitem}  
\definecolor{contribbar}{RGB}{15, 76, 92}       
\definecolor{contribbg}{RGB}{240, 247, 249}     
\newtcolorbox{contributionbox}{
    enhanced,
    breakable,
    colback=contribbg,
    boxrule=0pt,
    sharp corners,
    borderline west={2.5pt}{0pt}{contribbar},
    left=12pt, right=12pt, top=8pt, bottom=8pt,
    before skip=6pt, after skip=6pt,
}

\title{Phase Kernel Lifts Capacity of\\Dense Associative Memory}

\author{
  \makebox[0pt][c]{Yifei Zhao$^{1}$
  \quad Ying Tang$^{1,2,3,4}$\thanks{Corresponding author: \texttt{jamestang23@gmail.com}}} \\
  \\
  \makebox[0pt][c]{$^1$Institute of Fundamental and Frontier Sciences,} \\
  \makebox[0pt][c]{University of Electronic Science and Technology of China, Chengdu 611731, China} \\
  \makebox[0pt][c]{$^2$School of Physics, University of Electronic Science and Technology of China, Chengdu 611731, China} \\
  \makebox[0pt][c]{$^3$Key Laboratory of Quantum Physics and Photonic Quantum Information, Ministry of Education,} \\
  \makebox[0pt][c]{University of Electronic Science and Technology of China, Chengdu 611731, China} \\
  \makebox[0pt][c]{$^4$Non-classical Information Science Basic Discipline Research Center of Sichuan Province,} \\
  \makebox[0pt][c]{University of Electronic Science and Technology of China, Chengdu 611731, China}
}

\begin{document}

\addtocontents{toc}{\protect\setcounter{tocdepth}{-1}}

\maketitle

\begin{abstract}
\looseness=-1 Dense associative memory underlies modern attention mechanisms; however, Modern Hopfield Networks (MHNs) with linear kernels yield overlapping attractor basins and spurious states under high memory load, limiting their theoretically exponential storage capacity in practice. We construct a class of nonlinear phase kernels that modify the energy landscape and retrieval dynamics, and prove that these kernels preserve exponential storage capacity and local fixed-point convergence under given conditions on kernel smoothness, memory separation, and memory distributions. The framework is instantiated by the Sigmoidal Phase Hopfield Network (S-Hop), which guarantees monotone energy descent and mitigates practical capacity loss. S-Hop has $8.3\times$ and $4\times$ gains in one-step noisy retrieval load over Uniform Hopfield Networks (U-Hop) on CIFAR-10 and TinyImageNet respectively, and reaches a $50\times$ gain in first-failure load for ImageNet. The results demonstrate that phase kernels provide a principled route to robust dense associative memory
\footnote{Code is available at \url{https://github.com/Machine-learning-and-complex-systems/S-Hop}.}.
\end{abstract}

\section{Introduction}

The MHN framework~\cite{ramsauer2021hopfield} provides a rigorous energy framework
with provable capacity~\cite{lucibello2024exponential} and a direct
mathematical equivalence to self-attention, yet most existing variants
remain bound by linear or polynomial kernels that produce a
dot-product energy surface in which increasing memory load can reduce
pattern separation~\cite{hu2023sparse,jin2005pattern, santos2026sparse}.
This geometric limitation causes narrow, overlapping attractor basins
and spurious states under high memory load, motivating kernel designs
that improve separation while retaining the MHN energy structure. 

Continuous oscillator
networks~\cite{nishikawa2004capacity}
offer a parallel paradigm in which stable retrieval emerges from smooth
phase dynamics, and their success serves as the motivational starting
point for our construction.
Any modified kernel attempting to escape the geometric bottleneck of
linear MHN must simultaneously preserve Lyapunov stability and
exponential capacity, neither of which follows from a nonlinear feature substitution alone.
Closing this gap therefore requires a structural condition on the kernel
that is verifiable in advance and supports convergence and capacity analysis for a family of nonlinear embeddings, with energy descent verified for each instantiation.

Here, we propose a general theoretical framework that introduces a
family of nonlinear phase kernels into MHN through separable strong convexity of the phase
potential and a kernel contraction condition. Under the stated assumptions, kernels in this family preserve exponential storage capacity and local fixed-point convergence. The induced Hessian metric controls local conditioning, while feature scale and separation also enter the capacity bound. S-Hop instantiates this framework with a coupled hyperbolic mapping and admits a concave-convex procedure (CCCP) with monotone energy descent.
Figure~\ref{fig:energy_landscape} illustrates the energy-shaping motivation; the analysis establishes disjoint local attraction regions under a margin condition.

We evaluate S-Hop on MNIST, CIFAR-10, TinyImageNet, and ImageNet using one noisy retrieval update and a mean squared error (MSE) threshold of $0.1$.
On CIFAR-10 at $N=500$, S-Hop achieves $92.07\%$ mean success compared with $48.40\%$ for U-Hop's public \emph{avg} objective, a 43.67-percentage-point gain over three seeds.
On ImageNet, S-Hop passes the $90\%$ success criterion through $N=500$ at both $64\times64$ and $224\times224$ resolution (Figure~\ref{fig:capacity_bar}).
Scale and component controls identify contributions from feature scale and the energy-derived state update (Appendices~\ref{subsec:cr-scale} and~\ref{subsec:cr-components}).
Figure~\ref{fig:image-retrieval-examples} provides a visual illustration of image retrieval from noisy queries.




\begin{figure}[htbp]
    \centering
    \makebox[\linewidth][c]{%
    \begin{tikzpicture}
        \node[anchor=north west, inner sep=0] (img) at (0,0)
            {\includegraphics[width=\linewidth]{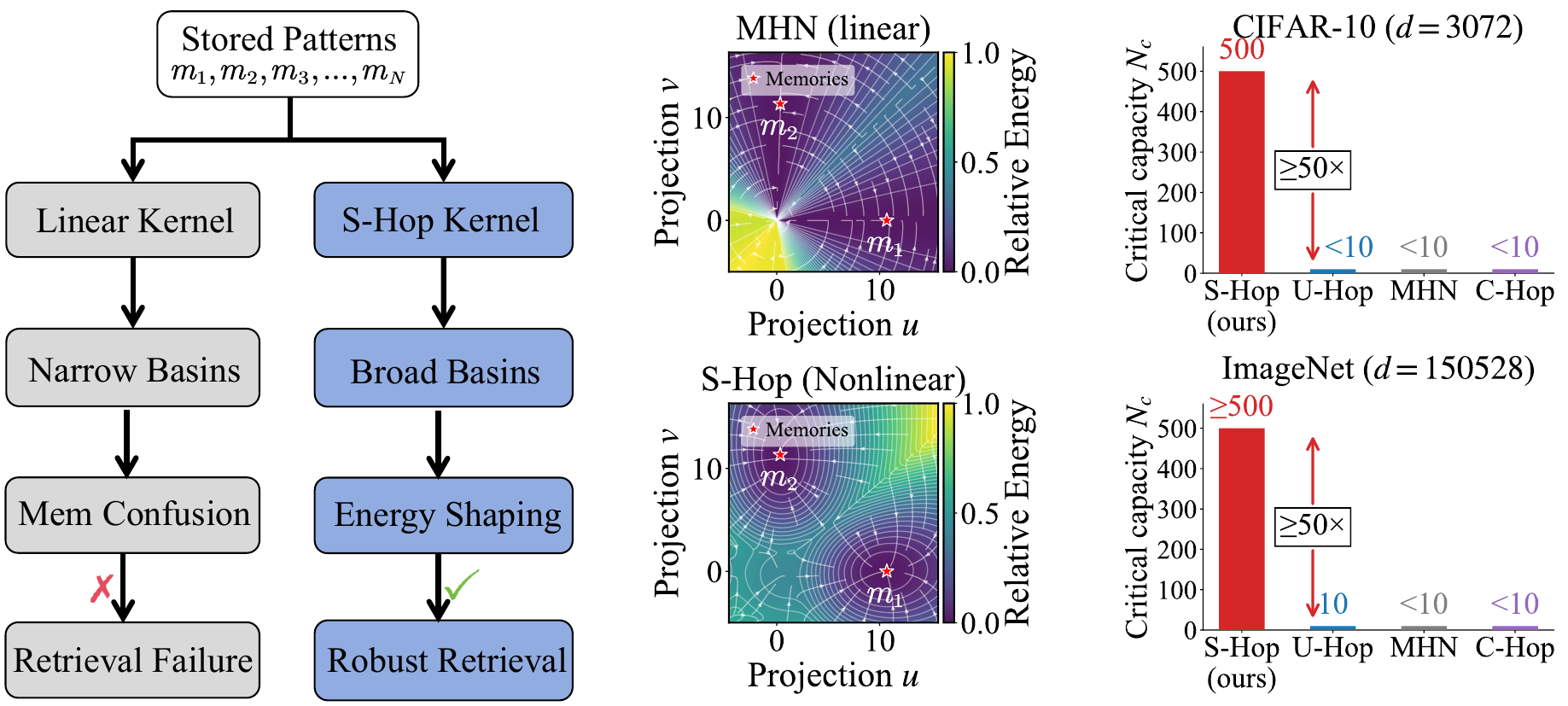}};
        \begin{scope}[x={(img.north east)}, y={(img.south west)}]
            \node[font=\small] at (0.01, 0.01) {(a)};
            \node[font=\small] at (0.39, 0.01) {(b)};
            \node[font=\small] at (0.68, 0.01) {(c)};
        \end{scope}
    \end{tikzpicture}
    }
    \caption{Framework, energy landscape, and capacity gains of S-Hop.
    (a)~Schematic of the energy-shaping motivation for stored patterns. 
    (b)~Illustrative 2D energy contours around $m_1, m_2$, independently scaled to $[0,1]$ for each kernel.
    (c)~Retrieval-load summaries on CIFAR-10 ($d=3072$) and ImageNet ($224\times224$, $d=150528$) for S-Hop, U-Hop~\cite{pmlr-v235-wu24i}, MHN~\cite{ramsauer2021hopfield}, and Cosine Hopfield Network (C-Hop). S-Hop achieves passing-load ratios of over $50\times$ than the previous methods. 
    }
    \label{fig:energy_landscape}
\end{figure}

Our main contributions are:
\begin{contributionbox}
\begin{itemize}[leftmargin=*, itemsep=4pt, topsep=0pt]
    \item We introduce a class of \textbf{nonlinear phase kernels} for MHN. Their phase potential yields a component-wise inverse local field and an energy-derived retrieval update.
    \item \looseness=-1 We establish exponential storage capacity and local fixed-point convergence under explicit kernel and memory conditions. S-Hop also has \textbf{monotone energy descent} via CCCP.
    \item We quantify retrieval gains on MNIST, CIFAR-10, TinyImageNet, and ImageNet (Table~\ref{tab:main}). S-Hop supports \textbf{more than 500 memories} on MNIST, CIFAR-10, and ImageNet. 
\end{itemize}
\end{contributionbox}

\begin{figure}[!t]
\centering
\includegraphics[width=\linewidth]{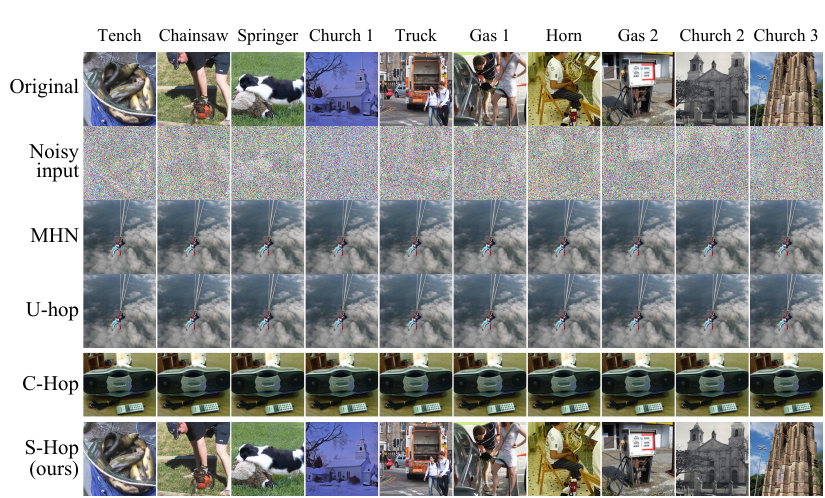}
\caption{Illustrative image retrieval from a shared memory bank of 500 ImageNet images. 
Each column shows a target image, its noisy query, and retrieval results after 20 updates ($\beta=1$) without PCA or normalization. 
Queries are generated as $q=\operatorname{clip}(|x+\epsilon|,0,1)$ with $\epsilon\sim\mathcal{N}(0,I)$. 
S-Hop successfully retrieves target images, while MHN, U-Hop and C-Hop converge to other stored images.}
\label{fig:image-retrieval-examples}
\end{figure}

\section{Related Work}
\label{sec:related_work}

\medskip \noindent\textbf{Modern Hopfield Networks and Attention} Associative memory models have long served as a foundational paradigm for understanding how
complex systems store and retrieve structured information \cite{hu2015associative, hopfield1982neural, krotov2025modern, noguer2024hopfield, krotov2023frontier}.
The introduction of MHN significantly advanced this field by demonstrating exponential memory
capacity through polynomial and exponential interaction functions
\cite{krotov2016dense, krotov2021large, demircigil2017model, hoover2025epanechnikov, chaudhry2023long},
with Lucibello and M\'{e}zard \cite{lucibello2024exponential, negri2023storage} subsequently establishing tight
asymptotic capacity bounds for this family \cite{hu2024provably}.
Notably, continuous-state MHN exhibits a direct mathematical equivalence with the self-attention
mechanism in Transformer architectures \cite{ramsauer2021hopfield, hoover2023energy, smart2025incontext, hoier2026training, gladstone2026energy} and with generative diffusion models via shared energy formulations \cite{ambrogioni2024dispersed, rodriguez2026generative, pham2025memorization}, a correspondence further deepened by \citet{rende2024mapping}, who show that a single self-attention layer learns the conditionals of a generalized Potts model—a direct statistical-mechanics analogue of Hopfield energies. Santos et al.\ \cite{santos2025hopfield} further generalized this connection by formulating
Hopfield-Fenchel-Young networks, revealing that sparsity and loss margins jointly govern
exact single-pattern retrieval.
Together, these results confirm that capacity and convergence in MHN are highly sensitive to
the choice of interaction function---a sensitivity our nonlinear phase kernel framework
directly exploits. Context-based retrieval \cite{choraria2026context} complements this approach by conditioning recall on auxiliary query signals.

\medskip \noindent\textbf{Limitations of Linear Kernels} Despite the widespread success of continuous MHN, most existing variants rely on linear or
polynomial kernel functions for pattern embedding.
As demonstrated by Jin and Zhao \cite{jin2005pattern} and Hu et al.\ \cite{hu2023sparse},
linear mappings can exhibit overlapping attractor basins at high memory loads
in high-dimensional memory spaces, causing memory confusion and spurious states under load. The role of low-dimensional representations in shaping these energy landscapes is studied in \cite{lipshutz2026low}, providing theoretical grounding for this failure mode.
Recent biologically plausible approaches \cite{shafiei2026biologically, kozachkov2025neuron} escape this bottleneck through a two-layer architecture with threshold nonlinearity,
achieving capacity exponential in the number of hidden units via distributed representations.
These models operate in a discrete two-layer setting without a continuous-state energy
landscape or continuous-state convergence guarantees---the regime targeted by our framework.

\medskip \noindent\textbf{Uniform Retrieval, Oscillatory Dynamics, and Distance-Based Variants} U-Hop~\cite{pmlr-v235-wu24i} improves the capacity ratio $\alpha$
by repositioning memory embeddings in a kernelized feature space, yet
uses a learned linear projection, whereas S-Hop changes both the embedding
and the phase-space retrieval update.
Oscillator-based and input-driven dynamical
models~\cite{betteti2025input,hoover2024dense,miyato2025artificial,nishikawa2004oscillatory,nishikawa2004capacity,wang2024training}
leverage continuous phase dynamics for stable retrieval, and recent
work~\cite{guo2025oscillatory,ogranovich2026oscillator} proves exponential
capacity in honeycomb Kuramoto networks, but these paradigms operate outside
the attention-equivalent energy framework.
A parallel line of work~\cite{hoover2024dense,hoover2025epanechnikov}
reshapes basin geometry through distance-based overlaps rather than feature
embeddings, achieving strong capacity gains via a memory-norm penalty
that operates on an axis orthogonal to ours. Curved neural networks with higher-order interactions have also been introduced to enhance memory capacity and accelerate memory retrieval ~\cite{aguilera2025explosive}. 
Our framework develops this complementary direction through separable, strongly convex phase potentials and explicit contraction conditions. These conditions support a family of nonlinear embeddings with exponential capacity and local convergence, while S-Hop additionally admits monotone energy descent.

\section{Method}
\label{sec:method}

In this section, we formalize the proposed \emph{family} of nonlinear phase kernels. We construct a surrogate Hopfield energy defined in phase space and derive tractable CCCP fixed-point dynamics. Here, $d$ is the phase-space dimension and $N$ the number of stored patterns.
\subsection{Nonlinear Phase Kernel and Memory Patterns}
\label{subsec:kernel}

We work with phase variables $\varphi \in \mathbb{R}^d$, motivated by continuous oscillator dynamics. Each memory pattern is represented by a phase vector $\psi_i \in \mathbb{R}^d$ for $i = 1, \dots, N$. We embed phases into an MHN feature space through a nonlinear map:
\begin{equation}
  F : \mathbb{R}^d \to \mathbb{R}^D,
  \qquad
  \xi = F(\varphi),
  \label{eq:F-def}
\end{equation}
and denote the embedded memories by
\begin{equation}
  x_i := F(\psi_i) \in \mathbb{R}^D,
  \qquad
  X := [x_1, \dots, x_N] \in \mathbb{R}^{D \times N}.
\label{eq:embedded-memories}
\end{equation}

Our theoretical framework applies to a class of kernels $F$ satisfying the smoothness and derivative bounds (Appendix~\ref{sec:appendix-theory}) and a structural condition that enables a tractable implicit step in phase space (Section~\ref{subsec:surrogate-energy}). As a concrete instantiation, we formulate the S-Hop architecture using the kernel in \eqref{eq:sqrt-kernel}.
\begin{equation}
\scalebox{0.9}{$
  F(\varphi)
  =
  \Big[
    \sqrt{\varphi_1^2 + \tanh \varphi_1 + 1},
    \sqrt{\varphi_1^2 + 1},
    \dots,
    \sqrt{\varphi_d^2 + \tanh \varphi_d + 1},
    \sqrt{\varphi_d^2 + 1}
  \Big]^\top
  \in \mathbb{R}^D.$}
  \label{eq:sqrt-kernel}
\end{equation}
Here, $D = 2d$, and the stored patterns are
\begin{equation}
\scalebox{0.9}{$
  x_i
  =
  F(\psi_i)
  =
  \Big[
    \sqrt{\psi_{i,1}^2 + \tanh \psi_{i,1} + 1},
    \sqrt{\psi_{i,1}^2 + 1},
    \dots,
    \sqrt{\psi_{i,d}^2 + \tanh \psi_{i,d} + 1},
    \sqrt{\psi_{i,d}^2 + 1}
  \Big]^\top.$}
  \label{eq:x-i-sqrt}
\end{equation}
This map is smooth and locally Lipschitz on any compact phase region, with a Jacobian sharing the same square-root structure used in our theoretical derivations. The analysis in Appendix~\ref{sec:appendix-theory} applies to kernels satisfying the stated structural and regularity conditions, of which \eqref{eq:sqrt-kernel} is one instance.

\subsection{Hopfield Energy in Feature Space}
\label{subsec:feature-energy}

Given a feature vector $\xi \in \mathbb{R}^D$ and memory matrix $X$, the standard MHN energy in feature space is
\begin{equation}
  E_{\mathrm{MHN}}(\xi)
  :=
  \frac{1}{2} \|\xi\|_2^2
  +
  \frac{1}{\beta}
  \Bigl(
    \log N + \frac{D}{2}
  \Bigr)
  -
  \frac{1}{\beta}\operatorname{lse}(\beta X^\top \xi),
  \label{eq:EMHN}
\end{equation}
where $\beta > 0$ is an inverse-temperature parameter and $\operatorname{lse}(z) = \log \sum_i e^{z_i}$ denotes the log-sum-exp \cite{ramsauer2021hopfield}. The logits $X^\top \xi \in \mathbb{R}^N$ induce softmax weights
\begin{equation}
  s(\xi) := \mathrm{softmax}(\beta X^\top \xi)
  \in \Delta_{N-1},
  \label{eq:softmax}
\end{equation}
where $\Delta_{N-1}$ is the $(N{-}1)$-dimensional probability simplex. The constant term in \eqref{eq:EMHN} does not affect the dynamics but simplifies parts of the theoretical analysis. The pullback through $\xi=F(\varphi)$ introduces the Jacobian $J_F$ into the phase-space update; the kernel structure below enables a component-wise implicit update.
\subsection{Surrogate Energy in Phase Space}
\label{subsec:surrogate-energy}

Directly pulling back $E_{\mathrm{MHN}}(\xi)$ through $\xi = F(\varphi)$ yields
\begin{equation}
  \tilde{E}(\varphi)
  :=
  \frac{1}{2} F(\varphi)^\top F(\varphi)
  -
  \frac{1}{\beta}\operatorname{lse}\!\left(\beta X^\top F(\varphi)\right).
  \label{eq:pullback-energy}
\end{equation}
For a general embedding, its regularizer gradient $J_F(\varphi)^\top F(\varphi)$ can couple phase coordinates. Our kernel family admits the component-wise representation:
\begin{equation}
  \frac{1}{2} F(\varphi)^\top F(\varphi)
  =
  \sum_{k=1}^d \int_0^{\varphi_k} h(u)\,\mathrm{d}u
  + C,
  \label{eq:quadratic-equals-integral}
\end{equation}
for some constant $C$ and a scalar local field $h:\mathbb{R}\to\mathbb{R}$ acting component-wise. This identity decouples the implicit step across coordinates and makes the phase-space update transparent. For the S-Hop kernel, a direct calculation yields
\begin{equation}
  h(\varphi_k)
  =
  2\varphi_k + \frac{1}{2}\operatorname{sech}^2(\varphi_k),
  \qquad k = 1,\dots,d,
  \label{eq:h-def}
\end{equation}
which is smooth and globally strictly monotone on $\mathbb{R}$, hence admits a component-wise inverse $h^{-1}$. The local regularizer is
\begin{equation}
  E_1(\varphi) := \sum_{k=1}^d \int_0^{\varphi_k} h(u)\,\mathrm{d}u,
  \qquad
  \nabla_\varphi E_1(\varphi) = h(\varphi),
  \label{eq:E1-grad}
\end{equation}
with $h(\varphi)$ understood component-wise. Appendix~\ref{sec:appendix-theory} (Lemma~\ref{lemma:strong-convexity}) shows that the quadratic kernel term
\begin{equation}
Q(\varphi) \;=\;
\tfrac12 \,\|F(\varphi)\|_2^2 
\end{equation}
is globally $\mu$-strongly convex, so $h = \nabla Q$ is globally $\mu$-strongly monotone and $L_Q$-Lipschitz, supporting the contraction analysis in Appendix~\ref{app:capacity-analysis}. The MHN log-sum-exp interaction is
\begin{equation}
  E_2(\varphi)
  := \frac{1}{\beta}
     \log \sum_{i=1}^N \exp\!\bigl(\beta x_i^\top \xi\bigr),
  \qquad \xi = F(\varphi).
\label{eq:E2-def}
\end{equation}
Applying the chain rule gives \cite{santos2025hopfield}
\begin{equation}
  \nabla_\varphi E_2(\varphi)
  = R(\varphi)\, X s(\xi),
  \label{eq:E2-grad}
  \qquad
  s(\xi) = \mathrm{softmax}\!\bigl(\beta X^\top \xi\bigr),
\end{equation}
where $R(\varphi) = J_F(\varphi)^\top$. The surrogate phase-space energy is $E(\varphi) := E_1(\varphi) - E_2(\varphi)$. 

For the S-Hop kernel, both $E_1$ and $E_2$ are globally strictly convex on $\mathbb{R}^d$ 
(Appendix A), so $E$ admits a strict difference-of-convex decomposition and the iteration in the next subsection is a CCCP with monotone energy descent.\footnote{Geometrically, 
the separable strong convexity of $Q$ induces a flat Hessian manifold on phase space with a 
Legendre isometry to its dual; the condition number $\kappa = L_Q/\mu$ controls the 
metric distortion and bounds the inverse-field sensitivity. See Appendix~A.2 
for the Hessian-geometric viewpoint.}


\subsection{CCCP Fixed-Point Dynamics}

The gradient of the surrogate energy has the form
\begin{equation}
  \nabla_\varphi E(\varphi)
  = h(\varphi) - R(\varphi) X s(\xi),
  \qquad \xi = F(\varphi),
  \label{eq:E-grad}
\end{equation}
so stationary points satisfy
\begin{equation}
  h(\varphi^\star) = R(\varphi^\star) X s(\xi^\star),
  \qquad \xi^\star = F(\varphi^\star).
\label{eq:stationary}
\end{equation}
Treating $E_1$ implicitly and $E_2$ explicitly, the CCCP update for the current phase $\varphi_t$ with $\xi_t = F(\varphi_t)$ is defined by
\begin{equation}
  \nabla_\varphi E_1(\varphi_{t+1})
  =  \nabla_\varphi E_2(\varphi_t)
  = R(\varphi_t) X s(\xi_t).
\label{eq:cccp-update-phi}
\end{equation}
Since $h$ is globally strictly monotone, the implicit relation can be solved independently for each coordinate:
\begin{equation}
  \varphi_{t+1}
  = h^{-1}\!\bigl(R(\varphi_t) X s(\xi_t)\bigr),
  \qquad
  \xi_{t+1} = F(\varphi_{t+1}).
\label{eq:phi-update}
\end{equation}
For S-Hop, whose $F$ is injective, this induces a feature-space dynamical system,
\begin{equation}
  \xi_{t+1} = T(\xi_t)
  := (F \circ h^{-1})\!\bigl(R(\varphi_t) X s(\xi_t)\bigr),
  \label{eq:T-def}
\end{equation}
which preserves the MHN interaction structure and, for S-Hop, satisfies the monotone descent $E(\varphi_{t+1}) \le E(\varphi_t)$ with equality only at stationary points (Appendix~\ref{sec:appendix-theory}).
\subsection{Theoretical Guarantees }
\label{subsec:theory-summary}

Appendices~\ref{sec:appendix-theory} and~\ref{app:capacity-analysis} analyze the dynamics \eqref{eq:T-def}. Under the kernel smoothness and contraction conditions, and sufficient separation of the embedded patterns $\{x_i\}_{i=1}^N$, we establish:

\begin{itemize}
  \item \textbf{Local convergence.} Around each stored phase $\psi_i$ there exists an invariant ball $S_i$ on which the phase-space update is a strict contraction. Each ball contains a unique fixed point; composing the convergent phase iterates with $F$ gives convergence in feature space.
  \item \textbf{Correct retrieval and disjoint basins.} If the similarity gaps $\Delta_i = \min_{j \ne i} (x_i^\top x_i - x_i^\top x_j)$ between each memory and its closest competitor satisfy the deterministic lower bound \eqref{eq:Delta-min-cond}, the balls $\{S_i\}$ are pairwise disjoint, so each stored pattern $x_i$ has a dedicated basin.
  \item \textbf{Exponential storage capacity.} For independent phase coordinates uniformly distributed on $[-1,1]$, the embedded memories $x_i=F(\psi_i)$ satisfy the storage condition with probability at least $1-\delta$ for a failure bound $\delta\in(0,1)$ whenever
  \begin{equation}
    2\le N\le \exp(\gamma d)=\exp(\gamma D/2),
    \label{eq:capacity-main}
  \end{equation}
  for sufficiently large $d$, where $\gamma>0$ depends on the kernel and $\beta$ (Theorem~\ref{thm:capacity}). Storage thus has the same exponential order in $D$ as standard MHN~\cite{lucibello2024exponential}.
\end{itemize}

\section{Experiments}
\label{sec:results}

We evaluate the proposed nonlinear phase kernel framework, concretely instantiated
as the S-Hop architecture, along five axes:
(i) memory retrieval accuracy under increasing load,
(ii) critical capacity measured by an MSE criterion,
(iii) ablation of kernel and preprocessing,
(iv) computational efficiency of the proposed kernel, and
(v) generalization in a downstream CIFAR-10 classification task; full training curves are provided in Appendix ~\ref{sec:appendix-experiment}. 

\looseness=-1 The implementation builds on the U-Hop codebase~\cite{pmlr-v235-wu24i}. Unless otherwise stated, retrieval uses one update with $\beta=1$, Gaussian query noise with standard deviation $\sigma=0.3$, and success criterion $\mathrm{MSE}<0.1$. ``Perfect retrieval'' in the plots denotes success under this MSE threshold. We define the noisy critical retrieval load $N_c$ as the largest tested $N$ with over $90\%$ success; a passing sweep endpoint gives a lower bound. Table~\ref{tab:main} summarizes the loads. Our primary U-Hop comparison uses its public \emph{avg} objective (Section~\ref{cos-kernel}); Figures~\ref{fig:capacity_main}--\ref{fig:capacity_bar} use this objective, with the \emph{max} comparison retained in Table~\ref{tab:main}. Appendix~\ref{sec:camera-ready-controls} provides additional controls, and Appendix~\ref{sec:appendix-experiment} evaluates a learned gating wrapper~\cite{qiu2025gated}.

\begin{table}[t]
\centering
\caption{Retrieval loads at $90\%$ MSE${}<0.1$ success: one update, $\sigma=0.3$. Public \emph{avg} and ImageNet entries use three-seed means. U-Hop uses a block projection at $224\times224$. Times amortize MNIST training over $256$ queries at $N=300$ ($\dagger$: RTX 5090). TinyImageNet: Appendix~\ref{subsec:cr-datasets}.}
\label{tab:main}
\normalsize
\setlength{\tabcolsep}{6pt}
\begin{tabular}{l c c c c c}
\toprule
 & MNIST & CIFAR-10 & \multicolumn{2}{c}{ImageNet} & Time \\
\cmidrule(lr){4-5}
Method & $28\times28$ & $32\times32$ & $64\times64$ & $224\times224$ & (ms/q) \\
\midrule
MHN~\cite{ramsauer2021hopfield} & ${}<10$ & ${}<10$ & ${}<10$ & ${}<10$ & ${\approx}0.1$ \\
U-Hop (\emph{max})~\cite{pmlr-v235-wu24i} & $40$ & ${}<10$ & ${}<10$ & ${}<10$ & $10$--$40$ \\
U-Hop (public \emph{avg}) & $\geq500$ & $60$ & ${}<10$ & ${}<10$ & $1.24^{\dagger}$ \\
C-Hop & ${}<10$ & ${}<10$ & ${}<10$ & ${}<10$ & ${\approx}0.4$ \\
S-Hop (ours) & $\mathbf{\geq500}$ & $\mathbf{\geq500}$ & $\mathbf{\geq500}$ & $\mathbf{\geq500}$ & ${\approx}1.5$ \\
\bottomrule
\end{tabular}
\end{table}

\subsection{Memory Retrieval Performance and Capacity}
\label{cos-kernel}
With U-Hop's public \emph{avg} objective, a three-seed CIFAR-10 comparison at $N=500$ gives success rates of $92.07\pm0.42\%$ for S-Hop, $48.40\pm3.83\%$ for U-Hop, and $3.33\pm0.12\%$ for MHN (mean $\pm$ sample standard deviation). The mean gain over U-Hop is $43.67$ percentage points. All methods share memories, noisy queries, and one update, without principal component analysis (PCA) preprocessing~\cite{le1991eigenvalues}. U-Hop last passes at $60$; S-Hop passes through $500$, giving a tested-load ratio of at least $8.3$. Figure~\ref{fig:cr-public-avg} gives raw, normalized, and matched-logit curves; Appendix~\ref{subsec:cr-public-avg} specifies the objective and protocol.

Figure~\ref{fig:capacity_main} extends the public \emph{avg} comparison to ImageNet at $224\times224$, averaging three seeds over $N=10,20,\ldots,500$.
S-Hop maintains at least $90\%$ success throughout the sweep, reaching $98.53\pm0.12\%$ at $N=500$; U-Hop reaches $4.67\pm0.12\%$ at this load.
The error curves show lower mean reconstruction error for S-Hop as the memory load increases.
These results extend the retrieval gain to high-resolution images; normalization and matched-scale controls on CIFAR-10 isolate the contributions of feature scale and retrieval dynamics (Appendix~\ref{subsec:cr-scale}).

Further tests examine retrieval across datasets and kernel instances. MNIST ($d=784$) also gives $N_c\geq500$, and Phase Hopfield Network (P-Hop) supplies a second admissible kernel (Appendices~\ref{subsec:appendix-mnist} and~\ref{subsec:appendix-phop}). A centered dot-product control compares our embedding with a variant without the norm-penalty mechanism of~\cite{hoover2024dense} (Appendix~\ref{subsec:appendix-orthogonality}). These comparisons connect retrieval performance to kernel choice and memory geometry.

\begin{figure}[htbp]
    \centering
    \begin{tikzpicture}
        \node[anchor=north west, inner sep=0] (imga) at (0,0)
            {\includegraphics[width=0.48\linewidth]{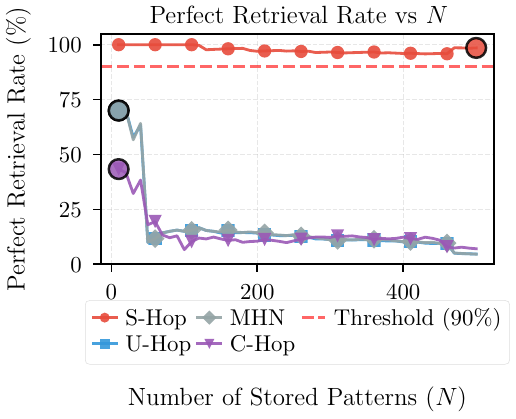}};
        \node[anchor=north west, inner sep=0] (imgb) 
            at ([xshift=4pt]imga.north east)
            {\includegraphics[width=0.47\linewidth]{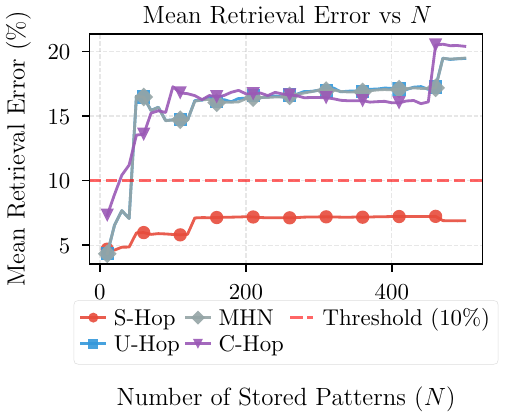}};
        \node[font=\small] at ([xshift=1pt, yshift=1pt]imga.north west) {(a)};
        \node[font=\small] at ([xshift=1pt, yshift=1pt]imgb.north west) {(b)};
    \end{tikzpicture}

    \begin{subfigure}[b]{0.48\linewidth}
        \phantomsubcaption\label{fig:capacity_a}
    \end{subfigure}\hfill
    \begin{subfigure}[b]{0.48\linewidth}
        \phantomsubcaption\label{fig:capacity_b}
    \end{subfigure}
    \caption{\looseness=-1 ImageNet retrieval at $224\times224$, averaged over three seeds with U-Hop's public \emph{avg} objective.
    (a) MSE${}<0.1$ success; outlined points mark first failures below $90\%$ or passing endpoints.
    (b) Mean retrieval error. Shared memories and queries; one update, $\beta=1$, $\sigma=0.3$, and no PCA.}
    \label{fig:capacity_main}
\end{figure}

\subsection{Critical Capacity Across Datasets}

\begin{figure}[htbp]
    \centering
    \begin{tikzpicture}
        \node[anchor=north west, inner sep=0] (imga) at (0,0)
            {\includegraphics[width=0.48\linewidth]{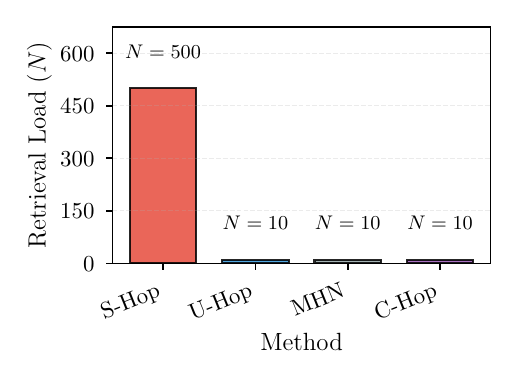}};
        \node[anchor=north west, inner sep=0] (imgb) 
            at ([xshift=4pt]imga.north east)
            {\includegraphics[width=0.48\linewidth]{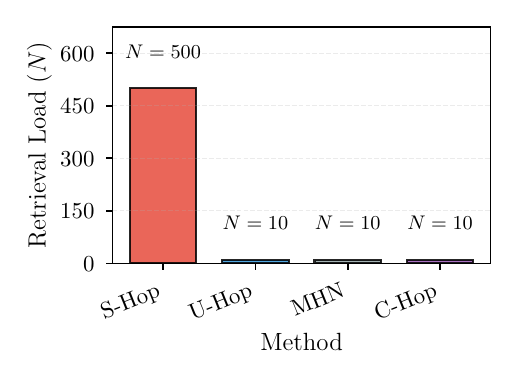}};
        \node[font=\small] at ([xshift=3pt, yshift=-3pt]imga.north west) {(a)};
        \node[font=\small] at ([xshift=3pt, yshift=-3pt]imgb.north west) {(b)};
    \end{tikzpicture}

    \begin{subfigure}[b]{0.48\linewidth}
        \phantomsubcaption\label{fig:capacity_bar_a}
    \end{subfigure}\hfill
    \begin{subfigure}[b]{0.48\linewidth}
        \phantomsubcaption\label{fig:capacity_bar_b}
    \end{subfigure}
    \caption{ImageNet retrieval-load summaries at (a) $64\times64$ and (b) $224\times224$, using three-seed mean success and U-Hop's public \emph{avg} objective. Bars mark passing endpoints for S-Hop ($N_c\geq500$) and first failures for the baselines ($N_c<10$), under the protocol of Figure~\ref{fig:capacity_main}.}
    \label{fig:capacity_bar}
\end{figure}


Table~\ref{tab:main} summarizes retrieval loads on MNIST, CIFAR-10, and ImageNet.
On MNIST, both S-Hop and U-Hop (public \emph{avg}) pass through $N=500$, with U-Hop achieving $92.93\pm1.17\%$ success at this load.
On CIFAR-10, the normalized-logit sweep extends S-Hop's passing load to $5000$, compared with $870$ for MHN and $880$ for U-Hop (\emph{max}), giving tested-load ratios of approximately $5.75$ and $5.68$.
With matched logit scale, S-Hop achieves $91.47\%$ success at $N=500$, compared with $3.40\%$ for MHN and $4.40\%$ for U-Hop (Appendix~\ref{subsec:cr-scale}).
The aligned sum-of-squared-differences (SSD) criterion gives the same pattern-level success decisions as MSE${}<0.1$ (Figure~\ref{fig:appendix_ssd_curves}).

On ImageNet, S-Hop achieves $91.47\pm0.12\%$ and $98.53\pm0.12\%$ success at $N=500$ for $64\times64$ and $224\times224$ inputs, respectively (Figure~\ref{fig:capacity_bar}).
Both sweeps give $N_c\geq500$, versus $N_c<10$ for U-Hop (public \emph{avg}), a load advantage of over $50\times$.
TinyImageNet's $4\times$ gain and selected $500$-memory result are reported together in Appendix~\ref{subsec:cr-datasets}.



\FloatBarrier

\subsection{Ablation of Kernel and Preprocessing}

\begin{table}[htbp]
\centering
\caption{Ablation study and protocol checks on CIFAR-10 ($d=3072$). Loads are passing endpoints or first failures, identified in the final column. ``$\downarrow$linear'' denotes the linear-kernel ablation. U-Hop uses \emph{max} unless indicated.}
\label{tab:ablation}
\normalsize
\setlength{\tabcolsep}{5pt}
\begin{tabular}{l c l}
\toprule
Variant & Load & Key observation \\
\midrule
S-Hop (full)                 & $\mathbf{\geq500}$ & Passing sweep endpoint \\
w/o PCA                     & --- & Retrieval error increases under noise \\
P-Hop (2nd admissible kernel) & 50 & Below $90\%$ success at this load \\
$\downarrow$linear (w/o nonlinear kernel) & 10  & Below threshold at the smallest load \\
C-Hop (cosine kernel)       & 10  & Violates strong convexity \\
\midrule
\multicolumn{3}{l}{\emph{Cross-framework \& metric robustness}} \\
\midrule
S-Hop (SSD metric)          & $\geq500$ & Equivalent threshold, SSD${}<0.1d$ \\
S-Hop (legacy \emph{avg}) & $\geq500$ & Passing endpoint in this comparison \\
S-Hop + Gating              & $\geq500$ & Passing endpoint; heuristic update \\
U-Hop (legacy \emph{avg}) & 60 & $81.7\%$ success; below threshold \\
H-Hop$^{-}$ (dot-product degenerate of~\cite{hoover2024dense})
                             & 40  & Centered overlap; norm penalty removed \\
\bottomrule
\end{tabular}
\end{table}


Table~\ref{tab:ablation} summarizes the kernel and preprocessing controls. A similarity-only S-Hop variant fails the $90\%$ criterion at $N=10$, while restoring the joint Jacobian/inverse-field update recovers a curve close to full S-Hop through $N=500$. S-Hop also outperforms the learned $2d$ U-Hop embedding (Figure~\ref{fig:cr-components}). Figure~\ref{fig:cr-pca} presents the preprocessing and noise controls, and Figures~\ref{fig:appendix_phop_capacity}--\ref{fig:appendix_phop_curves} compare the two admissible kernels.

\FloatBarrier

\subsection{Computational Efficiency Trade-Offs}
\label{app:efficiency}

\begin{figure}[!htbp]
  \centering
  \includegraphics[width=0.6\linewidth]{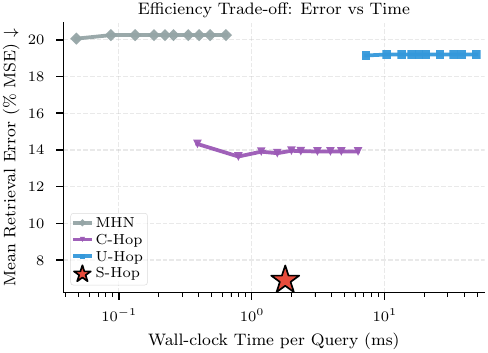} %
  \caption{Efficiency trade-off between mean retrieval error and amortized wall-clock time per query, exemplified on CIFAR-10 under the original \emph{max} protocol. U-Hop training is performed once per memory set and amortized over $256$ queries at N=300; S-Hop uses a training-free retrieval map.}
  \label{fig:efficiency}
\end{figure}

\looseness=-1 On MNIST at $N=300$, S-Hop requires approximately $1.5$ ms per query, compared with $0.1$ ms for MHN and $10$--$40$ ms for U-Hop (\emph{max}; Figure~\ref{fig:efficiency}).
S-Hop has the lowest retrieval error in this comparison and is approximately $7$--$27$ times faster than U-Hop when projection training is amortized over $256$ queries.
U-Hop trains its projection once per memory set and fixes it during retrieval; its amortized cost therefore decreases as more queries share that memory set.
These measurements place S-Hop between MHN's lower latency and U-Hop's training overhead while improving retrieval accuracy.

\FloatBarrier

\subsection{Energy-Shaped Classification on CIFAR-10}


We integrate S-Hop layers into a CIFAR-10 classifier by replacing
attention-based aggregation modules; a hybrid of early S-Hop and later MHN
layers achieves ${\approx}90\%$ test accuracy, approximately $0.2$ percentage points
above the best baseline peak in the $100$-epoch run
(Figure ~\ref{fig:cifar_main}, Appendix ~\ref{sec:appendix-experiment}).
These results extend the evaluation to supervised classification. Supplementary retrieval, gating, and multiple-instance learning (MIL) results are reported in Appendix~\ref{sec:appendix-experiment}.


\section{Discussion}
\label{sec:discussion}
The retrieval protocol specifies the inverse temperature $\beta$ and evaluation noise level $\sigma$. The original capacity experiments use $\beta=1$, $\sigma=0.3$, and one update, with success measured by MSE${}<0.1$. One update gives a shared evaluation budget, while repeated updates connect the experiments to the convergence analysis. Appendix~\ref{subsec:cr-metrics} examines how error thresholds, identification criteria, and additional updates affect retrieval.

PCA denoising reduces input variance before storage through spectral shrinkage with regularization parameter $\sigma^2_{\mathrm{PCA}}=0.01$. Our ablation shows improved retrieval at higher noise ($\sigma=0.5$) without changing the phase-kernel update (Figure~\ref{fig:appendix_ablation}). The load--error curves in Figure~\ref{fig:cr-pca} support preprocessing as a complementary source of robustness.

Memory geometry affects retrieval alongside input dimension. The TinyImageNet selection and target-removal controls link performance to memory separation and stored-memory information (Appendix~\ref{subsec:cr-datasets}). The ImageNet resolution comparison extends this observation to two input dimensions. These results motivate representations with well-separated memories and comparable feature norms.

The scale and component controls identify contributions from both feature scale and the complete retrieval map (Figures~\ref{fig:cr-normalization} and~\ref{fig:cr-components}). S-Hop and P-Hop share admissibility conditions but differ in their Jacobians, inverse local fields, and finite-dimensional performance (Appendix~\ref{subsec:phop-shop-comparison}). Extending these conditions to learned phase kernels could adapt retrieval to correlated data.

\section{Conclusion}

We introduced a nonlinear phase-kernel framework that reshapes both the MHN energy landscape and its retrieval dynamics. Separable strong convexity makes the inverse local field tractable, while explicit contraction and memory-separation conditions establish local convergence. The framework preserves exponential storage capacity for independent random phases, and S-Hop additionally guarantees monotone energy descent.

The experiments connect this construction to larger one-step noisy retrieval loads. S-Hop passes through $500$ memories on MNIST, CIFAR-10, and ImageNet at both resolutions, while U-Hop (public \emph{avg}) passes through $500$ on MNIST, last passes at $60$ on CIFAR-10, and falls below threshold at $10$ on ImageNet. TinyImageNet, scale, and component controls connect these gains to memory separation, feature scale, and the joint Jacobian/inverse-field update. These results establish phase-kernel design as a practical means of improving dense associative retrieval with an analyzable energy model.

\subsection{Limitations}
\label{sec:limitations}
The capacity theorem assumes independent, uniformly distributed phases, whereas natural images have correlated pixels and uneven feature norms. The original TinyImageNet subset contains visually similar goldfish images with lower average brightness than the selected subset and has a retrieval-load threshold of $40$. Selecting better-separated memories with comparable feature norms raises the passing load to over $500$ at the same resolution (Appendix~\ref{subsec:cr-datasets}). This improvement motivates extending the theory to correlated image distributions and evaluating independently sampled memory sets. Stricter retrieval criteria, separate Jacobian and inverse-field ablations, energy-based gating, and faster kernel evaluation provide further routes to robust and efficient associative memory (Appendix~\ref{subsec:cr-limitations}).

\begin{ack}
We thank Hong Zhao and Tailin Wu for discussions. This work is supported by Projects 12322501, 12575035 of the National Natural Science Foundation of China, and 2026NSFSCZY0124 of the Natural Science Foundation of Sichuan Province. 
\end{ack}

\bibliographystyle{unsrtnat}
\renewcommand{\bibsection}{\section*{References}}
\bibliography{references_1}

\clearpage
\appendix

\section*{Appendix}

\addtocontents{toc}{\protect\setcounter{tocdepth}{2}}

{\renewcommand{\contentsname}{}
\vspace{-2em} 
\tableofcontents}

\clearpage
\section{Theoretical Framework: Regularity, Convexity, and Energy
Descent}
\label{sec:appendix-theory}

This part develops the structural foundations of the nonlinear phase
kernel family. We introduce the notation (Appendix~\ref{sec:notation})
and the Hessian-geometric viewpoint (Appendix~\ref{subsec:kernel-family}),
verify the CCCP decoupling identity
(Appendix~\ref{subsec:cccp-verification}), and establish the regularity and
global strong convexity of the local phase potential
(Appendices~\ref{subsec:regularity}--\ref{subsec:E2-gradient}). For the
S-Hop instantiation, we additionally prove global strict convexity of
the interaction term (Appendix~\ref{subsec:E2-convexity}) and derive the
monotone CCCP energy descent guarantee (Appendix~\ref{subsec:cccp-descent},
Theorem~\ref{thm:energy-descent}). These structural results underpin the
contraction and storage-capacity analysis in
Appendix~\ref{app:capacity-analysis}.

\subsection{Notation Table}
\label{sec:notation}

Tables~\ref{tab:notation} and~\ref{tab:notation-extra} summarize the model, proof, and experimental notation.

\begingroup
\footnotesize
\renewcommand{\arraystretch}{1.12}
\setlength{\tabcolsep}{5pt}
\begin{longtable}{@{}>{\raggedright\arraybackslash}p{0.28\linewidth}>{\raggedright\arraybackslash}p{\dimexpr0.72\linewidth-10pt\relax}@{}}
\caption{Key mathematical notation.}\label{tab:notation}\\
\toprule
\textbf{Notation} & \textbf{Description} \\
\midrule
\endfirsthead
\multicolumn{2}{@{}l}{Table~\thetable\ (continued)}\\
\toprule
\textbf{Notation} & \textbf{Description} \\
\midrule
\endhead
\bottomrule
\endfoot
\multicolumn{2}{@{}l}{\textit{Dimensions and Constants}}\\*
\midrule
$d$ & Phase-space dimension. \\
$\alpha$ & Capacity ratio, $N_c/d$. \\
$D$ & Feature-space dimension; $D=2d$ for S-Hop. \\
$N$ & Number of stored memories. \\
$N_c$ & Largest tested load with $\geq90\%$ success at MSE${}<0.1$. \\
$\beta$ & Softmax inverse temperature. \\
\midrule
\multicolumn{2}{@{}l}{\textit{State Variables and Memory Patterns}}\\*
\midrule
$\varphi,\psi_i$ & Phase state and phase of memory $i$, in $\mathbb R^d$. \\
$\xi$ & Embedded state in $\mathbb R^D$. \\
$x_i$ & Embedded memory, $x_i=F(\psi_i)\in\mathbb R^D$. \\
$X$ & Memory matrix, $[x_1,\ldots,x_N]\in\mathbb R^{D\times N}$. \\
$s(\xi)$ & Attention weights, $\mathrm{softmax}(\beta X^\top\xi)$. \\
\midrule
\multicolumn{2}{@{}l}{\textit{Functions and Energy Terms}}\\*
\midrule
$F$ & Phase kernel, $F:\mathbb R^d\to\mathbb R^D$. \\
$J_F(\varphi)$ & Jacobian of $F$. \\
$R(\varphi)$ & Transposed Jacobian, $J_F(\varphi)^\top$. \\
$E_{\mathrm{MHN}}(\xi)$ & Standard MHN energy in feature space. \\
$\tilde E(\varphi)$ & Surrogate Hopfield energy in phase space. \\
$E_1(\varphi),E_2(\varphi)$ & Local regularizer and interaction energy. \\
$E(\varphi)$ & Dynamical energy, $E_1(\varphi)-E_2(\varphi)$. \\
$Q(\varphi)$ & Phase potential, $\frac12\|F(\varphi)\|_2^2$. \\
$h(\varphi)$ & Component-wise local field, $\nabla Q(\varphi)$. \\
$T$ & Feature-space fixed-point update. \\
\midrule
\multicolumn{2}{@{}l}{\textit{Theoretical Bounds and Geometry}}\\*
\midrule
$S_i$ & Invariant phase-space ball centered at $\psi_i$. \\
$\Delta_i,\Delta_{\min}$ & Memory margin, $\min_{j\ne i}x_i^\top(x_i-x_j)$; $\min_i\Delta_i$. \\
$\Delta_i^{\mathrm{eff}}(\xi)$ & State-dependent margin, $\min_{j\ne i}\xi^\top(x_i-x_j)$. \\
$\Delta_{\mathrm{store}}(N)$ & Margin threshold for correct storage. \\
$\mu,L_Q$ & Strong-convexity and gradient-Lipschitz constants of $Q$. \\
$\kappa$ & Hessian-metric condition number, $L_Q/\mu$. \\
$\rho_0$ & Kernel structural bound; admissibility requires $\rho_0<1$. \\
$L_{q''}$ & Lipschitz constant of $q''$. \\
$L_2$ & Bound on $|f_1''|$ and $|f_2''|$ over the phase region. \\
$\delta$ & Storage-failure probability bound, $0<\delta<1$. \\
$M$ & Common state and memory norm bound, Eq.~\eqref{eq:M-bound}. \\
$L_F,L_{\max}$ & Lipschitz bound for $h^{-1},F\circ h^{-1}$; uniform derivative bound. \\
$v,B$ & Scalar-feature variance; coordinate-margin bound, Eqs.~\eqref{eq:phase-variance}--\eqref{eq:phase-increment-bound}. \\
$\gamma$ & Capacity exponent in $N\leq\exp(\gamma d)$, Eq.~\eqref{eq:capacity-rate}. \\
\end{longtable}
\endgroup

\begingroup
\footnotesize
\renewcommand{\arraystretch}{1.12}
\setlength{\tabcolsep}{5pt}
\begin{longtable}{@{}>{\raggedright\arraybackslash}p{0.28\linewidth}>{\raggedright\arraybackslash}p{\dimexpr0.72\linewidth-10pt\relax}@{}}
\caption{Additional notation for geometry, proofs, and experiments.}\label{tab:notation-extra}\\
\toprule
\textbf{Notation} & \textbf{Description} \\
\midrule
\endfirsthead
\multicolumn{2}{@{}l}{Table~\thetable\ (continued)}\\
\toprule
\textbf{Notation} & \textbf{Description} \\
\midrule
\endhead
\bottomrule
\endfoot
\multicolumn{2}{@{}l}{\textit{Indices, Operators, and Kernel Geometry}}\\*
\midrule
$i,j$; $k,a,b$; $\ell$ & Memory, coordinate, and scalar-feature indices ($\ell=1,2$). \\
$t$; $\varphi_t,\xi_t$ & Iteration index and iterates; $t$ also denotes a scalar argument. \\
$I,I_d$; $\mathrm{id}$ & Identity matrix (size $d$ for $I_d$); identity map. \\
$\|\cdot\|_2,\|\cdot\|_\infty$ & Euclidean/spectral norm; maximum absolute value or supremum. \\
$\langle\cdot,\cdot\rangle$, $\preceq$ & Euclidean inner product; positive-semidefinite matrix order. \\
$\nabla,\nabla^2,J$ & Gradient, Hessian, and Jacobian; primes denote scalar derivatives. \\
$\operatorname{lse},z,\Delta_{N-1}$ & Log-sum-exp, logit vector, and probability simplex. \\
$\tanh,\operatorname{sech},\operatorname{arctanh}$ & Hyperbolic tangent, hyperbolic secant, and inverse hyperbolic tangent. \\
$f_1,f_2$; $q,C$ & Scalar features; potential and constant in $Q=\sum_kq(\varphi_k)+C$. \\
$g_1,g_2$ & S-Hop radicands, $t^2+\tanh t+1$ and $t^2+1$. \\
$g(t)$ & Auxiliary function, $\tanh t\,\operatorname{sech}^2t$. \\
$N(t)$ & Scalar convexity numerator, $2g_1g_1''-(g_1')^2$. \\
$t^*$; $T=-t$ & Stationary point $\operatorname{arctanh}(1/\sqrt3)$; negative-tail variable. \\
$c_k$ & Weighted feature coordinate, $\sum_i x_{i,k}s_i(\xi)$. \\
$g_Q,g_{Q^*}$; $Q^*$ & Hessian metrics; Fenchel conjugate of $Q$. \\
$T_\varphi\mathbb R^d$ & Tangent space at $\varphi$. \\
$u,v,w$ & Local tangent/dual vectors or scalar integration variables. \\
$\mathrm{d}h$; $D_Q$ & Differential of $h$; Bregman divergence generated by $Q$. \\
\midrule
\multicolumn{2}{@{}l}{\textit{Local Dynamics and Capacity Proofs}}\\*
\midrule
$U$ & Compact convex region containing stored phases and invariant balls. \\
$\Phi,J_\Phi$ & Phase update $h^{-1}(RXs(F(\varphi)))$ and its Jacobian. \\
$\varphi^\star,\xi^\star$ & Stationary phase and feature states. \\
$\varphi_i^\star,\xi_i^\star$ & Attracting fixed point in $S_i$ and its image $F(\varphi_i^\star)$. \\
$r,\tilde r$ & Distances from the current or intermediate phase point to $\psi_i$. \\
$\tilde\varphi,\tilde\xi$ & Intermediate phase point and its feature image. \\
$\varepsilon_0$ & Stored-pattern softmax leakage bound, $(N-1)e^{-\beta\Delta_i}$. \\
$\varepsilon,\hat\varepsilon$ & Upper bounds on softmax leakage $1-s_i$. \\
$\rho$ & Uniform contraction factor on an invariant ball, $0<\rho<1$. \\
$C_\varepsilon,C_r$ & Leakage and displacement coefficients, Eq.~\eqref{eq:jacobian-constants}. \\
$C_\varepsilon',C_r'$; $C_\varepsilon^{(\mathrm{soft})}$ & Perturbation bounds; softmax contribution to $C_\varepsilon$. \\
$N_0,N_0^{\mathrm{inv}},N_0^{\mathrm{store}}$ & Load thresholds for contraction, invariance, and correct storage. \\
$y_j$ & Transformed memory, $R(\psi_i)x_j$. \\
$w(\varphi),m$ & Local-field input $R(\varphi)Xs(F(\varphi))$; weighted memory $Xs(\xi)$. \\
$A,J_s$ & Softmax covariance $\operatorname{diag}(s)-ss^\top$ and Jacobian $\beta A$. \\
$\Delta R$ & Derivative of $R(\varphi)Xs$ with $s$ held fixed. \\
$\eta$ & Feature residual, $Xs(F(\varphi))-F(\varphi)$. \\
$\Delta_{ij},Y_{ij,k}$ & Directed gap $x_i^\top(x_i-x_j)$ and coordinate-$k$ contribution. \\
$m$ in $M=m\sqrt d$; $C_0$ & Norm coefficient; storage-threshold growth constant. \\
$\mathbb P,\mathbb E,\operatorname{Var}$ & Probability, expectation, and variance. \\
$O(\cdot),o(\cdot)$ & Asymptotic upper-order and vanishing relative-order notation. \\
\pagebreak
\multicolumn{2}{@{}l}{\textit{Retrieval, Preprocessing, and Diagnostic Quantities}}\\*
\midrule
$m_i$; $u,v$ (Figure~\ref{fig:energy_landscape}) & Illustrative memories; orthonormal memory-plane directions. \\*
$\sigma$ & Standard deviation of Gaussian query noise. \\*
$\psi,\hat\psi$; $\psi_k,\hat\psi_k$ & Target, retrieved output, and their pixel coordinates. \\*
$\mathrm{MSE},\mathrm{SSD}$ & Mean squared error and squared-difference sum; SSD${}=d\,\mathrm{MSE}$. \\*
$\operatorname{clamp}$; $\cos$ & Coordinate-wise clipping; cosine similarity. \\*
$\sigma^2_{\mathrm{PCA}},\lambda_k$ & PCA shrinkage regularizer and memory-covariance eigenvalue. \\*
$\Psi,\varphi^+$ & Raw memory matrix $[\psi_1,\ldots,\psi_N]$ and next phase state. \\*
$\bar\psi,\overline{F(\psi)}$ & Mean raw and embedded memories in centering controls. \\*
$x,\xi_j,\bar\xi$ (Appendix~\ref{subsec:appendix-orthogonality}) & Query, memory, and memory mean; tildes denote centered values. \\*
$W,u_i$ & U-Hop projection and normalized memory $W\psi_i/\|W\psi_i\|_2$. \\*
$m,q$; $x_{m,i},q_m$ & Method, query, and their indexed features in scale controls. \\*
$z_{m,i},z_{m,i}^{\mathrm{norm}}$ & Raw and normalized scores for method $m$ and memory $i$. \\*
$H,\sigma_z$ & Softmax entropy (nats); mean within-query logit standard deviation. \\*
$\sigma_m,a_m$ & Mean logit spread; scale multiplier $\sigma_{\mathrm{MHN}}/\sigma_m$. \\*
$\Delta_{\min,m}$ & Minimum directed feature gap for method $m$. \\*
$\operatorname{Std},\operatorname{mean}$ & Standard deviation and arithmetic mean. \\
\end{longtable}
\endgroup

\FloatBarrier

We analyze the feature-space fixed-point iteration from the main text:
\begin{equation}
  \xi_{t+1}
  =
  T(\xi_t)
  :=
  (F \circ h^{-1})\!\bigl(R(\varphi_t)\, X\, s(\xi_t)\bigr),
  \qquad
  s(\xi)=\mathrm{softmax}(\beta X^\top \xi),
  \label{eq:feature-iteration}
\end{equation}
with $\xi_t = F(\varphi_t)$ and $R(\varphi)=J_F(\varphi)^\top$.

The nonlinear phase kernel is
\begin{equation}
\scalebox{1}{$
  \label{eq:kernel-F}
  F(\varphi)
  =
  \Bigl[
    \sqrt{\varphi_1^{2} + \tanh\varphi_1 + 1},\;
    \sqrt{\varphi_1^{2} + 1},\;
    \ldots,\;
    \sqrt{\varphi_d^{2} + \tanh\varphi_d + 1},\;
    \sqrt{\varphi_d^{2} + 1}
  \Bigr]^{\top}$},
\end{equation}
so that the feature-space dimension is $D = 2d$.
Stored patterns are $x_i = F(\psi_i)$, and $R(\varphi)=J_F(\varphi)^\top$ has the
same square-root structure.  All boundedness and Lipschitz conditions required in
the analysis will follow from the explicit kernel above together with the global
properties of the local field in Lemma~\ref{lemma:strong-convexity} below.

\subsection{Kernel Family Viewpoint}
\label{subsec:kernel-family}
Although we work throughout with the explicit kernel \eqref{eq:kernel-F} for
concreteness, the arguments in this appendix are not specific to the hyperbolic-tangent
instance. More generally, the phase-space analysis applies to smooth
coordinate-wise kernels $F:\mathbb{R}^d\to\mathbb{R}^{2d}$ with
$F_{2k-1}(\varphi)=f_1(\varphi_k)$ and $F_{2k}(\varphi)=f_2(\varphi_k)$.
We require that the associated phase potential
\begin{equation}
  Q(\varphi) := \frac12 \,\|F(\varphi)\|_2^2
  \label{eq:Q-family}
\end{equation}
is (up to an additive constant) separable and strongly convex in the phase
coordinates. Concretely, we require that $Q$ can be written as
\begin{equation}
  Q(\varphi)
  =
  \sum_{k=1}^d q(\varphi_k) + C
  \label{eq:Q-family-separable}
\end{equation}
for some scalar $q:\mathbb{R}\to\mathbb{R}$ and constant $C$, so that the local field
$h=\nabla Q$ acts component-wise and admits a component-wise inverse $h^{-1}$ on its
range.  In addition, we assume uniform Hessian bounds
\begin{equation}
  \mu I_d \;\preccurlyeq\; \nabla^2 Q(\varphi) \;\preccurlyeq\; L_Q I_d
  \qquad \text{for all } \varphi\in\mathbb{R}^d,
  \label{eq:Q-family-hessian}
\end{equation}
for some constants $0<\mu\le L_Q<\infty$, which ensure that $h$ is globally
$\mu$-strongly monotone and $L_Q$-Lipschitz and that $h^{-1}$ is globally Lipschitz
on $\mathrm{range}(h)$.  
We further assume that $q''$ is globally Lipschitz, equivalently
that the third derivative is uniformly bounded:
\begin{equation}
  L_{q''} \;:=\; \sup_{t\in\mathbb{R}} |q'''(t)| \;<\; \infty.
  \label{eq:Q-family-third}
\end{equation}
This is a mild regularity condition on the scalar potential $q$ and is
used only in the Jacobian-perturbation step of
Lemma~\ref{lemma:Jacobian-bound} (Step~4).
Together with smoothness of $F$ on compact phase regions,
these conditions give the regularity bounds in Lemma~\ref{lemma:regularity}.
The contraction argument also uses the condition below.  The separable form~\eqref{eq:Q-family-separable}, Hessian bounds~\eqref{eq:Q-family-hessian}, and third-derivative bound~\eqref{eq:Q-family-third} can be checked directly for S-Hop and P-Hop.

In addition, we require the following \emph{contraction} condition
on the component functions $f_1, f_2$ of $F$ (i.e., $F_{2k-1}(\varphi)
= f_1(\varphi_k)$ and $F_{2k}(\varphi) = f_2(\varphi_k)$):
\begin{equation}
  \rho_0
  \;:=\;
  \sup_{t \in \mathbb{R}}
  \left|\frac{f_1(t)f_1''(t) + f_2(t)f_2''(t)}{q''(t)}\right|
  \;<\; 1.
  \label{eq:rho0-family}
\end{equation}
The identity
$q''=(f_1')^2+(f_2')^2+f_1f_1''+f_2f_2''$ then gives
\[
  (1-\rho_0)\mu
  \le [f_1'(t)]^2+[f_2'(t)]^2
  \le (1+\rho_0)L_Q.
\]
In particular, the scalar feature pair is nonconstant and has a uniformly
bounded derivative. S-Hop satisfies~\eqref{eq:rho0-family} by
Lemma~\ref{lemma:Jacobian-bound}, Step~1; the P-Hop bound is verified in
Appendix~\ref{subsec:phop-shop-comparison}.

\medskip \noindent\textbf{Remark (Hessian-geometric interpretation of the kernel family).}
The structural conditions \eqref{eq:Q-family-separable}--\eqref{eq:rho0-family} admit a clean geometric reading in terms of Hessian geometry \citep{shima2007geometry, amari2000methods}, which we state here as an interpretive summary. No new assumptions are introduced; all claims below follow from Lemma~\ref{lemma:strong-convexity} by standard results in convex analysis \citep{rockafellar1970convex}.

\emph{Induced Hessian metric.}
The global Hessian bounds $\mu I \preceq \nabla^2 Q(\varphi) \preceq L_Q I$ from Lemma~\ref{lemma:strong-convexity} imply that $Q$ induces a flat Hessian metric on phase space,
\begin{equation}
  g_Q(\varphi)[u, v] := u^\top \nabla^2 Q(\varphi)\, v, \qquad u, v \in T_\varphi \mathbb{R}^d,
  \label{eq:hessian-metric}
\end{equation}
turning $(\mathbb{R}^d, g_Q)$ into a smooth Riemannian manifold. By separability of $Q$ (Eq.~\eqref{eq:Q-family-separable}), the metric $g_Q$ is diagonal with entries $q''(\varphi_k)$.

\emph{Dual metric and Legendre isometry.}
Since $Q$ is globally strongly convex and essentially smooth, it is of Legendre type in the sense of \citet[\S26]{rockafellar1970convex}, so its Fenchel conjugate $Q^*(u) := \sup_\varphi\{\langle \varphi, u\rangle - Q(\varphi)\}$ is also smooth and strongly convex, with
\begin{equation}
  \nabla Q^* = h^{-1}, \qquad \nabla^2 Q^*(u) = \bigl(\nabla^2 Q(\varphi)\bigr)^{-1} \text{ whenever } u = h(\varphi).
  \label{eq:dual-hessian}
\end{equation}
The dual manifold $((\mathbb{R}^d)^*, g_{Q^*})$ carries the dual Hessian metric $g_{Q^*}(u) := \nabla^2 Q^*(u)$, satisfying $(1/L_Q) I \preceq g_{Q^*}(u) \preceq (1/\mu) I$. The Legendre map $h = \nabla Q: (\mathbb{R}^d, g_Q) \to ((\mathbb{R}^d)^*, g_{Q^*})$ is a Riemannian isometry: for any tangent vectors $v, w$,
\begin{equation}
  g_{Q^*}\bigl(h(\varphi)\bigr)\bigl[\mathrm{d}h(\varphi)[v], \mathrm{d}h(\varphi)[w]\bigr] = g_Q(\varphi)[v, w],
  \label{eq:isometry}
\end{equation}
which follows directly from $\mathrm{d}h = \nabla^2 Q$ and \eqref{eq:dual-hessian}.

\emph{CCCP update as primal--dual--primal trajectory.}
Using $h^{-1} = \nabla Q^*$, the update rule \eqref{eq:phi-update} rewrites as
\begin{equation}
  \varphi_{t+1} = \nabla Q^*\bigl(\nabla E_2(\varphi_t)\bigr),
  \label{eq:cccp-dual-form}
\end{equation}
which admits the geometric reading: (i) compute $\nabla E_2(\varphi_t)$ in the dual space $(\mathbb{R}^d)^*$; (ii) pull it back to primal phase space via the dual gradient $\nabla Q^*$. The separability of $Q$ preserves separability of $Q^*$ \citep[Theorem 16.4]{rockafellar1970convex}, making $\nabla Q^*$ component-wise tractable---this is precisely the structural property exploited in Lemma~\ref{lemma:cccp-identity}.

\emph{Condition number as a unified geometric constant.}
The condition number $\kappa := L_Q/\mu$ from Lemma~\ref{lemma:strong-convexity} admits three interrelated geometric interpretations in our framework, each following from Lemma~\ref{lemma:strong-convexity} together with the isometry \eqref{eq:isometry}:
\begin{enumerate}
  \item[\textbf{(i)}] \emph{Ellipsoidal distortion.} The metric $g_Q$ relative to the Euclidean metric has maximal-to-minimal eigenvalue ratio at most $\kappa$ uniformly in $\varphi$; equivalently, the metric ellipsoid $\{v : \|v\|_{g_Q(\varphi)} \leq 1\}$ has semi-axis ratio at most $\sqrt{\kappa}$.
  \item[\textbf{(ii)}] \emph{Bregman asymmetry.} The induced Bregman divergence $D_Q(\varphi_1, \varphi_2) := Q(\varphi_1) - Q(\varphi_2) - \langle h(\varphi_2), \varphi_1 - \varphi_2\rangle$ satisfies
  \begin{equation}
    \frac{1}{\kappa} \leq \frac{D_Q(\varphi_1, \varphi_2)}{D_Q(\varphi_2, \varphi_1)} \leq \kappa,
    \label{eq:bregman-asymmetry}
  \end{equation}
  for $\varphi_1\ne\varphi_2$, quantifying its asymmetry.
  \item[\textbf{(iii)}] \emph{Inverse-field sensitivity.} The inverse local field has Lipschitz constant at most $1/\mu$, and the softmax contribution to the Jacobian bound is $8\beta M^2L_{\max}^2\varepsilon/\mu$. Appendix~\ref{subsec:phop-shop-comparison} gives the leading capacity-scale dependence and the lower-order regularity terms.

\end{enumerate}

For S-Hop, $\kappa\approx1.477$, whereas P-Hop has $\kappa\approx2.09$ (Appendix~\ref{subsec:phop-shop-comparison}). These bounds characterize the local-potential geometry. The linear embedding $F=\mathrm{id}$ has $\kappa=1$ and recovers the standard MHN, which retains exponential-order capacity under its usual assumptions.

The strong-convexity bounds enter the convergence and capacity proofs together with the kernel contraction and memory-margin conditions. The geometric interpretation identifies separable strong convexity with a flat Hessian manifold and well-behaved Legendre duality \citep{shima2007geometry}.

\subsection{Verification of the CCCP Structural Identity}
\label{subsec:cccp-verification}

The surrogate-energy construction in Section~\ref{subsec:surrogate-energy}
requires that the phase potential satisfies the decoupling identity
\begin{equation}
  \frac{1}{2}\,\|F(\varphi)\|_2^2
  \;=\;
  \sum_{k=1}^{d}
  \int_0^{\varphi_k} h(u)\,\mathrm{d}u
  \;+\; C
  \tag{\ref{eq:quadratic-equals-integral}}
\end{equation}
for a component-wise local field $h$ and some additive constant $C$.
This identity is the structural prerequisite for the CCCP
update \eqref{eq:cccp-update-phi}--\eqref{eq:phi-update}: it ensures
that the implicit step $\nabla_\varphi E_1(\varphi_{t+1}) =
\nabla_\varphi E_2(\varphi_t)$ can be inverted component-wise via
$h^{-1}$.
For a general kernel in our family the convexity of $E_2$ is not
guaranteed, and the update has the form of a CCCP-inspired iteration.  However, for the S-Hop kernel \eqref{eq:kernel-F} we prove
in Section~\ref{subsec:E2-convexity} that $E_2(\varphi)$ is globally
strictly convex, upgrading the iteration to a genuine CCCP with
monotone energy descent (Theorem~\ref{thm:energy-descent},
Section~\ref{subsec:cccp-descent}).
We now verify this identity explicitly for the S-Hop kernel
\eqref{eq:kernel-F}.

\begin{lemma}[CCCP structural identity for S-Hop]
\label{lemma:cccp-identity}
For the S-Hop kernel $F$ defined in \eqref{eq:kernel-F}, the identity
\eqref{eq:quadratic-equals-integral} holds with the local field
\begin{equation}
  h(\varphi_k)
  \;=\;
  2\varphi_k + \tfrac{1}{2}\operatorname{sech}^{2}(\varphi_k),
  \qquad k = 1,\dots,d,
\end{equation}
and the constant $C = d$.
Consequently, the local regularizer is
$E_1(\varphi) = \sum_{k=1}^{d}\int_0^{\varphi_k} h(u)\,\mathrm{d}u$
with gradient $\nabla_\varphi E_1 = h$ (cf.\ Eq.~\eqref{eq:E1-grad}),
and the CCCP update
\eqref{eq:cccp-update-phi}--\eqref{eq:phi-update} is well-defined,
yielding the feature-space iteration
$\xi_{t+1} = T(\xi_t)$ in \eqref{eq:T-def}.
\end{lemma}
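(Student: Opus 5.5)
The plan is a direct computation that exploits the fact that squaring the square-root features removes all radicals. First I will check that $F$ is well defined on all of $\mathbb{R}^d$, i.e.\ that both radicands are positive. For $g_2(t)=t^2+1$ this is immediate. For $g_1(t)=t^2+\tanh t+1$ I will use $\tanh t>-1$, which gives $g_1(t)>t^2\ge 0$ for every real $t$. Hence
\begin{equation*}
  \tfrac12\|F(\varphi)\|_2^2
  = \tfrac12\sum_{k=1}^d\bigl[(\varphi_k^2+\tanh\varphi_k+1)+(\varphi_k^2+1)\bigr]
  = \sum_{k=1}^d\Bigl[\varphi_k^2+\tfrac12\tanh\varphi_k\Bigr]+d .
\end{equation*}
This shows at once that $Q$ is separable in the sense of \eqref{eq:Q-family-separable}, with scalar potential $q(t)=t^2+\tfrac12\tanh t$ and constant $C=d$.

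Next I will differentiate $q$, which gives $q'(t)=2t+\tfrac12\operatorname{sech}^2 t=h(t)$. Since $q(0)=0$, the fundamental theorem of calculus yields $\int_0^{\varphi_k}h(u)\,\mathrm{d}u=q(\varphi_k)$. Summing over $k$ gives \eqref{eq:quadratic-equals-integral} with $C=d$, and hence $E_1(\varphi)=\sum_k q(\varphi_k)$ with $\nabla_\varphi E_1=h$ applied component-wise.

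For well-definedness of the CCCP update I need $h:\mathbb{R}\to\mathbb{R}$ to be a bijection. I will compute $h'(t)=2-\tanh t\,\operatorname{sech}^2 t$ and bound the auxiliary function $g(t)=\tanh t\,\operatorname{sech}^2 t$. Writing $u=\tanh t\in(-1,1)$, we have $g=u(1-u^2)$, which is maximized at $u=1/\sqrt3$ (i.e.\ $t^*=\operatorname{arctanh}(1/\sqrt3)$) with value $2/(3\sqrt3)<1$. Therefore $h'(t)\ge 2-2/(3\sqrt3)>0$, so $h$ is strictly increasing. Because $h(t)\to\pm\infty$ as $t\to\pm\infty$, $h$ is also onto $\mathbb{R}$. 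Consequently $h^{-1}$ exists globally, and the component-wise solve \eqref{eq:phi-update} has a unique solution for any right-hand side $R(\varphi_t)Xs(\xi_t)$.

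Finally, to pass to the feature-space map $T$ in \eqref{eq:T-def}, I will show that $F$ is injective, so that $\varphi_t$ is recovered from $\xi_t$. From the even-indexed coordinates we recover $\varphi_k^2$; from the odd-indexed coordinates we then recover $\tanh\varphi_k$, and strict monotonicity of $\tanh$ determines $\varphi_k$. No step is a genuine obstacle. The only point requiring care is the sharp bound on $g(t)$ that guarantees $h'>0$, together with the positivity of the radicand $g_1$.
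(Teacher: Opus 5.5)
Your proposal is correct and follows essentially the same route as the paper: expand $\tfrac12\|F(\varphi)\|_2^2$, match it against the antiderivative $\varphi_k^2+\tfrac12\tanh\varphi_k$ of $h$ plus $C=d$, and obtain invertibility of $h$ from the bound $h'(t)=2-\tanh t\,\operatorname{sech}^2 t\ge 2-\tfrac{2}{3\sqrt3}>0$. The differences are small: you prove that bound inline via $u=\tanh t$ instead of citing Lemma~\ref{lemma:strong-convexity}, and you state explicitly two facts the paper leaves implicit or treats elsewhere, namely that $h$ maps onto $\mathbb{R}$ and that $F$ is injective, which is what makes $T$ well defined.
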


\begin{proof}
Expanding the left-hand side of \eqref{eq:quadratic-equals-integral}
using the explicit form of $F$ in \eqref{eq:kernel-F}:
\begin{align}
  \frac{1}{2}\,\|F(\varphi)\|_2^2
  &=
  \frac{1}{2}
  \sum_{k=1}^{d}
  \Bigl[
    \bigl(\varphi_k^{2} + \tanh\varphi_k + 1\bigr)
    +
    \bigl(\varphi_k^{2} + 1\bigr)
  \Bigr]
  \notag\\
  &=
  \sum_{k=1}^{d}
  \Bigl[
    \varphi_k^{2}
    + \tfrac{1}{2}\tanh\varphi_k
    + 1
  \Bigr].
  \label{eq:LHS-cccp}
\end{align}
For the right-hand side, we compute the antiderivative of $h$.
Using the standard integral
$\int_0^{t} \operatorname{sech}^{2}(u)\,\mathrm{d}u
= \tanh(t) - \tanh(0) = \tanh(t)$,
we obtain
\begin{align}
  \int_0^{\varphi_k}
  h(u)\,\mathrm{d}u
  &=
  \int_0^{\varphi_k}
  \!\Bigl[2u + \tfrac{1}{2}\operatorname{sech}^{2}(u)\Bigr]\,\mathrm{d}u
  =
  \varphi_k^{2}
  + \tfrac{1}{2}\tanh\varphi_k.
  \label{eq:h-antideriv}
\end{align}
Summing over $k$ and adding $C = d$:
\begin{equation}
  \sum_{k=1}^{d}
  \int_0^{\varphi_k} h(u)\,\mathrm{d}u
  \;+\; d
  \;=\;
  \sum_{k=1}^{d}
  \Bigl[
    \varphi_k^{2}
    + \tfrac{1}{2}\tanh\varphi_k
    + 1
  \Bigr].
  \label{eq:RHS-cccp}
\end{equation}
Comparing \eqref{eq:LHS-cccp} and \eqref{eq:RHS-cccp} term by term
establishes \eqref{eq:quadratic-equals-integral} with $C = d$.

Since $h(\varphi_k) = q'(\varphi_k)$ and $q''(t) \ge \mu > 0$ for all
$t \in \mathbb{R}$ (Lemma~\ref{lemma:strong-convexity} below), $h$ is
globally strictly monotone and hence admits a component-wise inverse
$h^{-1}$ on $\mathbb{R}$.
This guarantees that the implicit CCCP relation
$h(\varphi_{t+1}) = R(\varphi_t) X s(\xi_t)$
(Eq.~\eqref{eq:cccp-update-phi}) uniquely determines $\varphi_{t+1}$,
yielding the component-wise implicit update
$\varphi_{t+1} = h^{-1}(R(\varphi_t) X s(\xi_t))$
(Eq.~\eqref{eq:phi-update}) and the feature-space iteration
$T$ in \eqref{eq:T-def}.
\end{proof}

\medskip \noindent\textbf{Comparison with the P-Hop identity.} For the P-Hop kernel, the analogous identity holds with
$h_P(\varphi_k) = 2\varphi_k + \frac{1}{2}(\cos\varphi_k -
\sin\varphi_k)$ and $C_P = d$, following from
$\int_0^{t}\frac{1}{2}(\cos u - \sin u)\,\mathrm{d}u =
\frac{1}{2}(\sin t + \cos t - 1)$.
Both kernels thus satisfy the decoupling condition with the same constant
$C = d$; their local fields and regularity properties are analyzed in the following subsections.

\subsection{Regularity of the Kernel and Dynamics}
\label{subsec:regularity}

\begin{lemma}[Regularity of the kernel and dynamics]
\label{lemma:regularity}
Let $U\subset\mathbb{R}^d$ be a compact convex phase region containing
the stored phases and the balls used below
(note that $F$ is injective on $U$, since $\tanh$ is strictly monotone
and the map $\varphi_k \mapsto (\sqrt{\varphi_k^2+\tanh\varphi_k+1},\,
\sqrt{\varphi_k^2+1})$ is injective on $\mathbb{R}$).
For the kernel $F$ in \eqref{eq:kernel-F} and the component-wise local field
\begin{equation}
  h(\varphi_k)
  =
  2\varphi_k + \frac{1}{2}\operatorname{sech}^{2}(\varphi_k),
  \qquad k=1,\dots,d,
  \label{eq:local-field}
\end{equation}
the following hold.
\begin{enumerate}
  \item There exist finite constants $M,L_F,L_{\max}>0$ such that, for all
  $\varphi\in U$, $\xi=F(\varphi)$ and all stored patterns $x_i = F(\psi_i)$ with
  $\psi_i\in U$,
  \begin{equation}
    \|x_i\|_2 \le M,\qquad
    \|\xi\|_2 \le M,\qquad
    \|R(\varphi)\, x_i\|_2 \le M.
    \label{eq:M-bound}
  \end{equation}
  \item Both $h^{-1}$ and $F\circ h^{-1}$ are globally Lipschitz, with
  constants bounded by $L_F$.
  \item The Jacobians of $F$ and $h^{-1}$ are globally bounded, and the
  derivative of $R$ is bounded on $U$. Their operator-norm bounds are
  absorbed into $L_{\max}\ge1$.
\end{enumerate}
In particular, the map $T$ in \eqref{eq:feature-iteration} is smooth on
$F(U)$, and all subsequent bounds involving $M$, $L_F$, and $L_{\max}$
are understood to hold uniformly on this compact phase region.
\end{lemma}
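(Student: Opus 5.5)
The plan is to prove the three items separately. Items 1 and 3 use compactness of $U$ and smoothness of $F$. Item 2 uses the global strong monotonicity of $h$ from Lemma~\ref{lemma:strong-convexity}.

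\textbf{Injectivity of $F$.} This is the parenthetical remark in the statement, and it is one line. The second scalar feature $\sqrt{t^2+1}$ determines $t^2$. The difference of squares of the two features is $\tanh t$, which is strictly monotone and so recovers $t$ itself. Hence $t\mapsto(f_1(t),f_2(t))$ is injective on $\mathbb{R}$. Since $F$ acts coordinate-wise, $F$ is injective on $\mathbb{R}^d$.

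\textbf{Item 1.} The radicands $g_1=t^2+\tanh t+1$ and $g_2=t^2+1$ are bounded below by a positive constant. For $g_1$ this holds because $t^2+\tanh t+1\ge t^2+t\cdot\operatorname{sech}^2(0)\cdots$; more simply, $t^2+1-|t|\ge 3/4$ uses $|\tanh t|\le|t|$. So $f_1$ and $f_2$ are $C^\infty$ on $\mathbb{R}$. Then $F$ and $R=J_F^\top$ are continuous on the compact set $U$. I take $M$ to be the maximum of $\sup_{U}\|F\|_2$ and $\sup_{\varphi,\psi\in U}\|R(\varphi)F(\psi)\|_2$; both suprema are finite by continuity. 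Because every $x_i=F(\psi_i)$ has $\psi_i\in U$, this $M$ covers all three bounds in \eqref{eq:M-bound}.

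\textbf{Item 2.} This is the main step. By Lemma~\ref{lemma:strong-convexity}, $h'=q''\ge\mu>0$ and $h'\le L_Q$. Therefore $h$ is a bijection of $\mathbb{R}$: it is continuous, strictly increasing, and grows at least like $\mu|t|$ in each direction. The inverse satisfies $(h^{-1})'=1/h'(h^{-1})\le 1/\mu$, so the coordinate-wise map $h^{-1}$ is globally $1/\mu$-Lipschitz in $\|\cdot\|_2$.

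For $F\circ h^{-1}$ I need $f_1'$ and $f_2'$ to be globally bounded. Differentiating gives $f_\ell'=g_\ell'/(2\sqrt{g_\ell})$. For large $|t|$ we have $g_\ell'=O(|t|)$ and $\sqrt{g_\ell}\sim|t|$, while $g_\ell$ is bounded below. Hence $f_\ell'$ is bounded. Alternatively, the paper's identity $(f_1')^2+(f_2')^2\le(1+\rho_0)L_Q$ gives the bound directly. Since $F$ is coordinate-wise, its Jacobian is block-diagonal with $2\times1$ blocks. So $\|J_F\|_2\le\sup_t\sqrt{(f_1')^2+(f_2')^2}$, and the chain rule bounds the Lipschitz constant of $F\circ h^{-1}$ by this quantity divided by $\mu$. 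I set $L_F$ to be the larger of $1/\mu$ and this constant.

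\textbf{Item 3.} The global Jacobian bounds for $F$ and $h^{-1}$ were already obtained in Item 2. The derivative of $R$ involves $f_\ell''$, which is continuous and hence bounded on the compact set $U$. I take $L_{\max}$ to be the maximum of all these bounds and $1$.

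\textbf{Smoothness of $T$.} On $F(U)$ the state $\varphi$ is recovered as $F^{-1}(\xi)$, which is well defined by injectivity. The map $T$ is then the composition of smooth maps: $R$, the softmax $s$, and $F\circ h^{-1}$. The only subtlety is smoothness of $F^{-1}$ on $F(U)$. I handle it by noting that $\varphi_k$ is recovered explicitly by $\operatorname{arctanh}(f_1^2-f_2^2)$, which is smooth in $\xi$ on $F(U)$ because $|\tanh|<1$ is bounded away from $1$ on the compact set. Alternatively, one can simply regard $T$ as a function of $\varphi$ throughout.
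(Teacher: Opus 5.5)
Your proposal is correct and follows essentially the same route as the paper: positive lower bounds on the radicands (the paper uses $\tanh t\ge t$ for $t\le 0$ to get $g_1\ge 3/4$, which is equivalent to your $|\tanh t|\le|t|$), compactness of $U$ for $M$ and for the Jacobian and $R$-derivative bounds, and the $1/\mu$ inverse-Lipschitz bound from Lemma~\ref{lemma:strong-convexity}. The differences are minor. You bound $f_\ell'$ directly from the growth of $g_\ell'/(2\sqrt{g_\ell})$ instead of through the $\rho_0$ identity, which avoids relying on the $\rho_0$ bound that is only established later in Lemma~\ref{lemma:Jacobian-bound}. You also justify smoothness of $T$ on $F(U)$ via the explicit inverse $\varphi_k=\operatorname{arctanh}(\xi_{2k-1}^2-\xi_{2k}^2)$, which the paper merely asserts.
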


\begin{proof}
For each $k$, the radicands in \eqref{eq:kernel-F} are
\[
  g_1(\varphi_k) := \varphi_k^2 + \tanh\varphi_k + 1,
  \qquad
  g_2(\varphi_k) := \varphi_k^2 + 1.
\]

\medskip \noindent\textbf{Radicand $g_2$.} Since $g_2(\varphi_k) = \varphi_k^2 + 1 \ge 1 > 0$ for all
$\varphi_k \in \mathbb{R}$, the second component
$\sqrt{g_2(\varphi_k)}$ is smooth everywhere with no clamping required.

\medskip \noindent\textbf{Radicand $g_1$.} Using $\tanh\varphi_k \in (-1,1)$ for all $\varphi_k \in \mathbb{R}$,
we have $g_1(\varphi_k) > \varphi_k^2 - 1 + 1 = \varphi_k^2 \ge 0$.
For an explicit bound, $\tanh t\ge t$ when $t\le0$, giving
$g_1(t)\ge t^2+t+1\ge3/4$; for $t\ge0$, $g_1(t)\ge1$.
Thus $g_1(t)\ge3/4$ globally.
Hence each component of $F$ in \eqref{eq:kernel-F} is smooth and has
bounded derivative on $U$, and so does $R(\varphi) = J_F(\varphi)^\top$.

Since $U$ is compact and $F$ is continuous, $F(U)$ is compact and the norms of
$x_i = F(\psi_i)$ and $\xi = F(\varphi)$ are uniformly bounded; the same holds
for $R(\varphi)x_i$ because $R$ is continuous and $x_i$ are bounded.  This
yields \eqref{eq:M-bound} for a suitable $M>0$.

From Lemma~\ref{lemma:strong-convexity} below we know that $h=\nabla Q$ is
globally Lipschitz and $\mu$-strongly monotone on $\mathbb{R}^d$ for some
$\mu>0$, so $h$ is a diffeomorphism onto its range and $h^{-1}$ is globally
Lipschitz with constant at most $1/\mu$.  The derivative bound following~\eqref{eq:rho0-family} makes $F$ globally
Lipschitz, so $F\circ h^{-1}$ is globally Lipschitz as well.  Finally, since $U$
is compact and all maps under consideration are smooth, their Jacobians are
uniformly bounded on $U$, and we absorb those bounds into $L_{\max}$.
\end{proof}

\subsection{Strong Convexity of the Local Phase Potential}
\label{subsec:local-potential}

We next record a simple global convexity property of the quadratic term
associated with the nonlinear phase kernel. Recall that the kernel map
$F : \mathbb{R}^d \to \mathbb{R}^D$ with $D = 2d$ is given componentwise by
\eqref{eq:kernel-F}, and that the local phase potential in the main text is
defined as
\begin{equation}
  Q(\varphi)
  :=
  \frac12 \,\|F(\varphi)\|_2^2.
  \label{eq:Q-def}
\end{equation}
\looseness=-1 In the main-text energy decomposition, $Q$ equals $E_1(\varphi)$ up to a constant.

\begin{lemma}[Strong convexity of the local phase potential]
\label{lemma:strong-convexity}
There exist global constants $0 < \mu \le L_Q < \infty$ such that
\begin{equation}
  \mu I_d
  \;\preccurlyeq\;
  \nabla^2 Q(\varphi)
  \;\preccurlyeq\;
  L_Q I_d
  \qquad
  \text{for all } \varphi \in \mathbb{R}^d.
  \label{eq:Q-Hessian-bounds}
\end{equation}
In particular, $Q$ (and hence $E_1$) is $\mu$-strongly convex on
$\mathbb{R}^d$, and its gradient $h(\varphi) = \nabla Q(\varphi)$ is
$\mu$-strongly monotone and $L_Q$-Lipschitz on $\mathbb{R}^d$.
\end{lemma}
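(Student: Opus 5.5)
By Lemma~\ref{lemma:cccp-identity}, $Q(\varphi)=\sum_{k=1}^d q(\varphi_k)+d$ with $q(t)=t^2+\tfrac12\tanh t$. The Hessian is therefore diagonal,
\[
\nabla^2 Q(\varphi)=\operatorname{diag}\bigl(q''(\varphi_1),\dots,q''(\varphi_d)\bigr),
\]
and the matrix inequality \eqref{eq:Q-Hessian-bounds} reduces to the scalar statement
\[
0<\mu\le q''(t)\le L_Q<\infty \qquad\text{for all } t\in\mathbb{R}.
\]
We can then simply set $\mu:=\inf_t q''(t)$ and $L_Q:=\sup_t q''(t)$.

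Differentiating $h(t)=q'(t)=2t+\tfrac12\operatorname{sech}^2 t$ gives
\[
q''(t)=2-\tanh t\,\operatorname{sech}^2 t = 2-g(t),\qquad g(t):=\tanh t\,\operatorname{sech}^2 t .
\]
The function $g$ is odd, smooth, and tends to $0$ as $|t|\to\infty$, so it attains a global maximum and minimum. To locate them, write $u=\tanh t$, so that $g=u(1-u^2)$ with $u\in(-1,1)$. The critical points satisfy $1-3u^2=0$, i.e.\ $u=\pm 1/\sqrt3$, which corresponds to $t^*=\operatorname{arctanh}(1/\sqrt3)$. This yields
\[
\sup_t |g(t)|=\tfrac{1}{\sqrt3}\Bigl(1-\tfrac13\Bigr)=\tfrac{2}{3\sqrt3}\approx0.385 .
\]
Hence
\[
\mu = 2-\tfrac{2}{3\sqrt3}>0,\qquad L_Q = 2+\tfrac{2}{3\sqrt3},
\]
and $\kappa=L_Q/\mu\approx1.477$, which matches the value quoted earlier in the paper.

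The remaining claims follow by standard convex analysis. A uniform lower Hessian bound $\mu I$ gives $\mu$-strong convexity of $Q$, and $E_1$ inherits it since it differs from $Q$ by the constant $d$. For strong monotonicity, integrate the Hessian along the segment between two points:
\[
\langle h(\varphi)-h(\varphi'),\varphi-\varphi'\rangle=\int_0^1 (\varphi-\varphi')^\top\nabla^2Q\bigl(\varphi'+s(\varphi-\varphi')\bigr)(\varphi-\varphi')\,ds\ge\mu\|\varphi-\varphi'\|^2 .
\]
For the Lipschitz property, the mean-value inequality together with $\|\nabla^2Q\|_2\le L_Q$ gives $L_Q$-Lipschitz continuity of $h$.

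The only nontrivial step is the global extremization of $\tanh t\,\operatorname{sech}^2 t$, and the substitution $u=\tanh t$ turns it into a cubic on $(-1,1)$. If one wants to avoid exact constants, the crude bound $|g|\le1$ already gives $1\le q''\le3$, which suffices for the lemma as stated. I would present the sharp constants anyway so that they agree with the reported $\kappa$.
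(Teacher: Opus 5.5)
Your proof is correct and follows essentially the same route as the paper: a diagonal Hessian from separability, $q''(t)=2-\tanh t\,\operatorname{sech}^2 t$, and global extremization of $\tanh t\,\operatorname{sech}^2 t$ at $\tanh^2 t=1/3$, giving $\mu=2-\tfrac{2}{3\sqrt3}$ and $L_Q=2+\tfrac{2}{3\sqrt3}$; your substitution $u=\tanh t$, which reduces the extremization to a cubic on $(-1,1)$, is a cleaner version of the paper's direct critical-point computation. Citing Lemma~\ref{lemma:cccp-identity} is not circular because you use only its direct algebraic expansion of $\tfrac12\|F\|_2^2$, not its invertibility claim for $h$, which the paper itself derives from the present lemma.
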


\begin{proof}
By the explicit form of $F$, we obtain
\begin{equation}
  Q(\varphi)
  =
  \frac12 \sum_{k=1}^d
  \Bigl(
    \varphi_k^2 + \tanh \varphi_k + 1
    +
    \varphi_k^2 + 1
  \Bigr)
  =
  \sum_{k=1}^d q(\varphi_k),
  \label{eq:Q-separable}
\end{equation}
where the one-dimensional potential $q : \mathbb{R} \to \mathbb{R}$ is
\begin{equation}
  q(t)
  :=
  t^2 + \frac12\bigl(\tanh t + 2\bigr)
  =
  t^2 + \frac12 \tanh t + 1.
  \label{eq:q-def}
\end{equation}
In particular, $Q$ is separable in the coordinates $\{\varphi_k\}$ and its
Hessian is diagonal:
\begin{equation}
  \nabla^2 Q(\varphi)
  =
  \mathrm{diag}
  \bigl(
    q''(\varphi_1), \dots, q''(\varphi_d)
  \bigr).
  \label{eq:Q-Hessian-diag}
\end{equation}
Differentiating $q$ twice, we obtain
\begin{equation}
  q'(t)
  =
  2t + \frac12\operatorname{sech}^{2}(t),
  \qquad
  q''(t)
  =
  2 - \tanh(t)\operatorname{sech}^{2}(t).
  \label{eq:q-derivatives}
\end{equation}
Here we used the standard identity
$\frac{d}{dt}\operatorname{sech}^{2}(t)
= -2\tanh(t)\operatorname{sech}^{2}(t)$.

We now bound the perturbation term
$g(t) := \tanh(t)\operatorname{sech}^{2}(t)$.
Setting $g'(t) = 0$:
\begin{align}
  g'(t)
  &=
  \operatorname{sech}^{4}(t)
  -
  2\tanh^{2}(t)\,\operatorname{sech}^{2}(t)
  \notag\\
  &=
  \operatorname{sech}^{2}(t)
  \bigl[
    \operatorname{sech}^{2}(t) - 2\tanh^{2}(t)
  \bigr]
  =
  0.
\end{align}
Since $\operatorname{sech}^{2}(t) > 0$ for all $t$, the critical points
satisfy $\operatorname{sech}^{2}(t) = 2\tanh^{2}(t)$.
Using the identity $\operatorname{sech}^{2}(t) = 1 - \tanh^{2}(t)$,
this gives $1 - \tanh^{2}(t) = 2\tanh^{2}(t)$, hence
$\tanh^{2}(t) = 1/3$ and $\tanh(t) = \pm 1/\sqrt{3}$.

At these critical points, $\operatorname{sech}^{2}(t) = 1 - 1/3 = 2/3$, so
\[
  |g(t)|
  \le
  \frac{1}{\sqrt{3}} \cdot \frac{2}{3}
  =
  \frac{2}{3\sqrt{3}}
  =
  \frac{2\sqrt{3}}{9}
  \approx 0.385.
\]
Since $g(0) = 0$ and $g(t) \to 0$ as $t \to \pm\infty$, the global
extrema of $g$ are indeed attained at $\tanh(t) = \pm 1/\sqrt{3}$,
confirming $\|g\|_\infty = \frac{2}{3\sqrt{3}}$.
Therefore
\begin{equation}
  2 - \frac{2}{3\sqrt{3}}
  \;\le\;
  q''(t)
  \;\le\;
  2 + \frac{2}{3\sqrt{3}}
  \qquad
  \text{for all } t \in \mathbb{R}.
  \label{eq:q-second-bounds}
\end{equation}
This immediately yields the uniform bounds
\begin{equation}
  \Bigl(2 - \frac{2}{3\sqrt{3}}\Bigr) I_d
  \;\preccurlyeq\;
  \nabla^2 Q(\varphi)
  \;\preccurlyeq\;
  \Bigl(2 + \frac{2}{3\sqrt{3}}\Bigr) I_d
  \qquad
  \text{for all } \varphi \in \mathbb{R}^d,
\end{equation}
so we can take
\begin{equation}
  \mu := 2 - \frac{2}{3\sqrt{3}} \approx 1.615
  \qquad\text{and}\qquad
  L_Q := 2 + \frac{2}{3\sqrt{3}} \approx 2.385
\end{equation}
in \eqref{eq:Q-Hessian-bounds}.
The corresponding condition number is
$\kappa := L_Q / \mu \approx 1.477$.

By construction we also have $h(\varphi) = \nabla Q(\varphi)$, and
component-wise
\begin{equation}
  h(\varphi_k) = q'(\varphi_k)
  = 2\varphi_k + \tfrac12\operatorname{sech}^{2}(\varphi_k),
\end{equation}
which coincides with the local field used in the main text and in
\eqref{eq:local-field}.  The eigenvalue bounds
\eqref{eq:Q-Hessian-bounds} imply that $Q$ is $\mu$-strongly convex and that its
gradient $h = \nabla Q$ is $\mu$-strongly monotone and $L_Q$-Lipschitz on
$\mathbb{R}^d$: for all $\varphi,\psi \in \mathbb{R}^d$,
\begin{equation}
  \langle h(\varphi) - h(\psi), \, \varphi - \psi \rangle
  \;\ge\; \mu \,\|\varphi - \psi\|_2^2,
  \qquad
  \|h(\varphi) - h(\psi)\|_2
  \;\le\; L_Q \,\|\varphi - \psi\|_2.
  \label{eq:h-strong-monotone}
\end{equation}
In particular, strong monotonicity yields a global Lipschitz bound on
the inverse of $h$ on its range: whenever $u = h(\varphi)$ and $v = h(\psi)$,
\begin{equation}
  \|\varphi - \psi\|_2
  \;\le\; \frac{1}{\mu} \,\|h(\varphi) - h(\psi)\|_2
  \;=\; \frac{1}{\mu} \,\|u - v\|_2,
\end{equation}
so that $h$ is injective and $h^{-1}$ is globally Lipschitz with constant
at most $1/\mu$ on $\mathrm{range}(h)$.  Since $E_1(\varphi)$ differs from
$Q(\varphi)$ only by an additive constant, it inherits the same strong
convexity and smoothness properties.  These global bounds on $h$ and $h^{-1}$
will be used in the contraction analysis of the fixed-point map $T$ below.
\end{proof}

\subsection{Gradient of the Log-Sum-Exp Term}
\label{subsec:E2-gradient}

We derive $\nabla_\varphi E_2(\varphi)$ explicitly to recover Eq.~\eqref{eq:E2-grad}. From \eqref{eq:E2-def},
\begin{equation}
  E_2(\varphi)
  =
  \frac{1}{\beta}
  \log
  \sum_{i=1}^N
  \exp
  \bigl(
    \beta x_i^\top \xi
  \bigr),
  \qquad
  \xi = F(\varphi).
  \label{eq:appendix-E2}
\end{equation}
Taking the gradient with respect to $\varphi$ yields
\begin{align}
  \nabla_\varphi E_2(\varphi)
  &=
  \nabla_\varphi
  \left[
    \frac{1}{\beta}
    \log
    \sum_{i=1}^N
    \exp
    \bigl(
      \beta x_i^\top \xi
    \bigr)
  \right]
  \label{eq:E2-grad-1-appendix}
  \\
  &=
  \sum_{i=1}^N
  \frac{
    \exp(\beta x_i^\top \xi)
  }{
    \sum_{j=1}^N \exp(\beta x_j^\top \xi)
  }
  \nabla_\varphi (x_i^\top \xi)
  \label{eq:E2-grad-2-appendix}
  \\
  &=
  \sum_{i=1}^N
  s_i(\xi)\,
  \nabla_\varphi (x_i^\top \xi),
  \label{eq:E2-grad-3-appendix}
\end{align}
where $s_i(\xi)$ are the softmax weights defined in
\eqref{eq:softmax}. By the chain rule,
\begin{equation}
  \nabla_\varphi (x_i^\top \xi)
  = J_F(\varphi)^\top x_i
  = R(\varphi)x_i.
  \label{eq:JF-chain}
\end{equation}
Collecting terms, we obtain
\begin{equation}
  \nabla_\varphi E_2(\varphi)
  =
  J_F(\varphi)^\top X s(\xi)
  =
  R(\varphi) X s(\xi),
  \label{eq:E2-grad-appendix}
\end{equation}
which is Eq.~\eqref{eq:E2-grad} in the main text.

\subsection{Global Strict Convexity of \texorpdfstring{$E_2$}{E2} for S-Hop}
\label{subsec:E2-convexity}

The convergence analysis in
Sections~\ref{subsec:local-approx}--\ref{subsec:correct-storage}
(below) is based on contraction-mapping arguments and applies to the
entire kernel family.  In this and the next subsection we show that the
S-Hop kernel \eqref{eq:kernel-F} enjoys an additional, stronger
guarantee: the interaction term $E_2(\varphi)$ is globally strictly
convex in the phase variable~$\varphi$, so that the iteration
\eqref{eq:phi-update} is an instance of CCCP applied to the difference of convex decomposition
$E = E_1 - E_2$.  We begin by establishing the convexity of the two
scalar component functions of the S-Hop kernel.

\begin{lemma}[Strict convexity of $f_2$]
\label{lemma:f2-convex}
The function $f_2(t)=\sqrt{t^2+1}$ is strictly convex on~$\mathbb{R}$.
\end{lemma}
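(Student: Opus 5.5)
The plan is a direct second-derivative computation, followed by the observation that a strictly positive second derivative on all of $\mathbb{R}$ gives strict convexity. Write $f_2(t)=\sqrt{g_2(t)}$ with $g_2(t)=t^2+1\ge 1$, as in the proof of Lemma~\ref{lemma:regularity}. Then $f_2$ is $C^\infty$ on $\mathbb{R}$, and no clamping or domain restriction is needed.

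First, compute
\[
  f_2'(t)=\frac{t}{\sqrt{t^2+1}},
  \qquad
  f_2''(t)=\frac{\sqrt{t^2+1}-t\cdot \frac{t}{\sqrt{t^2+1}}}{t^2+1}
  =\frac{(t^2+1)-t^2}{(t^2+1)^{3/2}}
  =\frac{1}{(t^2+1)^{3/2}}.
\]
As a cross-check in the form used later for $f_1$, take the general radicand formula $f''=\bigl(2gg''-(g')^2\bigr)/(4g^{3/2})$ with $g=g_2$, $g_2'=2t$ and $g_2''=2$. The numerator is $4(t^2+1)-4t^2=4>0$, which gives the same result. This mirrors the scalar convexity numerator $N(t)=2g_1g_1''-(g_1')^2$ from the notation table.

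Second, since $f_2''(t)=(t^2+1)^{-3/2}>0$ for every $t\in\mathbb{R}$, strict convexity follows from a standard argument. The derivative $f_2'$ is strictly increasing, so for $s<t$ and $\lambda\in(0,1)$ the mean value theorem, applied on the two subintervals around $\lambda s+(1-\lambda)t$, yields
\[
  f_2(\lambda s+(1-\lambda)t)<\lambda f_2(s)+(1-\lambda)f_2(t).
\]
Alternatively, one can note that $f_2(t)=\|(t,1)\|_2$ is convex as a norm composed with an affine map. Strictness then follows because equality in the triangle inequality needs $(s,1)$ and $(t,1)$ to be parallel, which forces $s=t$.

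There is no real obstacle here. The only point to note is that $f_2''$ decays like $|t|^{-3}$, so $f_2$ is strictly but not strongly convex. The statement asks only for strict convexity, so this decay causes no difficulty.
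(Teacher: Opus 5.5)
Your proposal is correct and follows the paper's proof, which likewise differentiates twice to obtain $f_2''(t)=(t^2+1)^{-3/2}>0$ on all of $\mathbb{R}$ and concludes strict convexity from that. Your mean-value-theorem step from $f_2''>0$ to strict convexity, and the alternative argument via $f_2(t)=\|(t,1)\|_2$, are both valid extra detail that the paper leaves implicit.
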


\begin{proof}
Differentiating twice,
\begin{equation*}
  f_2''(t)
  \;=\;
  \frac{1}{(t^2+1)^{3/2}}
  \;>\; 0
  \qquad \forall\, t\in\mathbb{R}.
  \qedhere
\end{equation*}
\end{proof}

\begin{lemma}[Global convexity of $f_1$]
\label{lemma:f1-convex}
The function $f_1(t)=\sqrt{t^2+\tanh(t)+1}$ satisfies
$f_1''(t)\ge 0$ for all $t\in\mathbb{R}$.
\end{lemma}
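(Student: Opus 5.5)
The plan is to reduce the claim to the sign of a scalar numerator. Write $f_1=\sqrt{g_1}$ with $g_1(t)=t^2+\tanh t+1$. Since $g_1\ge 3/4>0$ globally (proof of Lemma~\ref{lemma:regularity}), $f_1$ is smooth and
\[
  f_1''(t)=\frac{2g_1(t)g_1''(t)-\bigl(g_1'(t)\bigr)^2}{4\,g_1(t)^{3/2}}
  =\frac{N(t)}{4\,g_1(t)^{3/2}} .
\]
So it suffices to prove $N(t)\ge 0$ for all $t\in\mathbb{R}$.

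\textbf{Simplifying $N$.} Abbreviate $\tau=\tanh t$ and $s=\operatorname{sech}^2 t=1-\tau^2$. Then
\[
  g_1'=2t+s,\qquad g_1''=2-2\tau s .
\]
Expanding, the $t^2$ terms cancel and we get
\[
  N(t)=4+4\tau-4ts-s^2-4\tau s\,(t^2+\tau+1).
\]

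\textbf{Negative half-line.} Substitute $T=-t>0$ and write $\tau'=\tanh T\in[0,1)$, so $\tau=-\tau'$. Then
\[
  N=4(1-\tau')+4Ts-s^2+4\tau' s\,(T^2-\tau'+1).
\]
The last two terms are nonnegative, because $T\ge 0$ and $1-\tau'>0$. For the first term, use $s=(1-\tau')(1+\tau')\le 2(1-\tau')$, which gives $4(1-\tau')\ge 2s$. Hence
\[
  N\ge 2s-s^2=s(2-s)\ge s>0,
\]
since $s\le 1$. This settles $t\le 0$ cleanly.

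\textbf{Positive half-line.} For $t\ge 0$ we have $4+4\tau\ge 4$. It remains to show
\[
  4ts+s^2+4\tau s\,(t^2+\tau+1)<4+4\tau .
\]
Crude global bounds do not close this, so I would split the range.

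On $[0,1]$, use $\tau\le t$ and $s\le 1$. The left side is then at most $4t+s^2+4t(t^2+t+1)$. Compare this against $4+4\tau$ using $\tau\ge t-t^3/3$; near $t=0$ the left side is about $1+8t$ against $4+4t$. If this single-interval comparison is too lossy near $t=1$, I would refine by subdividing $[0,1]$ and using monotonicity of $s$ and $\tau$ on each piece.

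On $[1,\infty)$, use $s\le 4e^{-2t}$. Every left-hand term then has the form $s\cdot\mathrm{poly}(t)$, each maximized at a point we can bound explicitly, while the right side is at least $4+4\tanh 1\approx 7.05$. A rough estimate gives a left side well below $3$, so this region should close easily.

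\textbf{Expected obstacle.} The main difficulty is the intermediate range $t\in[0.5,1.5]$, where $t s$ and $t^2\tau s$ peak simultaneously. There I would either cover the interval with a few subintervals on which $\tau$ and $s$ are monotone, bounding each term by its endpoint values, or use the critical point $t^*=\operatorname{arctanh}(1/\sqrt3)$ and the bound $\tau s\le 2/(3\sqrt3)$ from Lemma~\ref{lemma:strong-convexity} to control the $\tau s$ factor uniformly. Combining the two half-lines gives $N\ge 0$, and hence $f_1''\ge 0$, on all of $\mathbb{R}$.
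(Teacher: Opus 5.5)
Your reduction to $N(t)=2g_1g_1''-(g_1')^2\ge 0$ is correct, and so is the closed form $N=4+4\tau-4ts-s^2-4\tau s(t^2+\tau+1)$. The argument for $t\le 0$ is also correct. Your bound $N\ge s(2-s)>0$ on the negative half-line is tidier than the paper's treatment of that tail, which uses asymptotic expansions to get $N(t)\to 0^+$ as $t\to-\infty$.

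The gap is the half-line $t\ge 0$, which is exactly where the comparison is tight. There the minimum of $N$ is about $1.01$, attained at $t^*=\operatorname{arctanh}(1/\sqrt3)$, where the two sides of your inequality are about $5.3$ and $6.3$. For this range you give only a plan, and its stated estimates do not hold:
\begin{itemize}
\item Your one-interval bound on $[0,1]$, $1+8t+4t^2+4t^3<4+4\tanh t$, fails once $t$ exceeds roughly $0.44$. At $t=1$ it reads $17$ against $7.05$.
\item On $[1,\infty)$, the claim that the left side is ``well below $3$'' is false. At $t=1$ the left side is about $5.39$. Bounding each term by its supremum, using $s\le 4e^{-2t}$ and $\tau\le 1$, gives about $8.95$, which exceeds $4+4\tanh 1$.
\end{itemize}
A fine enough subdivision would presumably close the inequality eventually. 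As written, however, nothing is established for $t$ beyond about $0.44$.

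The missing idea, which is the one the paper uses, is to differentiate $N$ itself. The $2g_1'g_1''$ terms cancel, so $N'(t)=2g_1(t)\,g_1'''(t)$, with $g_1'''(t)=2\operatorname{sech}^2 t\,(3\tanh^2 t-1)$. Since $g_1>0$, $N$ is decreasing on $[0,t^*]$ and increasing on $[t^*,\infty)$, so $\min_{t\ge 0}N=N(t^*)$.

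At $t^*$ you have exactly $\tau=1/\sqrt3$ and $s=2/3$. With $\tau$ and $s$ frozen, your expression for $N$ is decreasing in $t$, so $t^*<0.66$ gives $N(t^*)>1$. The paper certifies $N(t^*)>0.93$ with rational bounds. That single evaluation, together with your estimate for $t\le 0$, completes the proof without any interval subdivision.
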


\begin{proof}
Let $g_1(t):=t^2+\tanh(t)+1$, so that $f_1(t)=\sqrt{g_1(t)}$.
By the chain rule, $f_1$ is convex if and only if
\begin{equation}
  N(t)
  \;:=\;
  2\,g_1(t)\,g_1''(t) - \bigl(g_1'(t)\bigr)^2
  \;\ge\; 0
  \qquad \forall\, t\in\mathbb{R}.
  \label{eq:N-def}
\end{equation}
We compute:
\begin{equation}
  g_1'(t) = 2t + \operatorname{sech}^2(t),
  \qquad
  g_1''(t) = 2 - 2\tanh(t)\operatorname{sech}^2(t).
  \label{eq:g1-derivs}
\end{equation}

\medskip \noindent\textbf{Global lower bound on $g_1''$.} From Lemma~\ref{lemma:strong-convexity},
\[
  \max_t \tanh(t)\operatorname{sech}^2(t)
  \;=\;
  \frac{2}{3\sqrt{3}}
  \;\approx\; 0.385,
\]
hence
\begin{equation}
  g_1''(t)
  \;\ge\;
  2 - 2\cdot\frac{2}{3\sqrt{3}}
  \;=\;
  2 - \frac{4}{3\sqrt{3}}
  \;\approx\; 1.230
  \;>\; 0
  \qquad \forall\, t\in\mathbb{R}.
  \label{eq:g1pp-lower}
\end{equation}
This implies that $g_1$ is itself \emph{strictly convex}, but this alone
does not directly imply the convexity of $\sqrt{g_1}$; we must verify
$N(t)\ge 0$.

\medskip \noindent\textbf{Asymptotic analysis.} \emph{Case $t\to+\infty$.}
Expanding $\tanh(t)$ and $\operatorname{sech}^2(t)$ gives
\[
\begin{aligned}
  g_1(t)&=t^2+2-2e^{-2t}+O(e^{-4t}),\\
  g_1'(t)&=2t+4e^{-2t}+O(e^{-4t}),\\
  g_1''(t)&=2-8e^{-2t}+O(e^{-4t}).
\end{aligned}
\]
Substituting,
\begin{equation}
  N(t) = 2(t^2+2)(2) - (2t)^2 + O(t^2e^{-2t})
       = 8 + O(t^2e^{-2t}) \;\to\; 8 > 0.
  \label{eq:N-plus-infty}
\end{equation}

\emph{Case $t\to-\infty$.}
Let $T=-t\to+\infty$. Expanding in $e^{-2T}$ gives
\[
\begin{aligned}
  g_1(-T)&=T^2+2e^{-2T}+O(e^{-4T}),\\
  g_1'(-T)&=-2T+4e^{-2T}+O(e^{-4T}),\\
  g_1''(-T)&=2+8e^{-2T}+O(e^{-4T}).
\end{aligned}
\]
A direct computation yields
\begin{align}
  N(-T)
  &= 2(T^2+2e^{-2T})(2+8e^{-2T}) - (-2T+4e^{-2T})^2+O(T^2e^{-4T})
  \notag\\
  &= e^{-2T}(16T^2+16T+8) + O(T^2e^{-4T})
  \;>\; 0.
  \label{eq:N-minus-infty}
\end{align}
Note that $N(-T)\to 0^+$, so $f_1$ becomes nearly linear as
$t\to-\infty$ but does not lose convexity.

\medskip \noindent\textbf{Finite-point verification and monotonicity of $N$.}
Let $t^*\approx 0.659$ be the point where
$\tanh(t)\operatorname{sech}^2(t)$ attains its maximum
($\tanh(t^*)=1/\sqrt{3}$):
\begin{equation}
  g_1(t^*)\approx 2.011,\qquad
  g_1'(t^*)\approx 1.984,\qquad
  g_1''(t^*)\approx 1.230.
  \label{eq:g1-at-tstar}
\end{equation}
Thus
\begin{equation}
  N(t^*)
  \approx 2(2.011)(1.230) - (1.984)^2
  \approx 4.947 - 3.936
  = 1.011 > 0.
  \label{eq:N-at-tstar}
\end{equation}

The sign can also be bounded without rounding: using
$0.65<t^*<0.66$ and $0.57<1/\sqrt3<0.58$ gives
$g_1(t^*)>1.9925$, $g_1''(t^*)>92/75$, and $g_1'(t^*)<149/75$.
Hence $N(t^*)>2(1.9925)(92/75)-(149/75)^2>0.93$.
It remains to justify that checking $t^*$ is enough. Differentiating
\eqref{eq:N-def} gives
\begin{equation}
  N'(t)
  =
  2\,g_1(t)\,g_1'''(t),
  \label{eq:N-prime}
\end{equation}
since the $2g_1'(t)g_1''(t)$ terms cancel. From \eqref{eq:g1-derivs},
\begin{equation}
  g_1'''(t)
  =
  2\,\operatorname{sech}^2(t)\bigl(3\tanh^2(t)-1\bigr),
  \label{eq:g1ppp}
\end{equation}
and $g_1(t)=t^2+\tanh(t)+1>0$ for all $t\in\mathbb{R}$. Hence the sign of
$N'(t)$ is exactly the sign of $3\tanh^2(t)-1$. Therefore $N$ is increasing on
$(-\infty,-t^*]$, decreasing on $[-t^*,t^*]$, and increasing on $[t^*,\infty)$,
where $t^*=\operatorname{arctanh}(1/\sqrt{3})$.

Now \eqref{eq:N-minus-infty} shows that $N(t)\to 0^+$ as $t\to-\infty$, so
$N(t)>0$ on $(-\infty,-t^*]$ because $N$ is increasing there. On
$[-t^*,\infty)$, the above monotonicity implies that the minimum is attained at
$t=t^*$, and \eqref{eq:N-at-tstar} gives $N(t^*)>0$. Therefore
$N(t)>0$ for all $t\in\mathbb{R}$, and thus $f_1''(t)\ge 0$ globally.
\end{proof}

\begin{lemma}[Strict positivity of S-Hop embeddings]
\label{lemma:embedding-positivity}
For all $\varphi_j\in\mathbb{R}$, every component of the S-Hop feature
vector $F(\varphi)$ defined in \eqref{eq:kernel-F} is strictly positive:
\[
  x_{i,k} = \bigl[F(\psi_i)\bigr]_k > 0
  \qquad \forall\, i,\, k.
\]
\end{lemma}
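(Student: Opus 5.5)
The argument reduces to two scalar functions, since the S-Hop feature vector in \eqref{eq:kernel-F} is built coordinate-wise. Each phase coordinate $t=\varphi_k\in\mathbb{R}$ produces the two entries $f_1(t)=\sqrt{g_1(t)}$ and $f_2(t)=\sqrt{g_2(t)}$, where $g_1(t)=t^2+\tanh t+1$ and $g_2(t)=t^2+1$. It therefore suffices to show that $g_1(t)>0$ and $g_2(t)>0$ for every real $t$. We use the convention that $\sqrt{\cdot}$ denotes the nonnegative root, so a radicand that is strictly positive gives a strictly positive square root.

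For $f_2$ this is immediate: $g_2(t)=t^2+1\ge 1$, so $f_2(t)\ge 1>0$.

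For $f_1$ we reuse the explicit lower bound already derived in the proof of Lemma~\ref{lemma:regularity} and split on the sign of $t$.
\begin{itemize}
  \item If $t\ge 0$, then $\tanh t\ge 0$, so $g_1(t)\ge t^2+1\ge 1$.
  \item If $t<0$, we use $\tanh t\ge t$, which holds because $\tanh$ has slope $\operatorname{sech}^2 t\le 1$ and vanishes at $0$. This gives $g_1(t)\ge t^2+t+1=(t+\tfrac12)^2+\tfrac34\ge\tfrac34$.
\end{itemize}
Hence $g_1(t)\ge 3/4$ globally and $f_1(t)\ge\sqrt3/2>0$. A simpler route also works: since $\tanh t>-1$, we have $g_1(t)>t^2\ge 0$, which already gives strict positivity.

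Applying these bounds with $t=\psi_{i,k}$ (and likewise with $t=\varphi_k$) shows that every component of $F(\psi_i)$ and of $F(\varphi)$ is strictly positive. In fact the bounds are uniform: every entry of $x_i$ is at least $\sqrt3/2$.

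No step here is a real obstacle. The only point needing care is the inequality $\tanh t\ge t$ for $t\le 0$, which follows from the mean value theorem because $0<\tanh'\le 1$; alternatively, the cruder bound $\tanh t>-1$ bypasses it entirely.
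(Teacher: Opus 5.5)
Your proof is correct and follows essentially the same route as the paper: $f_2\ge 1$ from $t^2+1\ge 1$, and for $f_1$ both the crude bound $\tanh t>-1$ (giving $g_1(t)>t^2\ge 0$) and the sharper global bound $g_1\ge 3/4$ borrowed from Lemma~\ref{lemma:regularity}, which the paper also cites. Your remark that every entry is uniformly bounded below by $\sqrt3/2$ is a valid small strengthening of the statement.
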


\begin{proof}
For each $\varphi_j\in\mathbb{R}$:
\begin{itemize}
  \item $f_2(\varphi_j) = \sqrt{\varphi_j^2+1} \ge 1 > 0$.
  \item $f_1(\varphi_j) = \sqrt{\varphi_j^2+\tanh(\varphi_j)+1}$:
  since $\tanh(t)>-1$ for all finite $t$, we have
  $\varphi_j^2+\tanh(\varphi_j)+1 > \varphi_j^2\ge 0$, and at
  $\varphi_j=0$ the radicand equals $0+0+1=1>0$.
  More precisely, from Lemma~\ref{lemma:regularity} the global minimum of
  the radicand is $\ge 3/4 > 0$.
\end{itemize}
Hence all embedded features $x_{i,k} = [F(\psi_i)]_k > 0$.
\end{proof}

We can now state the main convexity result.

\begin{theorem}[Global strict convexity of $E_2$ for S-Hop]
\label{thm:E2-convex}
Under the S-Hop kernel \eqref{eq:kernel-F}, the interaction term
\begin{equation}
  E_2(\varphi)
  \;=\;
  \frac{1}{\beta}\operatorname{lse}\bigl(\beta X^\top F(\varphi)\bigr)
\end{equation}
is globally strictly convex.
\end{theorem}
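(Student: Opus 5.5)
The plan is to show that the Hessian $\nabla^2_\varphi E_2(\varphi)$ is positive definite at every $\varphi\in\mathbb{R}^d$. I will write $E_2$ as a nondecreasing convex function (log-sum-exp) composed with logits that are convex in $\varphi$. Denote the logits by
\[
  z_i(\varphi) := x_i^\top F(\varphi)
  = \sum_{k=1}^d \bigl[x_{i,2k-1} f_1(\varphi_k) + x_{i,2k} f_2(\varphi_k)\bigr],
  \qquad i=1,\dots,N,
\]
so that $E_2(\varphi)=\frac1\beta\operatorname{lse}(\beta z(\varphi))$. Each $z_i$ is separable in the phase coordinates. Its Hessian is therefore diagonal, with entries $x_{i,2k-1}f_1''(\varphi_k)+x_{i,2k}f_2''(\varphi_k)$.

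First, I bound these diagonal entries. By Lemma~\ref{lemma:embedding-positivity}, all weights $x_{i,k}$ are strictly positive. In fact $x_{i,2k}=f_2(\psi_{i,k})\ge 1$. Lemma~\ref{lemma:f1-convex} gives $f_1''\ge0$, and Lemma~\ref{lemma:f2-convex} gives $f_2''(t)=(t^2+1)^{-3/2}>0$. Hence every diagonal entry is at least $x_{i,2k} f_2''(\varphi_k)\ge (\varphi_k^2+1)^{-3/2}>0$. So each $\nabla^2 z_i(\varphi)$ is positive definite, with a lower bound that is uniform in $i$ and depends only on $\varphi$.

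Second, I differentiate $\nabla_\varphi E_2=\sum_i s_i\nabla z_i$ (Eq.~\eqref{eq:E2-grad-appendix}) once more by the chain rule. Using $J_s=\beta A$ with $A=\operatorname{diag}(s)-ss^\top$, this gives
\[
  \nabla^2_\varphi E_2(\varphi)
  = \sum_{i=1}^N s_i(\xi)\,\nabla^2 z_i(\varphi)
  \;+\; \beta\, J_z(\varphi)^\top A\, J_z(\varphi),
  \qquad J_z = X^\top J_F(\varphi).
\]
The covariance matrix $A$ is positive semidefinite, being the covariance of a categorical distribution, so the second term is PSD. In the first term the softmax weights satisfy $s_i>0$ and $\sum_i s_i=1$. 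It is therefore a convex combination of diagonal matrices whose entries are all at least $(\varphi_k^2+1)^{-3/2}$. Consequently
\[
  \nabla^2 E_2(\varphi)\succeq \operatorname{diag}\bigl((\varphi_k^2+1)^{-3/2}\bigr)_{k=1}^d \succ 0,
\]
and $E_2$ is strictly convex on $\mathbb{R}^d$.

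The only delicate point is that the nonnegativity of $f_1''$ is merely weak: $N(t)\to0^+$ as $t\to-\infty$. So strictness cannot come from $f_1$ alone. It must come from the $f_2$ channel combined with the positivity of the weights $x_{i,2k}$, which is why Lemma~\ref{lemma:embedding-positivity} is essential. The bound $(\varphi_k^2+1)^{-3/2}$ decays at infinity. This yields strict convexity but not a uniform strong-convexity constant, which is all the theorem claims.
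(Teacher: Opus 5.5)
Your proof is correct and follows the paper's argument. Your term $\sum_i s_i\nabla^2 z_i$ equals the paper's diagonal term $(B)=\sum_k c_k\nabla^2F_k$ with $c_k=\sum_i s_i x_{i,k}$, your $\beta J_z^\top A J_z$ is its PSD softmax-covariance term $(A)$, and strictness comes, as in the paper, from $f_2''>0$ together with positive weights; your explicit bound $\nabla^2E_2\succeq\operatorname{diag}\bigl((\varphi_k^2+1)^{-3/2}\bigr)$ via $x_{i,2k}\ge1$ is a harmless sharpening, though your closing remark overstates things, since the proof of Lemma~\ref{lemma:f1-convex} actually establishes $N(t)>0$, so $f_1''>0$ pointwise (it only decays) and strictness could also be drawn from the $f_1$ channel.
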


\begin{proof}
Denote $\xi = F(\varphi)$ and
$s = \mathrm{softmax}(\beta X^\top\xi)$.
The Hessian of $E_2$ is
\begin{equation}
  \nabla_\varphi^2 E_2(\varphi)
  \;=\;
  \underbrace{
    J_F(\varphi)^\top
    \cdot
    \beta X\bigl(\mathrm{diag}(s) - ss^\top\bigr)X^\top
    \cdot
    J_F(\varphi)
  }_{(A)}
  \;+\;
  \underbrace{
    \sum_{k=1}^{D} c_k \cdot \nabla_\varphi^2 F_k(\varphi)
  }_{(B)},
  \label{eq:E2-hessian}
\end{equation}
where $c_k := \sum_{i=1}^{N} x_{i,k}\,s_i(\xi)$.

\medskip \noindent\textbf{Term (A).} The matrix $\mathrm{diag}(s)-ss^\top$ is the Jacobian of the softmax
function and is positive semidefinite (PSD).
The quadratic form involving $J_F$, $X$, and $X^\top$ preserves this
property.  Hence $(A)\succcurlyeq 0$.

\medskip \noindent\looseness=-1 \textbf{Term (B).} By the separable structure of the S-Hop kernel \eqref{eq:kernel-F},
each $F_k$ depends on one phase coordinate~$\varphi_j$.
Specifically, $F_{2j-1}(\varphi)=f_1(\varphi_j)$ and
$F_{2j}(\varphi)=f_2(\varphi_j)$, with diagonal Hessians:
\begin{align}
  \nabla_\varphi^2 F_{2j-1}(\varphi)
  &= \mathrm{diag}(0,\ldots,f_1''(\varphi_j),\ldots,0),
  \notag\\
  \nabla_\varphi^2 F_{2j}(\varphi)
  &= \mathrm{diag}(0,\ldots,f_2''(\varphi_j),\ldots,0).
  \label{eq:Fk-hessians}
\end{align}
Hence $(B)$ is a diagonal matrix with the $j$-th diagonal entry
\begin{equation}
  (B)_{jj}
  \;=\;
  c_{2j-1}\cdot f_1''(\varphi_j)
  \;+\;
  c_{2j}\cdot f_2''(\varphi_j).
  \label{eq:B-diagonal}
\end{equation}
By Lemma~\ref{lemma:embedding-positivity}, $x_{i,k}>0$; since
$s_i > 0$ and $\beta>0$, we have $c_k>0$ for all~$k$.
By Lemma~\ref{lemma:f2-convex}, $f_2''(\varphi_j)>0$.
By Lemma~\ref{lemma:f1-convex}, $f_1''(\varphi_j)\ge 0$.
Therefore:
\begin{equation}
  (B)_{jj}
  \;\ge\;
  c_{2j}\cdot f_2''(\varphi_j)
  \;>\; 0
  \qquad \forall\, j.
  \label{eq:B-PD}
\end{equation}
Thus $(B)$ is strictly positive definite (PD).

\medskip \noindent\textbf{Combining.} $\nabla_\varphi^2 E_2(\varphi) = (A)+(B) \succ 0$.
Hence $E_2(\varphi)$ is globally strictly convex.
\end{proof}

\subsection{Strict CCCP Descent for S-Hop}
\label{subsec:cccp-descent}

With Theorem~\ref{thm:E2-convex} in hand, we upgrade the convergence
guarantee for S-Hop from the contraction-based analysis to a standard
CCCP descent argument.

\begin{theorem}[Energy monotone descent for S-Hop]
\label{thm:energy-descent}
Under the S-Hop kernel \eqref{eq:kernel-F}, the update rule
$\varphi_{t+1} = h^{-1}\!\bigl(R(\varphi_t)\,X\,s(\xi_t)\bigr)$
satisfies
\begin{equation}
  E(\varphi_{t+1}) \;\le\; E(\varphi_t)
  \qquad \forall\, t\ge 0.
  \label{eq:energy-descent}
\end{equation}
Equality holds if and only if $\varphi_t$ is a stationary point of~$E$.
\end{theorem}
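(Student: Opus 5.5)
The plan is the standard CCCP descent argument for the difference-of-convex decomposition $E=E_1-E_2$. Lemma~\ref{lemma:strong-convexity} makes $E_1$ globally $\mu$-strongly convex, and Theorem~\ref{thm:E2-convex} makes $E_2$ globally strictly convex. By Lemma~\ref{lemma:cccp-identity} the update $\varphi_{t+1}=h^{-1}(R(\varphi_t)Xs(\xi_t))$ is exactly the solution of $\nabla E_1(\varphi_{t+1})=\nabla E_2(\varphi_t)$; the inverse exists because $h$ is a global bijection onto $\mathbb{R}^d$, since it is strongly monotone and hence coercive. Equivalently, $\varphi_{t+1}$ is the unique minimizer of the strongly convex surrogate
\[
  G_t(\varphi):=E_1(\varphi)-E_2(\varphi_t)-\langle\nabla E_2(\varphi_t),\varphi-\varphi_t\rangle ,
\]
and this surrogate satisfies $G_t(\varphi_t)=E(\varphi_t)$.

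\textbf{Step 1: majorization.} Convexity of $E_2$ gives
\[
  E_2(\varphi)\ge E_2(\varphi_t)+\langle\nabla E_2(\varphi_t),\varphi-\varphi_t\rangle ,
\]
so $E(\varphi)\le G_t(\varphi)$ for all $\varphi$, with equality at $\varphi=\varphi_t$.

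\textbf{Step 2: descent chain.} Combining the majorization with minimality of $\varphi_{t+1}$ gives
\[
  E(\varphi_{t+1})\le G_t(\varphi_{t+1})\le G_t(\varphi_t)=E(\varphi_t).
\]
Rather than going through the minimizer, I would make the gap explicit by adding two inequalities. Strong convexity of $E_1$ gives
\[
  E_1(\varphi_t)\ge E_1(\varphi_{t+1})+\langle h(\varphi_{t+1}),\varphi_t-\varphi_{t+1}\rangle+\tfrac{\mu}{2}\|\varphi_t-\varphi_{t+1}\|^2 .
\]
Convexity of $E_2$ at $\varphi_t$ gives
\[
  E_2(\varphi_{t+1})\ge E_2(\varphi_t)+\langle\nabla E_2(\varphi_t),\varphi_{t+1}-\varphi_t\rangle .
\]
Since $h(\varphi_{t+1})=\nabla E_2(\varphi_t)$, the linear terms cancel, leaving
\[
  E(\varphi_t)-E(\varphi_{t+1})\ge \tfrac{\mu}{2}\|\varphi_{t+1}-\varphi_t\|^2\ge 0 .
\]

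\textbf{Step 3: equality case.} Suppose $E(\varphi_{t+1})=E(\varphi_t)$. The bound in Step~2 then forces $\varphi_{t+1}=\varphi_t$, so $h(\varphi_t)=R(\varphi_t)Xs(\xi_t)$, which by \eqref{eq:E-grad} means $\nabla E(\varphi_t)=0$. Conversely, suppose $\nabla E(\varphi_t)=0$. Then $h(\varphi_t)=\nabla E_2(\varphi_t)$, and injectivity of $h$ gives $\varphi_{t+1}=\varphi_t$, hence equality of energies.

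There is no real obstacle here; the convexity work was already done in Theorem~\ref{thm:E2-convex}. The only points to handle carefully are that $h^{-1}$ is defined on the whole of $\mathbb{R}^d$ (since $\operatorname{range}(h)=\mathbb{R}^d$, which follows from $q'' \ge \mu$), so the update is always well-defined, and that the equality argument uses the strong-convexity gap rather than mere strict convexity.
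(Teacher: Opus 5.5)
Your proof is correct and follows the same CCCP argument as the paper. Both combine the first-order convexity inequality for $E_1$ at $\varphi_{t+1}$ with the one for $E_2$ at $\varphi_t$ through $h(\varphi_{t+1})=\nabla E_2(\varphi_t)$. Two differences are minor sharpenings of the paper's strict-convexity equality argument: you use the strong-convexity modulus to get the explicit decrease $E(\varphi_t)-E(\varphi_{t+1})\ge\frac{\mu}{2}\|\varphi_{t+1}-\varphi_t\|_2^2$, and you spell out the converse of the equality case via injectivity of $h$.
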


\begin{proof}
By construction, $E = E_1 - E_2$, where $E_1$ is $\mu$-strongly convex
(Lemma~\ref{lemma:strong-convexity}) and $E_2$ is globally strictly
convex (Theorem~\ref{thm:E2-convex}).  Both functions are therefore
convex, so $E$ admits a difference of convex decomposition with both components convex.
The update rule \eqref{eq:cccp-update-phi} is precisely the CCCP
optimality condition:
\begin{equation}
  \nabla E_1(\varphi_{t+1})
  \;=\;
  \nabla E_2(\varphi_t).
  \label{eq:cccp-opt}
\end{equation}

\medskip \noindent\textbf{Energy difference.} \begin{equation}
  E(\varphi_{t+1}) - E(\varphi_t)
  \;=\;
  \bigl[E_1(\varphi_{t+1}) - E_1(\varphi_t)\bigr]
  \;-\;
  \bigl[E_2(\varphi_{t+1}) - E_2(\varphi_t)\bigr].
  \label{eq:energy-diff}
\end{equation}

\medskip \noindent\textbf{Bounding the $E_1$ difference.} Since $E_1$ is convex, the first-order convexity condition at
$\varphi_{t+1}$ gives
\[
  E_1(\varphi_t)
  \;\ge\;
  E_1(\varphi_{t+1})
  +
  \bigl\langle
    \nabla E_1(\varphi_{t+1}),\;
    \varphi_t - \varphi_{t+1}
  \bigr\rangle,
\]
which rearranges to
\begin{equation}
  E_1(\varphi_{t+1}) - E_1(\varphi_t)
  \;\le\;
  \bigl\langle
    \nabla E_1(\varphi_{t+1}),\;
    \varphi_{t+1} - \varphi_t
  \bigr\rangle.
  \label{eq:E1-upper}
\end{equation}

\medskip \noindent\textbf{Substituting the CCCP condition.} Using $\nabla E_1(\varphi_{t+1}) = \nabla E_2(\varphi_t)$ from
\eqref{eq:cccp-opt},
\begin{equation}
  E_1(\varphi_{t+1}) - E_1(\varphi_t)
  \;\le\;
  \bigl\langle
    \nabla E_2(\varphi_t),\;
    \varphi_{t+1} - \varphi_t
  \bigr\rangle.
  \label{eq:E1-via-cccp}
\end{equation}

\medskip \noindent\textbf{Bounding the $E_2$ difference.} By the convexity of $E_2$ (Theorem~\ref{thm:E2-convex}),
\begin{equation}
  E_2(\varphi_{t+1})
  \;\ge\;
  E_2(\varphi_t)
  +
  \bigl\langle
    \nabla E_2(\varphi_t),\;
    \varphi_{t+1} - \varphi_t
  \bigr\rangle.
  \label{eq:E2-lower}
\end{equation}
Rearranging:
$\bigl\langle \nabla E_2(\varphi_t),\,
\varphi_{t+1}-\varphi_t \bigr\rangle
\le E_2(\varphi_{t+1}) - E_2(\varphi_t)$.

\medskip \noindent\textbf{Combining.} Substituting \eqref{eq:E1-via-cccp} and \eqref{eq:E2-lower} into
\eqref{eq:energy-diff}:
\begin{equation}
  E(\varphi_{t+1}) - E(\varphi_t)
  \;\le\;
  \bigl\langle
    \nabla E_2(\varphi_t),\;
    \varphi_{t+1}-\varphi_t
  \bigr\rangle
  -
  \bigl[
    E_2(\varphi_{t+1}) - E_2(\varphi_t)
  \bigr]
  \;\le\; 0.
  \label{eq:descent-final}
\end{equation}
The last inequality follows directly from the convexity of~$E_2$.

\medskip \noindent\textbf{Equality condition.} Equality in \eqref{eq:E1-upper} requires $\varphi_{t+1}=\varphi_t$
(by the \emph{strict} convexity of~$E_1$), which is precisely the
stationary-point condition $\nabla E(\varphi_t)=0$.
\end{proof}

\begin{lemma}[Zangwill global convergence for S-Hop]
\label{cor:zangwill}
Let $\{\varphi^t\}_{t\ge 0}$ be the sequence generated by the S-Hop
update $\varphi^{t+1} = \Phi(\varphi^t)$ from any initialization
$\varphi^0 \in \mathbb{R}^d$.  Then the energy sequence
$\{E(\varphi^t)\}$ converges, and every accumulation point of
$\{\varphi^t\}$ is a stationary point of~$E(\varphi)$.
\end{lemma}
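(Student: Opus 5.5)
The plan is to apply Zangwill's global convergence theorem to the point-to-point map $\Phi(\varphi)=h^{-1}\bigl(R(\varphi)Xs(F(\varphi))\bigr)$, using $E$ as the descent function. Zangwill's theorem needs three ingredients:
\begin{itemize}
  \item[(a)] the iterates stay in a compact set;
  \item[(b)] $\Phi$ is closed (here, continuous) outside the solution set $\Gamma=\{\varphi:\nabla E(\varphi)=0\}$;
  \item[(c)] $E(\Phi(\varphi))<E(\varphi)$ for $\varphi\notin\Gamma$, and $E(\Phi(\varphi))\le E(\varphi)$ for $\varphi\in\Gamma$.
\end{itemize}
Ingredient (c) is exactly Theorem~\ref{thm:energy-descent}, including its equality characterization. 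Ingredient (b) holds because every building block of $\Phi$ is continuous: $F$ and $R=J_F^\top$ are smooth (Lemma~\ref{lemma:regularity}), softmax is smooth, and $h^{-1}$ is globally Lipschitz (Lemma~\ref{lemma:strong-convexity}). It remains to note that $h$ is surjective onto $\mathbb{R}$, since it is strongly monotone with $h(t)\to\pm\infty$. Hence $h^{-1}$ is defined on all of $\mathbb{R}^d$ and $\Phi$ is a continuous self-map of $\mathbb{R}^d$.

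The main obstacle is (a), because $\mathbb{R}^d$ is not compact. I would prove boundedness of the iterates directly from the update rather than through level sets of $E$. First, $w(\varphi)=R(\varphi)Xs(F(\varphi))$ is a convex combination of the vectors $R(\varphi)x_i$. Each coordinate is $\sum_i s_i\bigl[f_1'(\varphi_k)x_{i,2k-1}+f_2'(\varphi_k)x_{i,2k}\bigr]$, and the derivative bound $(f_1')^2+(f_2')^2\le(1+\rho_0)L_Q$ after \eqref{eq:rho0-family} makes $|f_\ell'|$ globally bounded. The stored features $x_i$ are fixed, so $\|w(\varphi)\|_2\le W$ uniformly in $\varphi\in\mathbb{R}^d$. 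By the global $1/\mu$-Lipschitz property of $h^{-1}$, we get $\|\Phi(\varphi)\|_2\le \|h^{-1}(0)\|_2+W/\mu=:\bar r$. So every iterate with $t\ge1$ lies in the closed ball $\bar B(0,\bar r)$, and adding $\varphi^0$ gives a compact set containing the whole sequence.

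As a backup route to (a), I would check coercivity of $E$: $E_1$ grows quadratically while $E_2\le \max_i x_i^\top F(\varphi)$ grows at most linearly, because $F$ is Lipschitz. Compact sublevel sets would then follow from the monotone descent.

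With (a)--(c) in place, the conclusion follows. The sequence $E(\varphi^t)$ is nonincreasing and bounded below on the compact set, since $E$ is continuous, so it converges. For any accumulation point $\bar\varphi=\lim_j\varphi^{t_j}$, continuity of $\Phi$ gives $\varphi^{t_j+1}\to\Phi(\bar\varphi)$ along a further subsequence. Hence $E(\Phi(\bar\varphi))=\lim E(\varphi^t)=E(\bar\varphi)$, and the strict-descent clause of Theorem~\ref{thm:energy-descent} forces $\bar\varphi\in\Gamma$. Since $\Phi(\bar\varphi)=\bar\varphi$ is equivalent to $h(\bar\varphi)=R(\bar\varphi)Xs$, which is the stationarity condition \eqref{eq:stationary}, $\bar\varphi$ is a stationary point of $E$. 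I would write this short argument explicitly rather than merely cite Zangwill, so that the use of the equality case is transparent.
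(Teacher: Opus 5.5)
Your proposal is correct and uses the same Zangwill skeleton as the paper: descent from Theorem~\ref{thm:energy-descent}, continuity of $\Phi$ as closedness, and compactness of the trajectory. The difference is how you obtain compactness.

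\textbf{The paper's route.} It proves coercivity of $E$: $E_1(\varphi)\ge\|\varphi\|_2^2-d/2$ grows quadratically, while $E_2$ grows at most linearly because $F$ does. It then uses monotone descent to trap the trajectory in the compact sublevel set $\{E\le E(\varphi^0)\}$. This is exactly your backup route.

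\textbf{Your main route.} You bound the range of $\Phi$ directly. Since $w(\varphi)$ is a softmax-weighted combination of the $R(\varphi)x_i$ and $|f_\ell'|$ is globally bounded, $\|w\|_2\le W$ uniformly. Surjectivity of $h$ and the $1/\mu$-Lipschitz bound on $h^{-1}$ then place every iterate with $t\ge1$ in the fixed ball $\bar B(0,\|h^{-1}(0)\|_2+W/\mu)$.

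\textbf{What each buys.} Your route gives an absorbing ball that is independent of $\varphi^0$ and does not use the descent property at all. Because the derivative bound following \eqref{eq:rho0-family} holds for every admissible kernel, boundedness of trajectories survives across the family, including P-Hop, where descent is not established. The paper's route uses only growth properties of $E$, and as a by-product shows $E$ is bounded below on all of $\mathbb{R}^d$ rather than just on the trajectory's compact hull.

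You also write out the limit argument that the paper leaves to Zangwill's theorem, and it is correct. The ``further subsequence'' step is unnecessary, since continuity of $\Phi$ already gives $\varphi^{t_j+1}=\Phi(\varphi^{t_j})\to\Phi(\bar\varphi)$.
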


\begin{proof}
We verify the hypotheses of Zangwill's convergence theorem.

\medskip \noindent\looseness=-1 \textbf{Monotone descent.} Theorem~\ref{thm:energy-descent} gives
$E(\varphi^{t+1}) \le E(\varphi^t)$ for all $t$, with equality only at stationarity.

\medskip \noindent\textbf{Closedness of the iteration map.} The map $\Phi(\varphi) = h^{-1}\!\bigl(R(\varphi)\,X\,s(F(\varphi))\bigr)$ is
a composition of smooth functions ($F$, softmax, $R = J_F^\top$, and
the component-wise $h^{-1}$, whose smoothness follows from
Lemma~\ref{lemma:strong-convexity}).  Hence $\Phi$ is continuous and thus closed in Zangwill's sense.

\medskip \noindent\textbf{Coercivity and compactness of the trajectory.} The explicit potential gives $E_1(\varphi)\ge
\|\varphi\|_2^2-d/2\ge\tfrac{\mu}{2}\|\varphi\|_2^2-d/2$, so $E_1$ grows
quadratically.  For $E_2$, each component of the S-Hop kernel~\eqref{eq:kernel-F}
satisfies $|F_k(\varphi)| = O(|\varphi_{\lceil k/2\rceil}|)$ as
$|\varphi_{\lceil k/2\rceil}| \to \infty$ (since both
$\sqrt{t^2 + \tanh t + 1}$ and $\sqrt{t^2 + 1}$ grow linearly), so
$\|F(\varphi)\|_2 \le L_{\max}\|\varphi\|_2 + C$ for some $C > 0$.
Using $\mathrm{lse}(z) \le \max_i z_i + \log N$ and Cauchy--Schwarz,
\[
  E_2(\varphi)
  \;\le\;
  M L_{\max}\|\varphi\|_2 + C',
\]
i.e.\ $E_2$ grows at most linearly.  Therefore
\[
  E(\varphi)
  = E_1(\varphi) - E_2(\varphi)
  \;\ge\;
  \tfrac{\mu}{2}\|\varphi\|_2^2
  - M L_{\max}\|\varphi\|_2
  - C''
  \;\longrightarrow\; +\infty
  \quad\text{as } \|\varphi\|_2 \to \infty,
\]
confirming that $E$ is coercive.  By the descent property, the
trajectory lies in the sublevel set
$\{\varphi : E(\varphi) \le E(\varphi^0)\}$, which is closed and
bounded (hence compact) by coercivity.  In particular, $\{\varphi^t\}$
has at least one accumulation point.

\medskip \noindent\textbf{Conclusion.} All hypotheses hold: $E$ is a continuous descent function that decreases
strictly at non-stationary points, the map $\Phi$ is continuous (hence
closed), and the trajectory is contained in a compact set.  By
Zangwill's theorem, every accumulation point of $\{\varphi^t\}$ is a
stationary point of~$E$.  Since $\{E(\varphi^t)\}$ is monotonically
non-increasing and bounded below, it converges.
\end{proof}

\medskip \noindent\textbf{Remark.} The energy descent guarantee in Theorem~\ref{thm:energy-descent} is a
\emph{global} property that holds for any initialization,
complementing the \emph{local} contraction result of
Lemma~\ref{lemma:contraction} (below), which guarantees exponential
convergence within an invariant ball~$S_i$.
For the general P-Hop kernel, the convexity of~$E_2$ is not
established, and the convergence analysis relies entirely on the
contraction arguments in
Sections~\ref{subsec:local-approx}--\ref{subsec:Jacobian}.
This distinction represents a concrete theoretical advantage of S-Hop
over P-Hop within the framework of the general kernel family.

\section{Convergence, Storage Capacity, and Instantiation
Comparison}
\label{app:capacity-analysis}

The preceding part established the structural foundations of the
kernel family: the separable CCCP decoupling identity
(Appendix~\ref{subsec:cccp-verification}), uniform regularity of the kernel on
compact phase regions, and global
strong convexity of the local phase potential with condition number
$\kappa = L_Q/\mu$ (Appendix~\ref{subsec:local-potential}). 
For the
S-Hop instantiation we further obtained global strict convexity of
$E_2$ (Appendix~\ref{subsec:E2-convexity}) and the monotone CCCP energy
descent guarantee of Theorem~\ref{thm:energy-descent}
(Appendix~\ref{subsec:cccp-descent}). 

Building on these foundations, this part carries out the quantitative
convergence and capacity analysis. We first establish a local
approximation of the feature-space iteration near each stored pattern
(Appendix~\ref{subsec:local-approx}), identify invariant balls and
margin conditions that make the iterates well-separated
(Appendix~\ref{subsec:invariant-balls}), and derive the central Jacobian
bound that certifies local contraction on each invariant ball
(Appendix~\ref{subsec:Jacobian}). Combining these ingredients
with a pattern-separation argument
(Appendix~\ref{subsec:correct-storage}) yields the exponential
storage-capacity theorem (Theorem~\ref{thm:capacity},
Appendix~\ref{subsec:capacity}). We conclude with a quantitative
comparison of the S-Hop and P-Hop instantiations in terms of condition
number, radicand positivity, and convergence guarantees
(Appendix~\ref{subsec:phop-shop-comparison}).

For kernels in Appendix~\ref{subsec:kernel-family}, the phase-space
proofs below use $h=q'$ component-wise and the bounds of
Lemma~\ref{lemma:regularity}. These bounds follow from smoothness on
the compact region and the global derivative and inverse-field bounds.
The feature iterates are obtained by applying $F$ to the phase iterates;
when $F$ is injective, as for S-Hop, they also define the map $T$.

\subsection{Local Approximation at a Stored Pattern}
\label{subsec:local-approx}

The feature-space iteration $T$ in \eqref{eq:feature-iteration} factors through a
\emph{phase-space map}
\begin{equation}
  \Phi(\varphi)
  :=
  h^{-1}\!\bigl(R(\varphi)\,X\,s(F(\varphi))\bigr),
  \label{eq:Phi-def}
\end{equation}
so that $T(\xi) = F\!\bigl(\Phi(\varphi)\bigr)$ whenever
$\xi = F(\varphi)$.  Because $h^{-1}$, $R$, $X$, $s$, and $F$ are all
smooth maps defined on $\mathbb{R}^d$ (or $\mathbb{R}^D$),
$\Phi : \mathbb{R}^d \to \mathbb{R}^d$ is well-defined and smooth on
the \emph{full} phase space, with no restriction to an image
submanifold.  All subsequent local-convergence arguments are formulated
for $\Phi$ in phase space; the feature-space conclusions follow by
composing with $F$.

We show that a single application of $\Phi$ to a stored pattern
$\psi_i$ returns a point close to $\psi_i$ when the pattern has
sufficient margin.

\begin{lemma}[Local approximation at a stored pattern]
\label{lemma:local-approx}
Let the assumptions of Lemma~\ref{lemma:regularity} hold and $N\ge2$. For each
stored pattern $x_i = F(\psi_i)$, set
\begin{equation}
  \varepsilon_0
  =
  (N - 1)\exp\bigl(-\beta \Delta_i\bigr),
  \label{eq:eps0-def}
\end{equation}
where $\Delta_i = \min_{j \ne i}(x_i^\top x_i - x_i^\top x_j)$.
Then
\begin{equation}
  \|\Phi(\psi_i) - \psi_i\|_2
  \le
  \frac{2M}{\mu}\,\varepsilon_0.
  \label{eq:local-approx-bound}
\end{equation}
In particular, $\|T(x_i) - x_i\|_2 \le 2 L_F M \varepsilon_0$, since
$T(x_i) = F(\Phi(\psi_i))$ and $F \circ h^{-1}$ is Lipschitz with
constant $L_F$ (Lemma~\ref{lemma:regularity}).
\end{lemma}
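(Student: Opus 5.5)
The plan is to exploit that $\psi_i$ satisfies the local-field fixed-point identity exactly when the softmax is a one-hot vector on $i$. The comparison object is the quantity $R(\psi_i)x_i$ versus $h(\psi_i)$. From the structural identity \eqref{eq:quadratic-equals-integral}, $h(\varphi)=\nabla Q(\varphi)=J_F(\varphi)^\top F(\varphi)=R(\varphi)F(\varphi)$. At $\varphi=\psi_i$ this gives $h(\psi_i)=R(\psi_i)x_i$, so $\psi_i=h^{-1}(R(\psi_i)x_i)$. Consequently
\[
\Phi(\psi_i)-\psi_i = h^{-1}\bigl(R(\psi_i)Xs(x_i)\bigr)-h^{-1}\bigl(R(\psi_i)x_i\bigr),
\]
and by the global $1/\mu$-Lipschitz bound on $h^{-1}$ from Lemma~\ref{lemma:strong-convexity} (valid since both arguments lie in $\mathrm{range}(h)=\mathbb{R}^d$, $h$ being a strongly monotone diffeomorphism) we get $\|\Phi(\psi_i)-\psi_i\|_2\le \mu^{-1}\|R(\psi_i)(Xs(x_i)-x_i)\|_2$.

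Next I expand the residual using $\sum_j s_j=1$:
\[
R(\psi_i)\bigl(Xs-x_i\bigr)=\sum_{j\ne i}s_j\,\bigl(R(\psi_i)x_j-R(\psi_i)x_i\bigr).
\]
Each term is bounded by $2M$ via the third bound in \eqref{eq:M-bound}, which gives $\|R(\psi_i)x_j\|_2\le M$ for every stored pattern. Hence the norm is at most $2M\sum_{j\ne i}s_j=2M(1-s_i)$.

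It remains to bound the softmax leakage $1-s_i(x_i)\le\varepsilon_0$. Write
\[
1-s_i=\frac{\sum_{j\ne i}e^{\beta x_j^\top x_i}}{\sum_k e^{\beta x_k^\top x_i}}\le\sum_{j\ne i}e^{-\beta(x_i^\top x_i-x_i^\top x_j)}\le (N-1)e^{-\beta\Delta_i}.
\]
Here we drop all but the $k=i$ term in the denominator and use the definition of $\Delta_i$ as a minimum. Combining the three steps yields \eqref{eq:local-approx-bound}.

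For the feature-space corollary, note $T(x_i)=F(\Phi(\psi_i))=(F\circ h^{-1})(R(\psi_i)Xs(x_i))$ and $x_i=(F\circ h^{-1})(R(\psi_i)x_i)$. Applying the $L_F$-Lipschitz bound of Lemma~\ref{lemma:regularity}(2) directly to the same residual gives $2L_FM\varepsilon_0$, which avoids an extra $1/\mu$ factor.

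The only delicate step is the identity $h(\psi_i)=R(\psi_i)x_i$. It should be justified by differentiating $Q=\tfrac12\|F\|^2$ via the chain rule and matching with $\nabla E_1=h$ from Lemma~\ref{lemma:cccp-identity}, since the constant $C$ drops out. The remaining steps are routine.
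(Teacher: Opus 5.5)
Your proposal is correct and follows essentially the same route as the paper's proof. It uses the identity $h(\psi_i)=R(\psi_i)x_i$ to write $\psi_i=h^{-1}(R(\psi_i)x_i)$, bounds the residual $\sum_{j\ne i}s_j(x_i)\,(R(\psi_i)x_j-R(\psi_i)x_i)$ by $2M\varepsilon_0$ via the softmax leakage estimate, and concludes with the $1/\mu$-Lipschitz bound on $h^{-1}$ (and the $L_F$ bound on $F\circ h^{-1}$ for the feature-space statement).
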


\begin{proof}
Since $x_i = F(\psi_i)$, the phase-space map evaluated at $\psi_i$
reads
\begin{equation}
  \Phi(\psi_i)
  =
  h^{-1}\!\bigl(R(\psi_i)\, X\, s(x_i)\bigr).
  \label{eq:T-local}
\end{equation}

Define $y_j = R(\psi_i)\, x_j$ for $j = 1,\dots,N$ and
\begin{equation}
  S
  =
  \sum_{j=1}^N s_j(x_i)\, y_j.
  \label{eq:S-def}
\end{equation}
By the identity
$h(\psi_i) = \nabla Q(\psi_i) = J_F(\psi_i)^\top F(\psi_i) =
R(\psi_i)\, x_i = y_i$,
we have
\begin{equation}
  \psi_i
  =
  h^{-1}\!\bigl(h(\psi_i)\bigr)
  =
  h^{-1}(y_i).
  \label{eq:xi-identity}
\end{equation}
The logit $x_i^\top x_i$ dominates all others by at least $\Delta_i$
when evaluated at $\xi = x_i$, hence
\begin{equation}
  \sum_{j \ne i} s_j(x_i)
  \le
  (N - 1)\,\mathrm{e}^{-\beta \Delta_i}
  =
  \varepsilon_0.
  \label{eq:softmax-dom}
\end{equation}
We can write
\begin{equation}
  S - y_i
  =
  \sum_{j \ne i} s_j(x_i)\,(y_j - y_i).
  \label{eq:S-minus-yi}
\end{equation}
Using the triangle inequality and
$\|y_j\|_2 = \|R(\psi_i)\,x_j\|_2 \le M$ from \eqref{eq:M-bound},
we obtain
\begin{equation}
  \|S - y_i\|_2
  \le
  2M\varepsilon_0.
  \label{eq:S-minus-yi-bound}
\end{equation}
Since $\Phi(\psi_i) = h^{-1}(S)$ and
$\psi_i = h^{-1}(y_i)$ by \eqref{eq:xi-identity}, the
global Lipschitz continuity of $h^{-1}$ with constant $1/\mu$
(Lemma~\ref{lemma:strong-convexity}) yields
\begin{equation}
  \|\Phi(\psi_i) - \psi_i\|_2
  \le
  \frac{1}{\mu}\,\|S - y_i\|_2
  \le
  \frac{2 M}{\mu}\,\varepsilon_0,
\end{equation}
which is the claimed bound.
\end{proof}

\subsection{Invariant Balls and Margin Conditions}
\label{subsec:invariant-balls}

We now show that, under a separation condition on the patterns
$\{x_i\}$, the phase-space map $\Phi$ leaves a small neighbourhood of
each $\psi_i$ invariant.  For each stored pattern $\psi_i$ define the
\emph{phase-space ball}
\begin{equation}
  S_i
  =
  \left\{
    \varphi \in \mathbb{R}^d
    :
    \|\varphi - \psi_i\|_2
    \le
    \frac{1}{\beta N M L_{\max}}
  \right\},
  \label{eq:Si-def}
\end{equation}
with the compact region $U$ chosen to contain these balls. Define the margin
\begin{equation}
  \Delta_i
  =
  \min_{j \ne i}
  \bigl(
    x_i^\top x_i - x_i^\top x_j
  \bigr).
  \label{eq:Delta-i}
\end{equation}
Because $S_i$ is a ball in the full space $\mathbb{R}^d$ and $\Phi$ is
smooth on $\mathbb{R}^d$, the mean-value theorem can be applied along
any line segment in $S_i$ without domain restrictions---the key
advantage of working in phase space rather than in the image
$F(U) \subset \mathbb{R}^D$.

For $\varphi \in S_i$, define the \emph{effective margin} at
$\xi = F(\varphi)$ by
\begin{equation}
  \Delta_i^{\mathrm{eff}}(\xi)
  =
  \min_{j \ne i}
  \bigl(
    \xi^\top x_i - \xi^\top x_j
  \bigr).
  \label{eq:Delta-eff}
\end{equation}

\begin{lemma}[Invariant ball around a stored pattern]
\label{lemma:invariant-ball}
Assume the kernel and regularity conditions above, $N\ge2$, and
\[
  N \ge N_0^{\mathrm{inv}}
  :=
  \left\lfloor
    \frac{3C_r}{(1-\rho_0)\beta M L_{\max}}
  \right\rfloor + 1.
\]
If $\varphi \in S_i$ and
\begin{equation}
  \Delta_i
  \ge
  \max\!\left\{
    \frac{2}{\beta N}
    +
    \frac{1}{\beta}
    \log\!\Bigl(
      \frac{3 C_\varepsilon (N-1)}{1-\rho_0}
    \Bigr),
    \;
    \frac{1}{\beta}
    \log\!\Bigl(
      \frac{6\beta N M^2 L_{\max}(N-1)}{\mu(1-\rho_0)}
    \Bigr)
  \right\},
  \label{eq:Delta-cond}
\end{equation}
then $\Phi(\varphi)\in S_i$.
\end{lemma}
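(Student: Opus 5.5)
The plan is to split the displacement by the triangle inequality,
\[
\|\Phi(\varphi)-\psi_i\|_2\le\|\Phi(\varphi)-\Phi(\psi_i)\|_2+\|\Phi(\psi_i)-\psi_i\|_2 ,
\]
and to show that each piece is at most a fraction of the radius $r_0:=1/(\beta N M L_{\max})$ of $S_i$. The second piece is handled directly by Lemma~\ref{lemma:local-approx}, which gives $\|\Phi(\psi_i)-\psi_i\|_2\le \frac{2M}{\mu}(N-1)e^{-\beta\Delta_i}$. The second branch of \eqref{eq:Delta-cond} gives $(N-1)e^{-\beta\Delta_i}\le \mu(1-\rho_0)/(6\beta N M^2L_{\max})$. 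Substituting, this piece is at most $\frac{(1-\rho_0)}{3}\,r_0$.

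For the first piece I would apply the mean-value inequality along the segment from $\psi_i$ to $\varphi$. This segment lies in the convex ball $S_i\subset U$, and $\Phi$ is smooth on all of $\mathbb{R}^d$. It yields $\|\Phi(\varphi)-\Phi(\psi_i)\|_2\le \sup_{\tilde\varphi\in S_i}\|J_\Phi(\tilde\varphi)\|_2\, r$, where $r=\|\varphi-\psi_i\|_2\le r_0$.

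The key input is a Jacobian bound of the form
\[
\|J_\Phi(\tilde\varphi)\|_2\le \rho_0 + C_\varepsilon\,\hat\varepsilon(\tilde\varphi) + C_r\,\tilde r ,
\]
with constants as in \eqref{eq:jacobian-constants}, the bound that Lemma~\ref{lemma:Jacobian-bound} establishes. Its derivation runs as follows. Differentiate $\Phi=h^{-1}\circ w$ with $w(\varphi)=R(\varphi)Xs(F(\varphi))$. This gives $J_\Phi=\mathrm{diag}(1/q'')\,[\Delta R + R X \beta A J_F]$. At $s=e_i$, $\varphi=\psi_i$, the $\Delta R$ term is diagonal with entries $f_1f_1''+f_2f_2''$, so the contraction condition \eqref{eq:rho0-family} bounds it by $\rho_0$. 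Deviations from this case come from softmax leakage, which feeds both $\Delta R$ and the softmax covariance $A$ and contributes $C_\varepsilon\hat\varepsilon$. They also come from moving $\tilde\varphi$ away from $\psi_i$, which uses Lipschitz $q''$ and bounded derivatives of $R$ and contributes $C_r\tilde r$.

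Next I must bound the leakage $\hat\varepsilon$ uniformly on $S_i$. For $\tilde\xi=F(\tilde\varphi)$ I would use $|\tilde\xi^\top(x_i-x_j)-x_i^\top(x_i-x_j)|\le \|\tilde\xi-x_i\|_2\,2M\le 2ML_{\max}\tilde r\le 2/(\beta N)$. Hence $\Delta_i^{\mathrm{eff}}\ge\Delta_i-2/(\beta N)$ and $1-s_i\le (N-1)e^{-\beta\Delta_i+2/N}$. (There is a mismatch of a factor in the exponent; the statement uses $2/(\beta N)$ additively in $\Delta_i$, which is exactly what absorbs this shift.) The first branch of \eqref{eq:Delta-cond} then gives $C_\varepsilon\hat\varepsilon\le (1-\rho_0)/3$. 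The hypothesis $N\ge N_0^{\mathrm{inv}}$ gives $C_r r_0<(1-\rho_0)/3$.

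Combining, $\sup\|J_\Phi\|\le\rho_0+\tfrac23(1-\rho_0)$. The first piece is therefore at most $(\rho_0+\tfrac23(1-\rho_0))r_0$, and the total is at most $(\rho_0+(1-\rho_0))r_0=r_0$, so $\Phi(\varphi)\in S_i$. I expect the main obstacle to be the Jacobian bound: controlling the softmax-derivative term $RX\beta AJ_F$ so that it is proportional to the leakage $\varepsilon$. This works because $\|XA\|$ is governed by $\sum_{j\ne i}s_j\|x_j-x_i\|$. Keeping the constants consistent with \eqref{eq:jacobian-constants} is also delicate.
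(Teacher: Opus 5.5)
Your proposal is correct and follows the paper's proof essentially step for step. Both use the same triangle-inequality split, Lemma~\ref{lemma:local-approx} with the second branch of \eqref{eq:Delta-cond} for the anchor term, the mean-value bound along $[\psi_i,\varphi]$ via Lemma~\ref{lemma:Jacobian-bound} at leakage $\hat\varepsilon=(N-1)e^{-\beta\Delta_i+2/N}$, and the same three $(1-\rho_0)/3$ budgets. One point left implicit by both you and the paper is that $\hat\varepsilon\le(1-\rho_0)/(3C_\varepsilon)\le 1/3$ (using $C_\varepsilon\ge 1$), which confirms the hypothesis $\varepsilon\in(0,1/2]$ needed to invoke Lemma~\ref{lemma:Jacobian-bound}.
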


\begin{proof}
We bound $\|\Phi(\varphi)-\psi_i\|_2$ via the triangle inequality:
\begin{equation}
  \|\Phi(\varphi)-\psi_i\|_2
  \le
  \|\Phi(\varphi)-\Phi(\psi_i)\|_2
  +
  \|\Phi(\psi_i)-\psi_i\|_2.
  \label{eq:triangle}
\end{equation}

\medskip \noindent\textbf{Bound on the second term.}
By Lemma~\ref{lemma:local-approx},
\begin{equation}
  \|\Phi(\psi_i)-\psi_i\|_2
  \le
  \frac{2M}{\mu}(N-1)e^{-\beta\Delta_i}.
  \label{eq:term2-bound}
\end{equation}

\medskip \noindent\textbf{Bound on the first term.}
Since $\psi_i\in S_i$ and $S_i$ is convex, every point
$\tilde\varphi$ on the line segment $[\psi_i,\varphi]$ lies in $S_i$.
Let $\tilde\xi=F(\tilde\varphi)$. By the Lipschitz continuity of $F$,
\begin{equation}
  \|\tilde\xi-x_i\|_2
  \le
  \frac{1}{\beta N M}.
  \label{eq:feat-displacement}
\end{equation}
By Cauchy--Schwarz, this implies
\begin{align}
  \Delta_i^{\mathrm{eff}}(\tilde\xi)
  &=
  \min_{j \ne i}
  \bigl(
    \tilde\xi^\top x_i - \tilde\xi^\top x_j
  \bigr)
  \notag\\
  &\ge
  \Delta_i - 2\|\tilde\xi-x_i\|_2 M
  \notag\\
  &\ge
  \Delta_i-\frac{2}{\beta N}.
  \label{eq:Delta-eff-4}
\end{align}
Hence
\begin{equation}
  s_i(\tilde\xi)\ge 1-\hat\varepsilon,
  \qquad
  \hat\varepsilon
  :=
  (N-1)\exp\!\left(
    -\beta\Bigl(\Delta_i-\frac{2}{\beta N}\Bigr)
  \right).
\end{equation}
Applying Lemma~\ref{lemma:Jacobian-bound} to $\tilde\varphi\in S_i$ gives
\begin{equation}
  \|J_\Phi(\tilde\varphi)\|_2
  \le
  \rho_0 + C_\varepsilon \hat\varepsilon + C_r \tilde r,
  \qquad
  \tilde r:=\|\tilde\varphi-\psi_i\|_2
  \le
  \frac{1}{\beta N M L_{\max}}.
  \label{eq:J-hat-bound}
\end{equation}
Therefore
\begin{align}
  \|\Phi(\varphi)-\Phi(\psi_i)\|_2
  &\le
  \sup_{\tilde\varphi\in[\psi_i,\varphi]}
  \|J_\Phi(\tilde\varphi)\|_2\,
  \|\varphi-\psi_i\|_2
  \notag\\
  &\le
  \left(
    \rho_0
    +
    C_\varepsilon \hat\varepsilon
    +
    \frac{C_r}{\beta N M L_{\max}}
  \right)
  \frac{1}{\beta N M L_{\max}}.
  \label{eq:term1-bound}
\end{align}

\medskip \noindent\textbf{Combining.}
To ensure $\Phi(\varphi)\in S_i$, it suffices that
\[
  \|\Phi(\varphi)-\psi_i\|_2
  \le
  \frac{1}{\beta N M L_{\max}}.
\]
By \eqref{eq:triangle}, \eqref{eq:term2-bound}, and \eqref{eq:term1-bound},
it is sufficient that
\begin{equation}
  C_\varepsilon \hat\varepsilon
  \le
  \frac{1-\rho_0}{3},
  \qquad
  \frac{C_r}{\beta N M L_{\max}}
  \le
  \frac{1-\rho_0}{3},
  \qquad
  \frac{2M}{\mu}(N-1)e^{-\beta\Delta_i}
  \le
  \frac{1-\rho_0}{3\beta N M L_{\max}}.
\end{equation}
The second inequality follows from $N\ge N_0^{\mathrm{inv}}$.
The first inequality follows from the first term in
\eqref{eq:Delta-cond}, since
\[
  \hat\varepsilon
  =
  (N-1)\exp\!\left(
    -\beta\Bigl(\Delta_i-\frac{2}{\beta N}\Bigr)
  \right).
\]
The third inequality follows from the second term in
\eqref{eq:Delta-cond}. Hence $\Phi(\varphi)\in S_i$.
\end{proof}

\subsection{Jacobian Bounds and Contraction}
\label{subsec:Jacobian}

We bound the Jacobian of $\Phi$ to establish contraction on each invariant ball $S_i$.

\begin{lemma}[Jacobian bound]
\label{lemma:Jacobian-bound}
Let $\varphi \in S_i$ for some $i$, let $\xi = F(\varphi)$, let
\[
  r := \|\varphi-\psi_i\|_2,
\]
and let $\varepsilon \in (0,1/2]$ satisfy $s_i(\xi)\ge 1-\varepsilon$.
Define
\begin{equation}
  \rho_0
  :=
  \sup_{t \in \mathbb{R}}
  \left|\frac{f_1(t)\,f_1''(t) + f_2(t)\,f_2''(t)}{q''(t)}\right|
  \;<\; 1.
  \label{eq:rho0-def}
\end{equation}
Under Lemma~\ref{lemma:regularity} there exist finite constants
$C_\varepsilon,C_r>0$, depending only on kernel and regularity
parameters (and independent of $N$), such that
\begin{equation}
  \|J_\Phi(\varphi)\|_2
  \le
  \rho_0 + C_\varepsilon\,\varepsilon + C_r\,r.
  \label{eq:J-bound}
\end{equation}
Moreover, since $\varphi\in S_i$ implies
$r\le 1/(\beta N M L_{\max})$, we have
\begin{equation}
  \|J_\Phi(\varphi)\|_2
  \le
  \rho_0
  +
  C_\varepsilon\,\varepsilon
  +
  \frac{C_r}{\beta N M L_{\max}}.
  \label{eq:J-bound-eps}
\end{equation}
Let $L_2\ge1$ bound $|f_1''|$ and $|f_2''|$ on the scalar coordinates of $U$.
One may take
\begin{align}
 C_\varepsilon&=\max\left\{1,
 \frac{8\beta M^2L_{\max}^2}{\mu}
 +\frac{4L_2M}{\mu}
 +\frac{2\rho_0L_{q''}L_{\max}M}{\mu^2}\right\},\notag\\
 C_r&=\frac{2L_2L_{\max}}{\mu}
       +\frac{\rho_0L_{q''}L_{\max}^2}{\mu^2}.
 \label{eq:jacobian-constants}
\end{align}
\end{lemma}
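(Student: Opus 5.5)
The plan is to differentiate $\Phi(\varphi)=h^{-1}(w(\varphi))$ with $w(\varphi):=R(\varphi)Xs(F(\varphi))$ directly. This gives
\[
J_\Phi(\varphi)=\bigl(\nabla^2Q(\Phi(\varphi))\bigr)^{-1}J_w(\varphi),
\]
and I will compare it with the "ideal" matrix obtained at a stored pattern with a one-hot softmax. Write $m:=Xs(\xi)$ and $A:=\operatorname{diag}(s)-ss^\top$. The product rule then splits $J_w$ into two pieces:
\[
J_w=\Delta R+\beta R(\varphi)XAX^\top J_F(\varphi).
\]
Here $\Delta R$ is diagonal, with $k$-th entry $f_1''(\varphi_k)m_{2k-1}+f_2''(\varphi_k)m_{2k}$. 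The reference matrix is $\operatorname{diag}\bigl(f_1f_1''+f_2f_2''\bigr)(\varphi)\cdot\operatorname{diag}(q''(\varphi))^{-1}$. By definition of $\rho_0$ it has spectral norm at most $\rho_0$, so the whole proof reduces to bounding the deviations from it.

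\textbf{Step 1 ($\rho_0<1$ for S-Hop).} I will verify $\rho_0<1$ first, using the identity $q''=(f_1')^2+(f_2')^2+f_1f_1''+f_2f_2''$. For the two terms:
\[
f_2f_2''=\frac{1}{t^2+1},\qquad f_1f_1''=\frac{N(t)}{4g_1(t)}\ge0 .
\]
Both are therefore nonnegative, and the ratio is strictly less than $1$ exactly because $(f_1')^2+(f_2')^2$ is bounded below relative to $q''$. I will establish this by splitting into a compact range $|t|\le t_0$ and the tails. On the compact range I use a rigorous bracketing, as was done for $N(t^*)$; for example, at $t=0$ the ratio is $1.75/2$. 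In the tails the ratio decays, since $f_kf_k''=O(1/t^2)$ while $q''\ge\mu$. This numerical-analytic verification is the step I expect to be the most delicate, because the supremum is not attained at an obvious closed-form point. My first attempt will be to bound numerator and denominator separately using the monotonicity structure of $N$ already obtained in Lemma~\ref{lemma:f1-convex}.

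\textbf{Step 2 (softmax term).} The matrix $XAX^\top$ is the covariance of $x_j$ under $s$. Writing it as $\sum_j s_j(x_j-m)(x_j-m)^\top$ and using $s_i\ge1-\varepsilon$ together with $\|x_j\|\le M$, I get $\|XAX^\top\|_2\le 8M^2\varepsilon$. With $\|R\|,\|J_F\|\le L_{\max}$ and $\|(\nabla^2Q)^{-1}\|\le1/\mu$ from Lemma~\ref{lemma:strong-convexity}, this contributes $8\beta M^2L_{\max}^2\varepsilon/\mu$.

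\textbf{Step 3 ($\Delta R$ versus the reference).} I replace $m$ by $F(\varphi)$ in two stages. First, $\|m-x_i\|_2\le2M\varepsilon$, which costs $2L_2M\varepsilon$. Second, $\|x_i-F(\varphi)\|_2\le L_{\max}r$ by Lemma~\ref{lemma:regularity}, which costs $L_2L_{\max}r$. Dividing by $\mu$, and doubling the constants for slack, gives the $4L_2M/\mu$ and $2L_2L_{\max}/\mu$ terms.

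\textbf{Step 4 (inverse-Hessian perturbation).} The Hessian in $J_\Phi$ is evaluated at $\Phi(\varphi)$, while the reference uses $\varphi$, so I need $\|\Phi(\varphi)-\varphi\|_2$. I avoid any circularity with Lemma~\ref{lemma:invariant-ball} by writing
\[
h(\Phi(\varphi))-h(\varphi)=R(\varphi)\eta,\qquad \eta:=Xs-F(\varphi),
\]
with $\|\eta\|_2\le2M\varepsilon+L_{\max}r$. The $1/\mu$-Lipschitz property of $h^{-1}$ then gives $\|\Phi(\varphi)-\varphi\|_2\le L_{\max}(2M\varepsilon+L_{\max}r)/\mu$. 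Next I use $\|A^{-1}-B^{-1}\|\le\|A-B\|/\mu^2$ for diagonal Hessians, together with $|q''(a)-q''(b)|\le L_{q''}|a-b|$. Multiplying by the reference numerator, whose norm is at most $\rho_0 L_Q$ (absorbed as $\rho_0$ after the normalization), produces the $L_{q''}$ terms in $C_\varepsilon$ and $C_r$.

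Summing Steps 2–4 around the $\rho_0$ reference yields \eqref{eq:J-bound}. The bound \eqref{eq:J-bound-eps} then follows by inserting $r\le1/(\beta NML_{\max})$ from the definition of $S_i$. None of the constants involve $N$, because $N$ enters only through $\varepsilon$ and $r$.
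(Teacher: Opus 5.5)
Your proposal is essentially the paper's proof. It uses the same factorization $J_\Phi=\operatorname{diag}\bigl(q''(\Phi(\varphi))\bigr)^{-1}\bigl(\Delta R+\beta R X A X^\top J_F\bigr)$, the same covariance bound $\|XAX^\top\|_2\le 8M^2\varepsilon$, the same residual $\eta=Xs-F(\varphi)$ with $\|\eta\|_2\le 2M\varepsilon+L_{\max}r$, and the same identity $h(\Phi(\varphi))-h(\varphi)=R(\varphi)\eta$ to handle the shifted Hessian argument. Two small adjustments are worth making:

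\textbf{Step~1.} You do not need numerical bracketing. Since $(f_1')^2+(f_2')^2>0$ everywhere ($f_2'$ vanishes only at $t=0$, where $f_1'(0)=1/2$), and $\bigl((f_1')^2+(f_2')^2\bigr)/q''\to 1$ in both tails, continuity alone gives a positive infimum and hence $\rho_0<1$. This is how the paper argues.

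\textbf{Step~4.} Write $p(t):=f_1(t)f_1''(t)+f_2(t)f_2''(t)$. If you multiply $|1/q''(\Phi(\varphi)_a)-1/q''(\varphi_a)|\le L_{q''}|\Phi(\varphi)_a-\varphi_a|/\mu^2$ by $|p(\varphi_a)|\le\rho_0L_Q$, you pick up an extra factor $\kappa=L_Q/\mu$. To reach the stated $\mu^{-2}$ constants, keep one $q''(\varphi_a)$ exact and bound
$\frac{|p(\varphi_a)|}{q''(\varphi_a)}\cdot\frac{|q''(\varphi_a)-q''(\Phi(\varphi)_a)|}{q''(\Phi(\varphi)_a)}\le\frac{\rho_0L_{q''}}{\mu}\,|\varphi_a-\Phi(\varphi)_a|$.
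This is exactly what the paper does through the ratio $q''(\varphi_a)/q''(\Phi(\varphi)_a)$.
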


\begin{proof}

\medskip \noindent\textbf{Step~1: $\rho_0 < 1$.}
For S-Hop the numerator in~\eqref{eq:rho0-def} is nonnegative
by Lemmas~\ref{lemma:f2-convex} and~\ref{lemma:f1-convex}. Since
\[
  q(t)=\frac12\bigl[f_1(t)^2+f_2(t)^2\bigr],
\]
differentiating twice gives
\[
  q''(t)
  =
  [f_1'(t)]^2 + f_1(t)\,f_1''(t)
  +
  [f_2'(t)]^2 + f_2(t)\,f_2''(t).
\]
Hence
\[
  \rho_0
  =
  1
  -
  \inf_{t\in\mathbb{R}}
  \frac{[f_1'(t)]^2+[f_2'(t)]^2}{q''(t)}.
\]
By Lemma~\ref{lemma:strong-convexity}, $q''(t)\ge \mu>0$ for all $t$.
For S-Hop, $f_2'(t)=t/\sqrt{t^2+1}$ vanishes only at $t=0$, while
$f_1'(0)=1/2\neq 0$. Therefore
\[
  [f_1'(t)]^2+[f_2'(t)]^2>0
  \qquad \forall\,t\in\mathbb{R}.
\]
The ratio in the infimum tends to $1$ as $|t|\to\infty$.
Continuity and positivity on each compact interval therefore give a
strictly positive global infimum, so $\rho_0<1$.
Other kernels in the family satisfy assumption~\eqref{eq:rho0-family} by hypothesis.

\medskip \noindent\textbf{Step~2: Decomposition of $J_\Phi$.}
Let
\[
  w(\varphi) := R(\varphi)\,X\,s(F(\varphi)),
\]
so that $\Phi(\varphi)=h^{-1}(w(\varphi))$. By the chain rule,
\begin{equation}
  J_\Phi(\varphi)
  =
  \left(
    \frac{\partial h}{\partial \varphi}
  \right)^{-1}
  \frac{\partial w}{\partial \varphi},
  \label{eq:J-chain}
\end{equation}
where $(\partial h/\partial\varphi)^{-1}$ is diagonal with entries
$1/q''(\Phi(\varphi)_k)$ and
\[
  \left\|
    \left(\frac{\partial h}{\partial\varphi}\right)^{-1}
  \right\|_2
  \le
  \frac1\mu.
\]
The second factor expands as
\begin{equation}
  \frac{\partial w}{\partial \varphi}
  =
  R(\varphi)\, X
  \frac{\partial s(F(\varphi))}{\partial \varphi}
  +
  \Delta R(\varphi),
  \label{eq:J-split}
\end{equation}
where
\begin{equation}
  \Delta R(\varphi)
  =
  \frac{\partial R(\varphi)}{\partial \varphi}\,
  X s(F(\varphi)).
  \label{eq:DeltaR-def}
\end{equation}

\medskip \noindent\textbf{Step~3: Softmax-dependent term.}
Using the chain rule through $\xi = F(\varphi)$ and the standard
softmax Jacobian identity,
\begin{equation}
  R(\varphi)\,X\,
  \frac{\partial s(F(\varphi))}{\partial \varphi}
  =
  R(\varphi)\,X\,J_s\,X^\top\,J_F(\varphi),
  \label{eq:Js-def}
\end{equation}
where
\[
  J_s = \beta(\mathrm{diag}(s)-ss^\top).
\]
Let
\[
  A:=\mathrm{diag}(s)-ss^\top,
  \qquad
  m := Xs(\xi)=\sum_{j=1}^N s_j(\xi)\,x_j.
\]
Then
\begin{equation}
  XAX^\top
  =
  \sum_{j=1}^N s_j(\xi)\,(x_j-m)(x_j-m)^\top.
  \label{eq:covariance-identity}
\end{equation}
We first control $m-x_i$. Since $\sum_{j\neq i}s_j(\xi)=1-s_i(\xi)\le\varepsilon$,
\[
  m-x_i
  =
  \sum_{j=1}^N s_j(\xi)x_j - x_i
  =
  \sum_{j\neq i}s_j(\xi)x_j + (s_i(\xi)-1)x_i
\]
which simplifies to
\[
  m-x_i = \sum_{j\neq i}s_j(\xi)(x_j-x_i).
\]
Hence, using $\|x_j\|_2\le M$,
\begin{equation}
  \|m-x_i\|_2
  \le
  \sum_{j\neq i}s_j(\xi)\,\|x_j-x_i\|_2
  \le
  2M\sum_{j\neq i}s_j(\xi)
  \le
  2M\varepsilon.
  \label{eq:m-minus-xi}
\end{equation}
Therefore
\[
  \|x_i-m\|_2 \le 2M\varepsilon,
\]
and for $j\neq i$,
\[
  \|x_j-m\|_2
  \le
  \|x_j\|_2+\|m\|_2
  \le
  \|x_j\|_2+\sum_{k=1}^N s_k(\xi)\|x_k\|_2
  \le
  2M.
\]
Applying the triangle inequality to \eqref{eq:covariance-identity},
\begin{align}
  \|XAX^\top\|_2
  &\le
  \sum_{j=1}^N s_j(\xi)\,\|(x_j-m)(x_j-m)^\top\|_2
  \notag\\
  &=
  \sum_{j=1}^N s_j(\xi)\,\|x_j-m\|_2^2
  \notag\\
  &\le
  s_i(\xi)\,(2M\varepsilon)^2
  +
  \sum_{j\neq i}s_j(\xi)\,(2M)^2
  \notag\\
  &\le
  4M^2\varepsilon^2 + 4M^2\varepsilon
  \notag\\
  &\le
  8M^2\varepsilon,
  \label{eq:XAX-bound}
\end{align}
where the last step uses $\varepsilon\in(0,1/2]$. Since $J_s=\beta A$,
\begin{equation}
  \|XJ_sX^\top\|_2
  \le
  8\beta M^2\varepsilon.
  \label{eq:XJsX-bound}
\end{equation}
Combined with
\[
  \left\|
    \left(\frac{\partial h}{\partial\varphi}\right)^{-1}
  \right\|_2 \le \frac1\mu,
  \qquad
  \|R(\varphi)\|_2\le L_{\max},
  \qquad
  \|J_F(\varphi)\|_2\le L_{\max},
\]
from Lemma~\ref{lemma:regularity}, the softmax-dependent contribution
to $\|J_\Phi(\varphi)\|_2$ is bounded by
\begin{equation}
  \frac{8\beta M^2 L_{\max}^2}{\mu}\,\varepsilon.
  \label{eq:softmax-contribution}
\end{equation}
This gives the softmax part of the bound, with
\[
  C_\varepsilon^{(\mathrm{soft})}
  :=
  \frac{8\beta M^2 L_{\max}^2}{\mu}.
\]

\medskip \noindent\textbf{Step~4: Structural term $\Delta R$.}
By the coordinate-wise kernel structure, $\Delta R(\varphi)$ is
diagonal with entries
\[
  \Delta R_{aa}
  =
  f_1''(\varphi_a)\,[Xs]_{2a-1}
  +
  f_2''(\varphi_a)\,[Xs]_{2a}.
\]
Write
\[
  [Xs]_{2a-1} = f_1(\varphi_a) + \eta_{2a-1},
  \qquad
  [Xs]_{2a} = f_2(\varphi_a) + \eta_{2a},
\]
where
\[
  \eta := Xs(F(\varphi)) - F(\varphi).
\]
Since
\[
  \eta
  =
  s_i(\xi)\,(x_i-F(\varphi))
  +
  \sum_{j\neq i} s_j(\xi)\,(x_j-F(\varphi)),
\]
we obtain
\[
  \|\eta\|_2
  \le
  s_i(\xi)\,\|x_i-F(\varphi)\|_2
  +
  \sum_{j\neq i}s_j(\xi)\,\|x_j-F(\varphi)\|_2.
\]
Now $\varphi\in S_i$ and Lemma~\ref{lemma:regularity} imply
\[
  \|x_i-F(\varphi)\|_2
  =
  \|F(\psi_i)-F(\varphi)\|_2
  \le
  L_{\max}\,\|\psi_i-\varphi\|_2
  =
  L_{\max} r,
\]
while
\[
  \sum_{j\neq i}s_j(\xi)
  =
  1-s_i(\xi)
  \le
  \varepsilon,
\]
and for every $j$,
\[
  \|x_j-F(\varphi)\|_2
  \le
  \|x_j\|_2+\|F(\varphi)\|_2
  \le
  2M.
\]
Therefore
\[
  \|\eta\|_2
  \le
  L_{\max}r + 2M\varepsilon,
\]
and hence, componentwise,
\begin{equation}
  |\eta_k|
  \le
  2M\varepsilon + L_{\max}r.
  \label{eq:eta-bound}
\end{equation}

Let
\[
  L_2 := \max\Bigl\{1,\ \sup_U |f_1''|,\ \sup_U |f_2''|\Bigr\}<\infty.
\]
Then
\begin{align}
  \Delta R_{aa}
  &=
  f_1''(\varphi_a)f_1(\varphi_a)
  +
  f_2''(\varphi_a)f_2(\varphi_a)
  +
  f_1''(\varphi_a)\eta_{2a-1}
  +
  f_2''(\varphi_a)\eta_{2a}
  \notag\\
  &=
  \bigl[
    f_1''(\varphi_a)f_1(\varphi_a)
    +
    f_2''(\varphi_a)f_2(\varphi_a)
  \bigr]
  +
  \zeta_a,
  \label{eq:DeltaR-split}
\end{align}
with
\[
  |\zeta_a|
  \le
  2L_2\,(2M\varepsilon+L_{\max}r).
\]
Dividing by $q''(\Phi(\varphi)_a)\ge \mu$ gives
\begin{equation}
  \frac{|\zeta_a|}{q''(\Phi(\varphi)_a)}
  \le
  \frac{4L_2M}{\mu}\,\varepsilon
  +
  \frac{2L_2L_{\max}}{\mu}\,r.
  \label{eq:zeta-divide}
\end{equation}

It remains to control the principal term
\[
  \frac{\left|f_1''(\varphi_a)f_1(\varphi_a)
  +f_2''(\varphi_a)f_2(\varphi_a)\right|}
  {q''(\Phi(\varphi)_a)}.
\]
First, by the definition of $\rho_0$,
\[
  \frac{\left|f_1''(\varphi_a)f_1(\varphi_a)
  +f_2''(\varphi_a)f_2(\varphi_a)\right|}
  {q''(\varphi_a)}
  \le
  \rho_0.
\]
Second, we compare $q''(\varphi_a)$ and $q''(\Phi(\varphi)_a)$.
By~\eqref{eq:Q-family-third}, $q''$ is globally Lipschitz with
constant $L_{q''}$. Also,
\[
  w(\varphi)-h(\varphi)
  =
  R(\varphi)\bigl[Xs(F(\varphi))-F(\varphi)\bigr]
  =
  R(\varphi)\eta,
\]
so
\[
  \|w(\varphi)-h(\varphi)\|_2
  \le
  \|R(\varphi)\|_2\,\|\eta\|_2
  \le
  L_{\max}(2M\varepsilon+L_{\max}r).
\]
Using the global Lipschitz continuity of $h^{-1}$ with constant
$1/\mu$ (Lemma~\ref{lemma:strong-convexity}),
\[
  \|\Phi(\varphi)-\varphi\|_2
  \le
  \frac{L_{\max}}{\mu}\,(2M\varepsilon+L_{\max}r).
\]
Hence
\[
  |q''(\varphi_a)-q''(\Phi(\varphi)_a)|
  \le
  L_{q''}\,|\varphi_a-\Phi(\varphi)_a|
  \le
  \frac{L_{q''}L_{\max}}{\mu}\,(2M\varepsilon+L_{\max}r).
\]
Since $q''(\Phi(\varphi)_a)\ge\mu$, it follows that
\begin{align}
  \frac{q''(\varphi_a)}{q''(\Phi(\varphi)_a)}
  &=
  1+
  \frac{q''(\varphi_a)-q''(\Phi(\varphi)_a)}
       {q''(\Phi(\varphi)_a)}
  \notag\\
  &\le
  1
  +
  \frac{L_{q''}L_{\max}}{\mu^2}\,(2M\varepsilon+L_{\max}r).
  \label{eq:qpp-ratio}
\end{align}
Therefore
\begin{align}
  \frac{\left|f_1''(\varphi_a)f_1(\varphi_a)
  +f_2''(\varphi_a)f_2(\varphi_a)\right|}
  {q''(\Phi(\varphi)_a)}
  &\le
  \rho_0\,
  \frac{q''(\varphi_a)}{q''(\Phi(\varphi)_a)}
  \notag\\
  &\le
  \rho_0
  +
  \frac{2\rho_0 L_{q''}L_{\max}M}{\mu^2}\,\varepsilon
  +
  \frac{\rho_0 L_{q''}L_{\max}^2}{\mu^2}\,r.
  \label{eq:principal-divide}
\end{align}

Combining \eqref{eq:DeltaR-split}, \eqref{eq:zeta-divide}, and
\eqref{eq:principal-divide}, we obtain
\[
  \left|
  \frac{\Delta R_{aa}}{q''(\Phi(\varphi)_a)}
  \right|
  \le
  \rho_0 + C_\varepsilon' \varepsilon + C_r' r
\]
for finite constants $C_\varepsilon',C_r'>0$ depending only on kernel
and regularity parameters.
Since the structural term is diagonal, this yields
\[
  \left\|
    \left(\frac{\partial h}{\partial\varphi}\right)^{-1}
    \Delta R(\varphi)
  \right\|_2
  \le
  \rho_0 + C_\varepsilon' \varepsilon + C_r' r.
\]

\medskip \noindent\textbf{Combining.}
Adding the bounds from Steps~3 and~4 gives
\[
  \|J_\Phi(\varphi)\|_2
  \le
  \rho_0
  +
  \bigl(C_\varepsilon^{(\mathrm{soft})}+C_\varepsilon'\bigr)\varepsilon
  +
  C_r' r.
\]
Using the constants in~\eqref{eq:jacobian-constants},
\[
  \|J_\Phi(\varphi)\|_2
  \le
  \rho_0 + C_\varepsilon\,\varepsilon + C_r\,r,
\]
which is \eqref{eq:J-bound}. Since
$r\le 1/(\beta N M L_{\max})$ for $\varphi\in S_i$, \eqref{eq:J-bound-eps}
follows immediately.
\end{proof}

\begin{lemma}[Contraction on the invariant ball]
\label{lemma:contraction}
Assume the hypotheses of Lemma~\ref{lemma:invariant-ball}. In particular,
$\Delta_i$ satisfies the margin condition
\begin{equation}
  \Delta_i
  >
  \frac{2}{\beta N}
  +
  \frac{1}{\beta}
  \log
  \Bigl(
    \frac{2C_\varepsilon (N-1)}{1-\rho_0}
  \Bigr),
  \label{eq:Delta-cond-strong}
\end{equation}
where $C_\varepsilon>0$ is the $N$-independent constant from
Lemma~\ref{lemma:Jacobian-bound}. Assume also that
\[
  N \ge N_0
  :=
  \left\lfloor
  \frac{2C_r}{(1-\rho_0)\beta M L_{\max}}
  \right\rfloor + 1.
\]
If $\varphi, \psi \in S_i$, then there exists $0 < \rho < 1$ such
that
\begin{equation}
  \|\Phi(\varphi) - \Phi(\psi)\|_2
  \le
  \rho \|\varphi - \psi\|_2.
  \label{eq:contraction}
\end{equation}
In particular, $\Phi$ is a contraction mapping on $S_i$.
\end{lemma}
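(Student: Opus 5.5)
The plan is a mean-value argument. Since $S_i$ is a closed Euclidean ball it is convex, so for $\varphi,\psi\in S_i$ the whole segment $[\psi,\varphi]$ lies in $S_i$. Because $\Phi$ is smooth on all of $\mathbb{R}^d$ (Appendix~\ref{subsec:local-approx}), the integral form of the mean-value theorem gives
\[
  \|\Phi(\varphi)-\Phi(\psi)\|_2
  \le
  \sup_{\tilde\varphi\in[\psi,\varphi]}\|J_\Phi(\tilde\varphi)\|_2\,
  \|\varphi-\psi\|_2 .
\]
It therefore suffices to show $\sup_{\tilde\varphi\in S_i}\|J_\Phi(\tilde\varphi)\|_2\le\rho$ for some $\rho<1$ that does not depend on the pair $\varphi,\psi$.

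To bound the Jacobian I will use Lemma~\ref{lemma:Jacobian-bound} in the form \eqref{eq:J-bound-eps}. This requires a leakage level $\varepsilon\in(0,1/2]$ with $s_i(\tilde\xi)\ge1-\varepsilon$ uniformly on $S_i$, which I obtain as in the proof of Lemma~\ref{lemma:invariant-ball}. For $\tilde\varphi\in S_i$, Lipschitz continuity of $F$ with constant $L_{\max}$ gives $\|F(\tilde\varphi)-x_i\|_2\le 1/(\beta NM)$. Cauchy--Schwarz with $\|x_j\|_2\le M$ then gives $\Delta_i^{\mathrm{eff}}(\tilde\xi)\ge\Delta_i-2/(\beta N)$. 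The standard softmax domination bound yields $1-s_i(\tilde\xi)\le\hat\varepsilon:=(N-1)\exp(-\beta(\Delta_i-2/(\beta N)))$.

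Next I check the side condition and collect terms. The strict margin condition \eqref{eq:Delta-cond-strong} is equivalent to $C_\varepsilon\hat\varepsilon<(1-\rho_0)/2$. Since $C_\varepsilon\ge1$ and $\rho_0\ge0$, this gives $\hat\varepsilon<1/2$, so Lemma~\ref{lemma:Jacobian-bound} applies with $\varepsilon=\hat\varepsilon$. The assumption $N\ge N_0$ gives $C_r/(\beta NML_{\max})<(1-\rho_0)/2$, because $N_0$ is defined via the floor-plus-one, so $N$ strictly exceeds $2C_r/((1-\rho_0)\beta ML_{\max})$. Substituting both bounds,
\[
  \rho:=\rho_0+C_\varepsilon\hat\varepsilon+\frac{C_r}{\beta NML_{\max}}<\rho_0+(1-\rho_0)=1 .
\]
Also $\rho>0$ because every term is nonnegative and $C_\varepsilon\hat\varepsilon>0$. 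This $\rho$ depends only on $i$, $N$ and the kernel constants, not on $\varphi,\psi$, which gives \eqref{eq:contraction}. Combined with the invariance $\Phi(S_i)\subseteq S_i$ from Lemma~\ref{lemma:invariant-ball}, whose hypotheses are assumed, $\Phi$ is a self-map contraction of $S_i$.

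The main obstacle is the uniformity of the leakage bound over the ball: it must hold at every point of the segment, not just at its endpoints. The margin deterioration $2/(\beta N)$ is exactly what the ball radius buys, and that is why the radius of $S_i$ scales as $1/(\beta NML_{\max})$. A secondary point is confirming that the strict inequalities produce a strict $\rho<1$ rather than $\rho\le1$. I will handle this by keeping the margin condition strict and using the floor definition of $N_0$.
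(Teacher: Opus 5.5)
Your proposal is correct and follows the paper's proof essentially step for step: the segment stays in the convex ball, the $2/(\beta N)$ margin deterioration gives a uniform leakage bound, Lemma~\ref{lemma:Jacobian-bound} is applied with $1-\rho_0$ split into halves, and the integral mean-value theorem finishes. Your explicit choice $\rho=\rho_0+C_\varepsilon\hat\varepsilon+C_r/(\beta NML_{\max})$ makes the paper's compactness step unnecessary, and your check that $\hat\varepsilon<1/2$ confirms a hypothesis of Lemma~\ref{lemma:Jacobian-bound} that the paper leaves implicit.
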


\begin{proof}
By Lemma~\ref{lemma:invariant-ball}, $S_i$ is $\Phi$-invariant. Since
$S_i$ is a convex subset of $\mathbb{R}^d$, every point on the line
segment
\[
  \{\lambda \varphi + (1 - \lambda)\psi : \lambda\in[0,1]\}
\]
lies in $S_i$. Let $\tilde{\varphi}$ be any point on this segment, and
set $\tilde{\xi}=F(\tilde{\varphi})$.

By the same argument as in
\eqref{eq:feat-displacement}--\eqref{eq:Delta-eff-4}, we have
\[
  \|\tilde{\xi}-x_i\|_2 \le \frac{1}{\beta N M},
  \qquad
  \Delta_i^{\mathrm{eff}}(\tilde{\xi})
  \ge
  \Delta_i-\frac{2}{\beta N}.
\]
Hence
\[
  s_i(\tilde{\xi})
  \ge
  1-\varepsilon,
  \qquad
  \varepsilon
  :=
  (N-1)\exp\!\left(
    -\beta\Bigl(\Delta_i-\frac{2}{\beta N}\Bigr)
  \right).
\]
Applying Lemma~\ref{lemma:Jacobian-bound} to $\tilde{\varphi}\in S_i$
gives
\begin{equation}
  \|J_\Phi(\tilde{\varphi})\|_2
  \le
  \rho_0 + C_\varepsilon\,\varepsilon + C_r\,\tilde r,
  \qquad
  \tilde r := \|\tilde{\varphi}-\psi_i\|_2.
  \label{eq:J-tilde-bound}
\end{equation}
Since $\tilde{\varphi}\in S_i$,
\[
  \tilde r \le \frac{1}{\beta N M L_{\max}}.
\]

\medskip \noindent\textbf{$\varepsilon$-term.}
By \eqref{eq:Delta-cond-strong},
\[
  \varepsilon
  <
  \frac{1-\rho_0}{2C_\varepsilon},
\]
hence
\begin{equation}
  C_\varepsilon\,\varepsilon < \frac{1-\rho_0}{2}.
  \label{eq:eps-half-gap}
\end{equation}

\medskip \noindent\textbf{$r$-term.}
Since $N\ge N_0$,
\[
  N
  >
  \frac{2C_r}{(1-\rho_0)\beta M L_{\max}},
\]
and therefore
\[
  C_r\,\tilde r
  \le
  \frac{C_r}{\beta N M L_{\max}}
  <
  \frac{1-\rho_0}{2}.
\]
Thus
\begin{equation}
  C_r\,\tilde r < \frac{1-\rho_0}{2}.
  \label{eq:r-half-gap}
\end{equation}

Combining \eqref{eq:J-tilde-bound}, \eqref{eq:eps-half-gap}, and
\eqref{eq:r-half-gap}, we obtain
\[
  \|J_\Phi(\tilde{\varphi})\|_2 < 1
  \qquad
  \forall\,\tilde{\varphi}\in S_i.
\]
Now $J_\Phi$ is continuous on the compact set $S_i$
(Lemma~\ref{lemma:regularity}), so the function
$\tilde{\varphi}\mapsto \|J_\Phi(\tilde{\varphi})\|_2$ attains its
maximum on $S_i$. Since it is pointwise strictly less than $1$, there
exists a uniform constant $0<\rho<1$ such that
\[
  \|J_\Phi(\tilde{\varphi})\|_2 \le \rho
  \qquad
  \forall\,\tilde{\varphi}\in S_i.
\]
Since $\Phi$ is smooth on $\mathbb{R}^d$ and the line segment joining
$\psi$ and $\varphi$ lies entirely in $S_i \subset \mathbb{R}^d$, we may write
\begin{equation}
  \Phi(\varphi)-\Phi(\psi)
  =
  \int_0^1
  J_\Phi\!\bigl(\psi+t(\varphi-\psi)\bigr)\,(\varphi-\psi)\,dt .
  \label{eq:MVT}
\end{equation}
Hence, taking norms and using the uniform Jacobian bound on $S_i$,
\[
  \|\Phi(\varphi)-\Phi(\psi)\|_2
  \le
  \int_0^1
  \bigl\|J_\Phi\!\bigl(\psi+t(\varphi-\psi)\bigr)\bigr\|_2
  \,\|\varphi-\psi\|_2\,dt
  \le
  \rho\,\|\varphi-\psi\|_2,
\]
which proves \eqref{eq:contraction}.

\end{proof}

\subsection{Correct Storage and Separation of Basins}
\label{subsec:correct-storage}

We now combine the previous lemmas to show that each pattern has its own
attracting fixed point and basin of attraction.

\begin{lemma}[Correct storage and disjoint basins]
\label{lemma:correct-storage}
Assume the kernel and regularity conditions above, $N\ge2$, and
\[
  N > N_0^{\mathrm{store}}
  :=
  \max\{N_0,\; N_0^{\mathrm{inv}}\},
\]
where
\[
  N_0
  :=
  \left\lfloor
  \frac{2C_r}{(1-\rho_0)\beta M L_{\max}}
  \right\rfloor + 1,
  \qquad
  N_0^{\mathrm{inv}}
  :=
  \left\lfloor
    \frac{3C_r}{(1-\rho_0)\beta M L_{\max}}
  \right\rfloor + 1.
\]
Assume moreover that
\begin{equation}
  \Delta_{\min}
  :=
  \min_i \Delta_i
  \ge
  \Delta_{\mathrm{store}}(N),
  \label{eq:Delta-min-cond}
\end{equation}
where
\begin{equation}
\Delta_{\mathrm{store}}(N)
:=
\max\!\left\{
\begin{aligned}
  &\frac{2}{\beta N}
  +
  \frac{1}{\beta}
  \log\!\Bigl(
    \frac{3 C_\varepsilon (N-1)}{1-\rho_0}
  \Bigr), \\
  &\frac{1}{\beta}
  \log\!\Bigl(
    \frac{6\beta N M^2 L_{\max}(N-1)}{\mu(1-\rho_0)}
  \Bigr), \\
  &\frac{2}{\beta N}
  +
  \frac{1}{\beta}
  \log\!\Bigl(
    \frac{2C_\varepsilon (N-1)}{1-\rho_0}
  \Bigr)
\end{aligned}
\right\}.
\end{equation}

Then:
\begin{enumerate}
  \item For each $i$, there exists a unique fixed point
  $\varphi_i^\star$ of $\Phi$ in $S_i$, and every phase-space
  trajectory starting in $S_i$ converges to $\varphi_i^\star$.
  When $F$ is injective, $\xi_i^\star := F(\varphi_i^\star)$ is a fixed point
  of $T$, and every feature-space trajectory starting at any
  $\xi = F(\varphi)$ with $\varphi \in S_i$ converges to $\xi_i^\star$.
  \item The balls $S_i$ are pairwise disjoint, i.e.,
  $S_i \cap S_j = \emptyset$ for $i \ne j$. In particular, each stored
  pattern $x_i$ has its own basin of attraction, and the network
  retrieves the correct memory when initialized within $S_i$.
\end{enumerate}
\end{lemma}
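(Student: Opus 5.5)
Part~1 follows from the earlier lemmas and the Banach fixed-point theorem; Part~2 needs a separation argument.

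\textbf{Part 1.} Since $\Delta_{\min}\ge\Delta_{\mathrm{store}}(N)$, every $\Delta_i$ satisfies both branches of \eqref{eq:Delta-cond}. Together with $N\ge N_0^{\mathrm{inv}}$, Lemma~\ref{lemma:invariant-ball} gives $\Phi(S_i)\subseteq S_i$. The third branch of $\Delta_{\mathrm{store}}$ gives \eqref{eq:Delta-cond-strong}, but only as a non-strict inequality. I will close this gap by checking that the first branch dominates: its logarithm argument $3C_\varepsilon(N-1)/(1-\rho_0)$ strictly exceeds $2C_\varepsilon(N-1)/(1-\rho_0)$. So the margin inequality is strict, and with $N\ge N_0$ Lemma~\ref{lemma:contraction} makes $\Phi$ a $\rho$-contraction on $S_i$.

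$S_i$ is a closed Euclidean ball, hence complete. Banach's fixed-point theorem then gives a unique fixed point $\varphi_i^\star\in S_i$ and convergence $\|\varphi_t-\varphi_i^\star\|_2\le\rho^t\|\varphi_0-\varphi_i^\star\|_2$ for every trajectory starting in $S_i$. For the feature-space claim, $\xi_t=F(\varphi_t)$ and $T(F(\varphi))=F(\Phi(\varphi))$, which is well defined because $F$ is injective. Hence $T(\xi_i^\star)=F(\Phi(\varphi_i^\star))=\xi_i^\star$. The Lipschitz bound on $F$ from Lemma~\ref{lemma:regularity} gives $\|\xi_t-\xi_i^\star\|_2\le L_{\max}\rho^t\|\varphi_0-\varphi_i^\star\|_2\to0$.

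\textbf{Part 2 (disjointness).} I expect this to be the main obstacle. The radius is $r_N=1/(\beta NML_{\max})$, so it suffices to show $\|\psi_i-\psi_j\|_2>2r_N$ for $i\neq j$. The plan is to push the margin into phase space. The margin gives $x_i^\top(x_i-x_j)\ge\Delta_{\min}$ and, by symmetry, $x_j^\top(x_j-x_i)\ge\Delta_{\min}$. Adding these yields $\|x_i-x_j\|_2^2\ge2\Delta_{\min}$. Lipschitz continuity of $F$ with constant $L_{\max}$ then gives
\[
\|\psi_i-\psi_j\|_2\ge\frac{\|x_i-x_j\|_2}{L_{\max}}\ge\frac{\sqrt{2\Delta_{\min}}}{L_{\max}}.
\]
It remains to show $\sqrt{2\Delta_{\min}}>2/(\beta NM)$. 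From the first branch, $\Delta_{\min}\ge\tfrac{2}{\beta N}+\tfrac1\beta\log(3C_\varepsilon(N-1)/(1-\rho_0))$. Here $C_\varepsilon\ge1$, $N\ge2$ and $1-\rho_0\le1$, so the logarithm is at least $\log3>0$, and therefore $\Delta_{\min}>2/(\beta N)$. This gives $\sqrt{2\Delta_{\min}}>2/\sqrt{\beta N}$.

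The remaining comparison is $2/\sqrt{\beta N}\ge 2/(\beta NM)$, i.e.\ $\beta NM^2\ge1$. I will try to get this from the second branch, which contains $\beta NM^2$ inside a logarithm; alternatively I will use $M\ge\|x_i\|_2\ge\sqrt{2d}$ together with $N>N_0^{\mathrm{store}}$. If neither gives it cleanly, I will instead use $\Delta_{\min}\ge\tfrac1\beta\log(6\beta NM^2L_{\max}(N-1)/(\mu(1-\rho_0)))$ directly.

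Once the balls are disjoint, uniqueness of the fixed point in each $S_i$ gives distinct attractors $\varphi_i^\star$ in distinct basins. The retrieval statement then follows, because Lemma~\ref{lemma:local-approx} puts $\varphi_i^\star$ within $O(\varepsilon_0)$ of $\psi_i$.
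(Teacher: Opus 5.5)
Your Part~1 is sound. Your remark that the first branch of $\Delta_{\mathrm{store}}(N)$ strictly exceeds the third, so that \eqref{eq:Delta-cond-strong} holds strictly, actually tidies a step the paper passes over.

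The gap is at the end of Part~2. You reduce disjointness to $\beta NM^2\ge 1$ and then leave it open, and none of your fallbacks closes it as stated.
\begin{itemize}
\item The second branch is only a lower bound on $\Delta_{\min}$, and its logarithm may be negative, so it says nothing about the size of $\beta NM^2$.
\item Using the second branch ``directly'' to get $\Delta_{\min}>2/(\beta NM)^2$ fails exactly when $\beta NM^2$ is small: the logarithm tends to $-\infty$ while the target grows.
\item The route through $\|x_i\|_2\ge\sqrt{2d}$ is kernel-specific, whereas the lemma is stated for the whole family. It is not even correct for S-Hop, since $f_1^2+f_2^2=2t^2+\tanh t+2$ dips to about $1.88$.
\item Even granting $M\ge1$, you would still have to extract a lower bound on $\beta NM$ from the explicit $C_r$ inside $N_0^{\mathrm{inv}}$, which you have not done.
\end{itemize}

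The missing observation is a one-line upper bound on the margin. For any $j\ne i$,
\[
\Delta_{\min}\le x_i^\top(x_i-x_j)\le\|x_i\|_2\|x_i-x_j\|_2\le 2M^2 .
\]
Together with $\Delta_{\min}>2/(\beta N)$, this gives $\beta NM^2>1$ at once, and your argument goes through. Equivalently, $\sqrt{2\Delta_{\min}}\ge\Delta_{\min}/M$. So your bound $\|x_i-x_j\|_2\ge\sqrt{2\Delta_{\min}}$, obtained by summing the two directed margins, is never weaker than the paper's.

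The paper avoids the detour by applying Cauchy--Schwarz directly. If $S_i\cap S_j\ne\emptyset$, then $\|x_i-x_j\|_2\le L_{\max}\|\psi_i-\psi_j\|_2\le 2/(\beta NM)$. Hence $\Delta_i\le\|x_i\|_2\|x_i-x_j\|_2\le 2/(\beta N)$, contradicting the first branch, with no condition on $\beta NM^2$ needed. Your step $\sqrt{2\Delta_{\min}}>2/\sqrt{\beta N}$ is where the link between $\Delta_{\min}$ and $M$, which the comparison requires, gets lost.
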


\begin{proof}
If \eqref{eq:Delta-min-cond} holds, then for every $i$ the margin
$\Delta_i$ satisfies both the invariant-ball condition
\eqref{eq:Delta-cond} and the contraction condition
\eqref{eq:Delta-cond-strong}. Since $N>N_0^{\mathrm{store}}$, the
corresponding lower bounds on $N$ also hold. Therefore,
Lemma~\ref{lemma:invariant-ball} gives $\Phi(S_i)\subset S_i$,
and Lemma~\ref{lemma:contraction} establishes contraction on $S_i$.

Because $S_i$ is a closed ball in $\mathbb{R}^d$, it is complete.
Banach's fixed-point theorem therefore yields a unique fixed point
$\varphi_i^\star\in S_i$, and every trajectory starting in $S_i$
converges to it. Composing with the continuous map $F$, the feature-space
iterates $\xi^t=F(\varphi^t)$ converge to
$\xi_i^\star:=F(\varphi_i^\star)$. When $F$ is injective, this point satisfies
\[
  T(\xi_i^\star)
  =
  F(\Phi(\varphi_i^\star))
  =
  F(\varphi_i^\star)
  =
  \xi_i^\star.
\]

To show that the $S_i$ are disjoint, suppose there exists
$\varphi \in S_i \cap S_j$ with $i\ne j$. Then
\[
  \|\psi_i-\psi_j\|_2
  \le
  \|\psi_i-\varphi\|_2+\|\varphi-\psi_j\|_2
  \le
  \frac{2}{\beta N M L_{\max}}.
\]
Applying the Lipschitz bound for $F$ gives
\[
  \|x_i-x_j\|_2
  \le
  L_{\max}\|\psi_i-\psi_j\|_2
  \le
  \frac{2}{\beta N M}.
\]
Hence, by Cauchy--Schwarz and $\|x_i\|_2\le M$,
\[
  \Delta_i
  \le
  x_i^\top(x_i-x_j)
  \le
  \|x_i\|_2\,\|x_i-x_j\|_2
  \le
  \frac{2}{\beta N},
\]
which contradicts \eqref{eq:Delta-min-cond}, since
$\Delta_{\mathrm{store}}(N)>\frac{2}{\beta N}$ for $N\ge2$ and $C_\varepsilon\ge1$.
Therefore $S_i\cap S_j=\emptyset$ for $i\ne j$.
\end{proof}

\subsection{Exponential Storage Capacity}
\label{subsec:capacity}

We apply the deterministic margin condition of
Lemma~\ref{lemma:correct-storage} to memories generated by random phases.
Let $\psi_{i,k}$ be independent and uniformly distributed on $[-1,1]$
over both $i$ and $k$, and set $x_i=F(\psi_i)$. Throughout this
subsection, the scalar kernel functions and $\beta>0$ are fixed as $d$ grows.
For a scalar $u$ uniform on $[-1,1]$, define
\begin{align}
  v &:= \sum_{\ell=1}^{2}\operatorname{Var}\bigl(f_\ell(u)\bigr),
  \label{eq:phase-variance}\\
  B &:= \max_{u,w\in[-1,1]}
  \left|\sum_{\ell=1}^{2}f_\ell(u)
                   \bigl(f_\ell(u)-f_\ell(w)\bigr)\right|.
  \label{eq:phase-increment-bound}
\end{align}
Both constants depend only on the scalar kernel. Smoothness on $[-1,1]$
gives $B<\infty$. The derivative lower bound following
\eqref{eq:rho0-family} ensures that the pair $(f_1,f_2)$ is nonconstant
on this interval, so $v>0$ and hence $B>0$.

\begin{lemma}[Margin concentration for random phases]
\label{lemma:phase-margin}
For the random memories above and any $N\ge2$,
\begin{equation}
  \mathbb{P}\!\left(\Delta_{\min}<\frac{vd}{2}\right)
  \le N(N-1)\exp\!\left(-\frac{dv^2}{8B^2}\right).
  \label{eq:phase-margin-tail}
\end{equation}
\end{lemma}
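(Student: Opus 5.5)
The plan is a union bound over ordered pairs combined with Hoeffding's inequality for each directed gap. For a fixed ordered pair $(i,j)$ with $i\ne j$, write the directed gap as a sum over phase coordinates,
\[
  \Delta_{ij} := x_i^\top(x_i-x_j) = \sum_{k=1}^d Y_{ij,k},
  \qquad
  Y_{ij,k} := \sum_{\ell=1}^{2} f_\ell(\psi_{i,k})\bigl(f_\ell(\psi_{i,k})-f_\ell(\psi_{j,k})\bigr).
\]
Since the phases $\psi_{i,k}$ are independent over both $i$ and $k$, the pairs $(\psi_{i,k},\psi_{j,k})$ are independent across $k$. Hence the $Y_{ij,k}$, $k=1,\dots,d$, are i.i.d. By definition \eqref{eq:phase-increment-bound} of $B$, each satisfies $|Y_{ij,k}|\le B$ almost surely.

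Next I compute the mean. For independent $u,w$ uniform on $[-1,1]$,
\[
  \mathbb{E}\Bigl[\sum_\ell f_\ell(u)^2 - f_\ell(u)f_\ell(w)\Bigr]
  = \sum_\ell \Bigl(\mathbb{E} f_\ell(u)^2 - (\mathbb{E} f_\ell(u))^2\Bigr)
  = v,
\]
by independence and \eqref{eq:phase-variance}. Therefore $\mathbb{E}\Delta_{ij} = vd$. The event $\Delta_{ij} < vd/2$ is a downward deviation of size $vd/2$ from this mean.

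Hoeffding's inequality for $d$ independent variables in $[-B,B]$ (range $2B$) gives
\[
  \mathbb{P}\bigl(\Delta_{ij}-vd\le -vd/2\bigr)
  \le \exp\!\Bigl(-\frac{2(vd/2)^2}{d(2B)^2}\Bigr)
  = \exp\!\Bigl(-\frac{dv^2}{8B^2}\Bigr).
\]
This is exactly the claimed exponent.

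Finally, $\Delta_{\min}=\min_i\min_{j\ne i}\Delta_{ij}$, so the event $\{\Delta_{\min}<vd/2\}$ is contained in the union of the events $\{\Delta_{ij}<vd/2\}$ over the $N(N-1)$ ordered pairs. A union bound then yields \eqref{eq:phase-margin-tail}.

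The main point requiring care is identifying the mean as $vd$: independence across distinct memories makes the cross term factor as $(\mathbb{E} f_\ell)^2$. The fact $v>0$, which follows from the nonconstancy noted before the lemma, is needed only for the bound to be nontrivial, not for its validity. The boundedness $|Y_{ij,k}|\le B$ is immediate because both $\psi_{i,k}$ and $\psi_{j,k}$ lie in $[-1,1]$. No other earlier result is needed.
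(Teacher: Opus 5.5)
Your proposal is correct and follows essentially the same argument as the paper: you split $\Delta_{ij}=x_i^\top(x_i-x_j)$ into the independent coordinate summands $Y_{ij,k}$, which satisfy $|Y_{ij,k}|\le B$ and have mean $v$, then apply Hoeffding with range $2B$ to get the exponent $dv^2/(8B^2)$, and finish with a union bound over the $N(N-1)$ ordered pairs. You also make explicit the arithmetic behind the exponent and the factorization of the cross term by independence, both of which the paper leaves implicit.
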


\begin{proof}
For each ordered pair $i\ne j$, write
\[
  \Delta_{ij}:=x_i^\top(x_i-x_j)
  =\sum_{k=1}^{d}Y_{ij,k},\qquad
  Y_{ij,k}:=\sum_{\ell=1}^{2}f_\ell(\psi_{i,k})
       \bigl(f_\ell(\psi_{i,k})-f_\ell(\psi_{j,k})\bigr).
\]
The summands are independent over $k$, satisfy $|Y_{ij,k}|\le B$,
and have mean
\[
  \mathbb{E}Y_{ij,k}
  =\sum_{\ell=1}^{2}\left[
     \mathbb{E}f_\ell(u)^2-\bigl(\mathbb{E}f_\ell(u)\bigr)^2
   \right]=v.
\]
Hoeffding's inequality~\cite{hoeffding1963probability} therefore gives
$\mathbb{P}(\Delta_{ij}<vd/2)\le\exp(-dv^2/(8B^2))$.
Taking a union bound over the $N(N-1)$ ordered pairs proves the claim.
\end{proof}

\begin{theorem}[Exponential storage capacity]
\label{thm:capacity}
Let $F$ be a fixed coordinate-wise kernel satisfying the conditions in
Appendix~\ref{subsec:kernel-family}, and let the memories be generated
by the random phases above. Fix $\beta>0$, $\delta\in(0,1)$, and
\begin{equation}
  0<\gamma<\min\left\{\frac{v^2}{16B^2},\frac{\beta v}{4}\right\}.
  \label{eq:capacity-rate}
\end{equation}
For all sufficiently large $d$ and every integer
\begin{equation}
  2\le N\le\exp(\gamma d)=\exp(\gamma D/2),
  \label{eq:capacity-bound}
\end{equation}
the margin condition~\eqref{eq:Delta-min-cond} holds with probability
at least $1-\delta$. Each phase-space ball $S_i$ then contains a unique
attracting fixed point of $\Phi$, and the balls are pairwise disjoint.
\end{theorem}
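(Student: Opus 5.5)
The proof combines the margin concentration of Lemma~\ref{lemma:phase-margin} with the deterministic storage criterion of Lemma~\ref{lemma:correct-storage}. It splits into a probabilistic step and a deterministic step.

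\textbf{Probabilistic step.} We show that the event $\{\Delta_{\min}\ge vd/2\}$ has probability at least $1-\delta$ uniformly over $2\le N\le e^{\gamma d}$. By \eqref{eq:phase-margin-tail} and $N(N-1)\le N^2\le e^{2\gamma d}$, the failure probability is at most
\[
\exp\!\bigl(2\gamma d - dv^2/(8B^2)\bigr).
\]
Since $\gamma<v^2/(16B^2)$, the exponent equals $-c d$ with $c:=v^2/(8B^2)-2\gamma>0$. Hence the failure probability is below $\delta$ once $d\ge \log(1/\delta)/c$.

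\textbf{Deterministic step.} On this event it suffices to show $vd/2\ge\Delta_{\mathrm{store}}(N)$ for every admissible $N$ and large $d$. Each of the three terms in $\Delta_{\mathrm{store}}(N)$ has the form $\frac{1}{\beta}\log(\text{const}\cdot N^a)+O(1/N)$, with $a\le 2$. Using $N\le e^{\gamma d}$, each term is at most
\[
\frac{2\gamma d}{\beta}+\frac{1}{\beta}\log K+\frac{1}{\beta},
\]
where $K$ collects $3C_\varepsilon/(1-\rho_0)$, $2C_\varepsilon/(1-\rho_0)$ and $6\beta M^2L_{\max}/(\mu(1-\rho_0))$. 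The requirement $\gamma<\beta v/4$ gives $2\gamma/\beta<v/2$. The linear-in-$d$ parts therefore leave a positive slack $(v/2-2\gamma/\beta)d$, which absorbs the constants for large $d$.

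\textbf{Main obstacle.} The difficulty is that $M$, $L_{\max}$, $L_2$, and hence $C_\varepsilon$ and $C_r$, are not $d$-independent. The norm bound $M$ on $\|x_i\|_2$ and $\|R(\varphi)x_i\|_2$ scales like $m\sqrt d$ (the notation table indicates $M=m\sqrt d$). My plan is to track this dependence explicitly and separate the $d$-dependent pieces from the truly scalar ones:
\begin{itemize}
\item $L_{\max}$, $L_2$, $L_{q''}$, $\mu$ and $\rho_0$ are coordinate-wise and thus $d$-free, because the Jacobians are block-diagonal with $2\times1$ blocks and the scalar derivatives are bounded. Take $U$ to be a fixed neighbourhood of $[-1,1]^d$ in each coordinate.
\item $C_\varepsilon=O(\beta M^2)=O(d)$ and $M^2=O(d)$, so $\log K=O(\log d)$. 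This is still $o(d)$, so the slack argument survives.
\end{itemize}

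\textbf{Threshold on $N$.} The lower bound $N>N_0^{\mathrm{store}}$ involves $C_r/(\beta M L_{\max})$. Here $C_r$ is $d$-free and $M$ grows with $d$, so for large $d$ this threshold is at most $2$. The hypothesis $N\ge2$ then covers it. If a residual finite threshold remains, I would note that the statement requires sufficiently large $d$ and absorb it there.

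\textbf{Conclusion.} With both steps in place, Lemma~\ref{lemma:correct-storage} yields the unique attracting fixed point in each $S_i$ and the pairwise disjointness of the balls.
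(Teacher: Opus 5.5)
Your proposal is correct and follows the paper's proof essentially step for step. It uses the same union bound $N(N-1)e^{-dv^2/(8B^2)}\le e^{-d(v^2/(8B^2)-2\gamma)}$ and the same bookkeeping on a fixed cube (the paper takes $U=[-2,2]^d$), with $M=m\sqrt d$, $C_\varepsilon=O(d)$ and $C_r=O(1)$, which gives $\Delta_{\mathrm{store}}(N)\le\frac{2}{\beta}\log N+\frac{1}{\beta}\log d+C_0<\frac{v}{2}d$ via $\gamma<\beta v/4$. The one remark to tighten is the threshold step: since $C_r/(\beta M L_{\max})\to0$, both $N_0$ and $N_0^{\mathrm{inv}}$ equal exactly $1$ for large $d$, as the paper states, so $N\ge2$ gives $N>N_0^{\mathrm{store}}$; ``at most $2$'' would not give the strict inequality at $N=2$, and no further absorption into large $d$ is needed.
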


\begin{proof}
\medskip\noindent\textbf{Uniform regularity constants.}
Use the compact phase region $U=[-2,2]^d$, which contains all sampled
memories and their balls $S_i$ for sufficiently large $d$.
The coordinate-wise structure gives dimension-independent bounds on
$J_F$ and the second derivatives of its scalar components on this region.
Choose $L_{\max}\ge1$ to dominate these bounds, and choose a fixed
$m\ge1$ large enough that
\[
  \|F(\varphi)\|_2\le m\sqrt d,\qquad
  \|R(\varphi)x_i\|_2\le m\sqrt d
  \quad (\varphi\in U).
\]
Thus $M=m\sqrt d$ satisfies~\eqref{eq:M-bound}. The constants in
Lemma~\ref{lemma:Jacobian-bound} obey
$C_\varepsilon=O(d)$ and $C_r=O(1)$, with constants independent of
$N$ and $d$. We may take $C_\varepsilon\ge1$ as specified there.
The thresholds $N_0$ and $N_0^{\mathrm{inv}}$ are consequently equal
to $1$ for all sufficiently large $d$.

\medskip\noindent\textbf{Deterministic threshold.}
Substituting these bounds into the full definition of
$\Delta_{\mathrm{store}}(N)$ in Lemma~\ref{lemma:correct-storage}
shows that there is a constant $C_0>0$, independent of $N$ and $d$, such that
\begin{equation}
  \Delta_{\mathrm{store}}(N)
  \le \frac{2}{\beta}\log N
       +\frac{1}{\beta}\log d+C_0
  \qquad (N\ge2).
  \label{eq:correct-storage-condition}
\end{equation}
Indeed, the first and third logarithms in that definition are at most
$\log N+\log d$ plus a constant, and the second is at most
$2\log N+\log d$ plus a constant. The remaining term
$2/(\beta N)$ is bounded by $1/\beta$.
If $N\le\exp(\gamma d)$, then
\[
  \Delta_{\mathrm{store}}(N)
  \le\frac{2\gamma}{\beta}d+\frac{1}{\beta}\log d+C_0
  <\frac{v}{2}d
\]
for all sufficiently large $d$, since $\gamma<\beta v/4$.

\medskip\noindent\textbf{Probability and storage.}
By Lemma~\ref{lemma:phase-margin}, the failure probability is at most
\[
  N(N-1)e^{-dv^2/(8B^2)}
  \le\exp\!\left[-d\left(\frac{v^2}{8B^2}-2\gamma\right)\right],
\]
which is at most $\delta$ for sufficiently large $d$ by
\eqref{eq:capacity-rate}. On the complementary event,
$\Delta_{\min}\ge vd/2>\Delta_{\mathrm{store}}(N)$ and
$N>N_0^{\mathrm{store}}$. Lemma~\ref{lemma:correct-storage} gives
the asserted fixed points, convergence, and disjointness.
\end{proof}

Theorem~\ref{thm:capacity} establishes exponential-order storage for
memories in the image of $F$. The constants $v$ and $B$ describe the
geometry induced by the kernel on the stated phase distribution.
Section~\ref{sec:results} evaluates noisy retrieval on natural images.

\subsection{Theoretical Comparison of P-Hop and S-Hop Instantiations}
\label{subsec:phop-shop-comparison}

Both P-Hop and S-Hop are concrete instantiations of the general kernel
family specified in Section~\ref{subsec:surrogate-energy} and
verified in this appendix.  They share the same feature-space dimension
$D = 2d$, the same surrogate energy decomposition $E = E_1 - E_2$, the
same CCCP structural identity (Lemma~\ref{lemma:cccp-identity} and
its P-Hop counterpart, both with constant $C = d$), and the same
phase-space update form $\Phi$ in \eqref{eq:Phi-def}.
The phase-space conclusions of
Lemmas~\ref{lemma:local-approx}--\ref{lemma:correct-storage} and
Theorem~\ref{thm:capacity} apply to both kernels under their stated
conditions. In addition to the potential bounds, they use the absolute
structural bound~\eqref{eq:rho0-family} verified below.
The two kernels have different scalar features, potentials, and
regularity constants.  We now present a systematic comparison.

\medskip \noindent\textbf{Kernel definitions.} The two maps $F:\mathbb{R}^d\to\mathbb{R}^{2d}$ are defined
component-wise for each $k=1,\dots,d$:
\begin{equation}
\renewcommand{\arraystretch}{1.6}
\begin{array}{lll}
\toprule
 & \textbf{P-Hop} & \textbf{S-Hop} \\
\midrule
f_1(\varphi_k)
  & \sqrt{\varphi_k^2+\sin\varphi_k+0.5}
  & \sqrt{\varphi_k^2+\tanh\varphi_k+1} \\
f_2(\varphi_k)
  & \sqrt{\varphi_k^2+\cos\varphi_k+0.5}
  & \sqrt{\varphi_k^2+1} \\
\bottomrule
\end{array}
\label{eq:kernel-comparison-table}
\end{equation}

\medskip \noindent\textbf{Scalar potential and local field.} Both kernels yield a separable phase potential
$Q(\varphi)=\sum_k q(\varphi_k)$ with local field $h=q'$:
\begin{equation}
\renewcommand{\arraystretch}{1.6}
\begin{array}{lll}
\toprule
 & \textbf{P-Hop} & \textbf{S-Hop} \\
\midrule
q(t)
  & t^2+\tfrac{1}{2}(\sin t+\cos t+1)
  & t^2+\tfrac{1}{2}\tanh t+1 \\[2pt]
h(t)=q'(t)
  & 2t+\tfrac{1}{2}(\cos t-\sin t)
  & 2t+\tfrac{1}{2}\operatorname{sech}^{2}(t) \\[2pt]
q''(t)
  & 2-\tfrac{1}{2}(\cos t+\sin t)
  & 2-\tanh(t)\operatorname{sech}^{2}(t) \\
\bottomrule
\end{array}
\label{eq:potential-comparison}
\end{equation}

\medskip \noindent\textbf{Regularity constants and condition number.} The global Hessian bounds $\mu \le q''(t) \le L_Q$ are determined by
bounding the perturbation term $\|q''(t) - 2\|_\infty$:
\begin{itemize}
  \item \textbf{P-Hop (tight bound):}
  $\tfrac{1}{2}(\cos t+\sin t) = \tfrac{\sqrt{2}}{2}\sin(t+\tfrac{\pi}{4})$,
  so
  $\bigl\|\tfrac{1}{2}(\cos t+\sin t)\bigr\|_\infty
  = \frac{\sqrt{2}}{2} \approx 0.707.$
  \item \textbf{S-Hop (tight bound):}
  $\bigl\|\tanh(t)\operatorname{sech}^{2}(t)\bigr\|_\infty
  = \frac{2}{3\sqrt{3}} = \frac{2\sqrt{3}}{9} \approx 0.385$
  (attained at $\tanh^2 t = 1/3$; see Lemma~\ref{lemma:strong-convexity}).
\end{itemize}

\noindent
\textbf{Remark.}
The original P-Hop analysis uses the looser elementwise bound
$\cos t + \sin t \in [-2, 2]$, yielding
$\mu_P^{\mathrm{loose}}=1$ and $L_{Q,P}^{\mathrm{loose}}=3$
($\kappa_P^{\mathrm{loose}}=3$).
This is sufficient for all theoretical conclusions but overestimates
$\kappa$.
For a fair comparison with S-Hop, we give tight bounds for both kernels.

The resulting constants are:
\begin{equation*}
\renewcommand{\arraystretch}{1.4}
\begin{array}{lcccc}
\toprule
 & \|q''-2\|_\infty & \mu & L_Q & \kappa := L_Q/\mu \\
\midrule
\textbf{P-Hop (tight)}
  & \frac{\sqrt{2}}{2} \approx 0.707
  & 2 - \frac{\sqrt{2}}{2} \approx 1.293
  & 2 + \frac{\sqrt{2}}{2} \approx 2.707
  & \approx 2.09 \\[4pt]
\textbf{P-Hop (loose, as stated)}
  & 1
  & 1
  & 3
  & 3.00 \\[4pt]
\textbf{S-Hop (tight)}
  & \frac{2}{3\sqrt{3}} \approx 0.385
  & 2 - \frac{2}{3\sqrt{3}} \approx 1.615
  & 2 + \frac{2}{3\sqrt{3}} \approx 2.385
  & \approx 1.477 \\
\bottomrule
\end{array}
\label{eq:constants-comparison}
\end{equation*}

The condition number $\kappa=L_Q/\mu$ describes the curvature of the local potential and bounds the inverse-field sensitivity through $1/\mu$. The complete contraction bound combines the embedding, Jacobian, and memory-margin constants.

\medskip\noindent\textbf{Structural contraction bound.}
For S-Hop, Lemma~\ref{lemma:Jacobian-bound}, Step~1, proves $\rho_0<1$.
For P-Hop, use the identity
\[
 \frac{f_1f_1''+f_2f_2''}{q''}
 =1-\frac{(f_1')^2+(f_2')^2}{q''}.
\]
For $|t|\ge10$, the explicit derivatives and radicand bounds give
\[
 \frac{2(|t|-1/2)^2}{t^2+3/2}
 \le (f_1'(t))^2+(f_2'(t))^2
 \le \frac{2(|t|+1/2)^2}{t^2-1/2}.
\]
Together with $2-\sqrt2/2\le q''\le2+\sqrt2/2$, these bounds place
the structural ratio in $[-0.715,0.344]$ on both tails.
On $[-10,10]$, outward-rounded interval evaluation on $20{,}000$
intervals of width $10^{-3}$ encloses the ratio in $[-0.53,0.983]$.
The supplied \texttt{verify\_theory.py} reproduces this enclosure
using interval arithmetic. Thus P-Hop also satisfies $\rho_0\le0.983<1$.

\medskip\noindent\textbf{Scale and asymptotic order.}
Theorem~\ref{thm:capacity} permits any exponent satisfying
\eqref{eq:capacity-rate}. The ratio $v^2/B^2$ controls the probability
of memory separation, while $\beta v$ controls the deterministic storage
threshold. Multiplying all scalar features by a positive constant multiplies
both $v$ and $B$ by its square, preserving the former ratio and scaling
$\beta v$ accordingly. The regularity constants enter the dimension
threshold through~\eqref{eq:correct-storage-condition}.
The values $\kappa_P\approx2.09$ and $\kappa_S\approx1.477$ describe
the local-potential curvature; Figures~\ref{fig:appendix_phop_capacity}
and~\ref{fig:appendix_phop_curves} compare finite-dimensional retrieval.

\medskip\noindent\textbf{Radicands and descent.}
The stated P-Hop radicand $t^2+\sin t+0.5$ has a positive minimum of approximately $0.268$, while $t^2+\cos t+0.5\geq1.5$. A clamp at $10^{-8}$ is therefore inactive for these formulas.
Both kernels have separable strongly convex local potentials and satisfy the local convergence and storage results under the stated assumptions. For S-Hop, Theorem~\ref{thm:energy-descent} additionally establishes strict CCCP monotone descent.

\FloatBarrier

\section{Additional Numerical Results}
\label{sec:appendix-experiment}

This appendix reports supplementary experiments that complement the main results
in Section~\ref{sec:results}.
We focus on  (i) cross-dataset verification on MNIST,
(ii) framework-level validation through a second admissible kernel instance,
(iii) consistency under the aligned U-Hop SSD metric,
(iv) an alternative evaluation protocol inherited from the U-Hop
codebase, (v) comparison with a centered dot-product control,
(vi) extended ablations and diagnostics,
(vii) a plug-in gating wrapper for S-Hop,
(viii) additional generalization evidence on an MIL
benchmark, and (ix) downstream CIFAR-10 classification results.
Unless otherwise stated, we follow the same MSE-based retrieval criterion as in
the main text, with successful retrieval defined by $\mathrm{MSE}<0.1$.

\subsection{MNIST Retrieval and Capacity}
\label{subsec:appendix-mnist}

This subsection extends the CIFAR-10 and ImageNet capacity comparisons
to lower-dimensional MNIST images.
We repeat the capacity protocol of Section~\ref{sec:results} on MNIST
($d=784$) under identical settings: one-step retrieval with $\beta=1$,
additive Gaussian noise of standard deviation $\sigma=0.3$, and the
$\mathrm{MSE}<0.1$ perfect-retrieval criterion.
The four methods are S-Hop, U-Hop (\emph{max}), MHN,
and the invalid-kernel baseline C-Hop.
This cross-dataset check is the most direct test of whether the observed
advantage generalizes beyond a single image resolution.

\begin{figure}[htbp]
    \centering
    \includegraphics[width=0.48\linewidth]{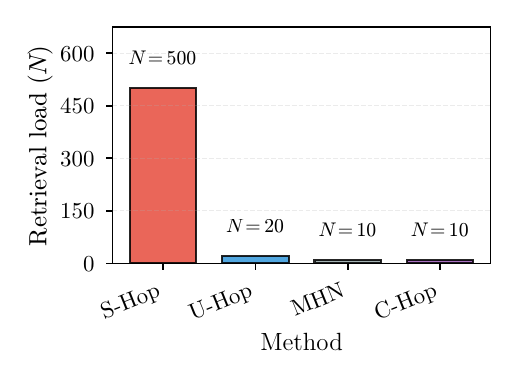}
    \caption{MNIST retrieval-load summary with U-Hop (\emph{max}; $d=784$). S-Hop attains
    $N_c\geq500$, a passing sweep endpoint matching that reached on CIFAR-10.}
    \label{fig:appendix_mnist_capacity}
\end{figure}

S-Hop reaches $100\%$ success at $N=500$ on MNIST. U-Hop (\emph{max}) first falls below $90\%$ at $N=20$, then reaches $90\%$ at $30$ and $40$; its largest passing load is therefore $40$. The public \emph{avg} objective instead passes through $500$, with $92.93\pm1.17\%$ success. Figure~\ref{fig:appendix_mnist_capacity} retains the original first-failure annotation of $20$ for U-Hop; Table~\ref{tab:main} reports the largest passing loads.

\begin{figure}[htbp]
    \centering
    \begin{tikzpicture}
        \node[anchor=north west, inner sep=0] (imga) at (0,0)
            {\includegraphics[width=0.48\linewidth]{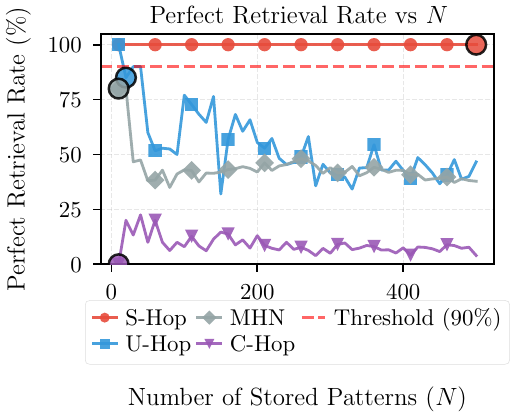}};
        \node[anchor=north west, inner sep=0] (imgb)
            at ([xshift=4pt]imga.north east)
            {\includegraphics[width=0.47\linewidth]{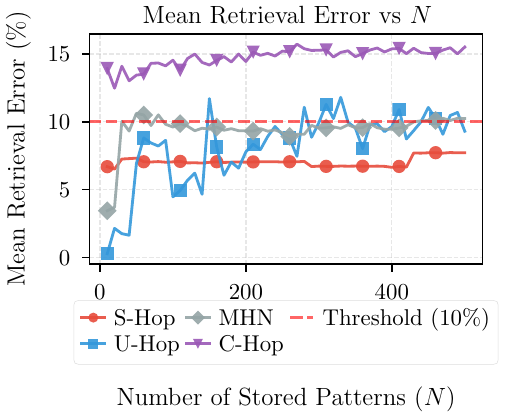}};
        \node[font=\small] at ([xshift=1pt, yshift=1pt]imga.north west) {(a)};
        \node[font=\small] at ([xshift=1pt, yshift=1pt]imgb.north west) {(b)};
    \end{tikzpicture}

    \begin{subfigure}[b]{0.48\linewidth}
        \phantomsubcaption\label{fig:mnist_curves_a}
    \end{subfigure}\hfill
    \begin{subfigure}[b]{0.48\linewidth}
        \phantomsubcaption\label{fig:mnist_curves_b}
    \end{subfigure}
    \caption{\looseness=-1 MNIST retrieval curves with U-Hop (\emph{max}; $d=784$).
    (a)~Perfect retrieval rate under the $\mathrm{MSE}<0.1$ criterion.
    (b)~Mean retrieval error versus the number of stored patterns.
    S-Hop sustains above-threshold retrieval and low error across the full
    range, while baselines saturate early.}
    \label{fig:appendix_mnist_curves}
\end{figure}

Figure~\ref{fig:appendix_mnist_curves} further confirms that S-Hop maintains
consistently higher perfect retrieval rates and lower mean retrieval error
across the full capacity range on MNIST.
The qualitative separation between S-Hop and all baselines is essentially
similar to that on CIFAR-10, supporting retrieval gains across these two datasets.

\FloatBarrier

\subsection{Framework-Level Validation: A Second Kernel Instance}
\label{subsec:appendix-phop}

\looseness=-1 The theoretical framework of Section~\ref{sec:method} and
Appendix~\ref{sec:appendix-theory} defines an entire family of admissible
kernels, of which S-Hop is only one representative.
To examine the behavior of the framework beyond the
tanh-based embedding of Eq.~\eqref{eq:sqrt-kernel}, we evaluate a second
admissible kernel, P-Hop, with coordinate map
\begin{equation}
F_P(\varphi_k) \;=\; \Bigl[\sqrt{\varphi_k^2 + \sin\varphi_k + 0.5},\; \sqrt{\varphi_k^2 + \cos\varphi_k + 0.5}\Bigr].
\label{eq:phop-kernel}
\end{equation}
Both kernels share the same feature-space dimension $D=2d$, the same CCCP
structural identity with constant $C=d$ (Lemma~\ref{lemma:cccp-identity}), and
the same conditional local-convergence and storage analysis (Lemmas~\ref{lemma:local-approx}--\ref{lemma:correct-storage}
and Theorem~\ref{thm:capacity}); their detailed theoretical comparison is given
in Appendix~\ref{subsec:phop-shop-comparison}.

\begin{figure}[htbp]
    \centering
    \includegraphics[width=0.48\linewidth]{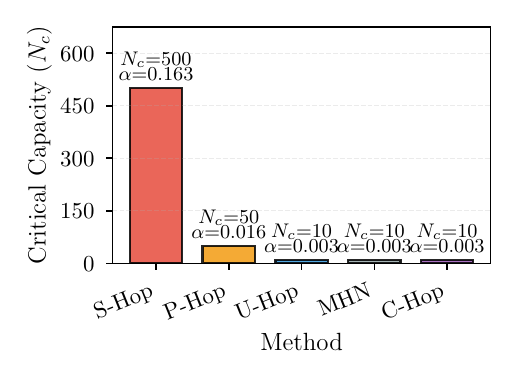}
    \caption{Second admissible kernel on CIFAR-10: P-Hop and S-Hop retrieval loads. P-Hop is plotted at $N=50$, the plotted threshold-crossing load, while S-Hop reaches the $N=500$ endpoint. The bars mark a passing endpoint for S-Hop and a first failure for P-Hop.}
    \label{fig:appendix_phop_capacity}
\end{figure}

\looseness=-1 Figure~\ref{fig:appendix_phop_capacity} reports the retrieval-load
comparison with P-Hop added as a fifth method, under the protocol
of Figure~\ref{fig:cifar-original}.
Appendix~\ref{subsec:phop-shop-comparison} compares the two kernels' local-potential geometry and capacity bounds.

\begin{figure}[htbp]
    \centering
    \begin{tikzpicture}
        \node[anchor=north west, inner sep=0] (imga) at (0,0)
            {\includegraphics[width=0.48\linewidth]{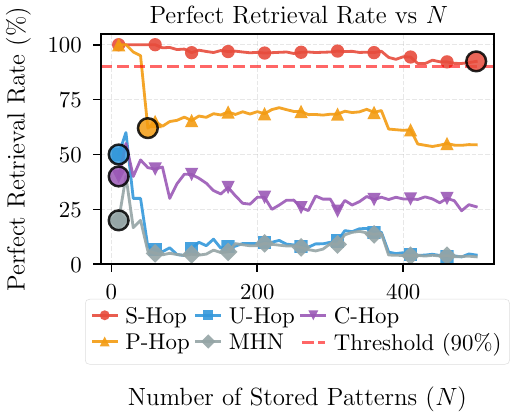}};
        \node[anchor=north west, inner sep=0] (imgb)
            at ([xshift=4pt]imga.north east)
            {\includegraphics[width=0.47\linewidth]{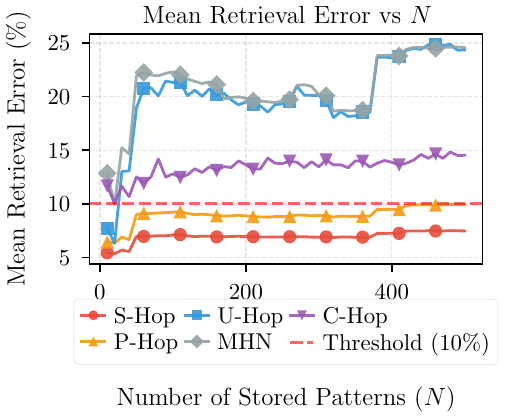}};
        \node[font=\small] at ([xshift=1pt, yshift=1pt]imga.north west) {(a)};
        \node[font=\small] at ([xshift=1pt, yshift=1pt]imgb.north west) {(b)};
    \end{tikzpicture}

    \begin{subfigure}[b]{0.48\linewidth}
        \phantomsubcaption\label{fig:phop_curves_a}
    \end{subfigure}\hfill
    \begin{subfigure}[b]{0.48\linewidth}
        \phantomsubcaption\label{fig:phop_curves_b}
    \end{subfigure}
    \caption{Second admissible kernel on CIFAR-10: P-Hop and S-Hop retrieval curves.
    (a)~Perfect retrieval rate versus the number of stored patterns.
    (b)~Mean retrieval error versus the number of stored patterns.
    P-Hop improves on the baselines, while S-Hop achieves higher retrieval
    success and lower error over the tested loads.}
    \label{fig:appendix_phop_curves}
\end{figure}

Figure~\ref{fig:appendix_phop_curves} compares the two retrieval maps: S-Hop sustains higher retrieval success and lower error over the tested CIFAR-10 loads.

\FloatBarrier

\subsection{Robustness Under the U-Hop SSD Metric}
\label{subsec:appendix-ssd}

The main-text critical capacity is defined via the mean-squared-error criterion
$\mathrm{MSE}<0.1$, which differs from the SSD
metric reported in the U-Hop public codebase~\cite{pmlr-v235-wu24i} and in
related prior work.
To check consistency under aligned thresholds, we
re-evaluate all methods under the U-Hop SSD protocol, defined as
\begin{equation}
\mathrm{SSD}(\psi, \hat\psi)
\;=\; \sum_{k=1}^{d}\bigl(\mathrm{clamp}(\psi_k, 0, 1) - \mathrm{clamp}(\hat\psi_k, 0, 1)\bigr)^2,
\label{eq:ssd-metric}
\end{equation}
where $\psi$ is the target image, $\hat\psi$ the retrieved output, and $k$ the pixel-coordinate index. We evaluate this metric on CIFAR-10 ($d=3072$) with additive Gaussian noise $\sigma=0.3$ and
the same one-step retrieval dynamics as the main protocol.
Both metrics clamp targets and outputs to $[0,1]$. We define SSD-based perfect retrieval as
$\mathrm{SSD}<0.1\cdot d \approx 307$, which by construction is equivalent at
the pattern level to $\mathrm{MSE}<0.1$.
We note that the U-Hop evaluation~\cite{pmlr-v235-wu24i} reports only
mean-error curves without a defined critical capacity; the load-level $90\%$ criterion is introduced here. The main finding of this appendix,
summarized in Figure~\ref{fig:appendix_ssd_capacity}, is consistency with the MSE-based sweep in Section~\ref{sec:results}. The plotted loads follow the original endpoint/crossing convention.

\begin{figure}[htbp]
    \centering
    \includegraphics[width=0.48\linewidth]{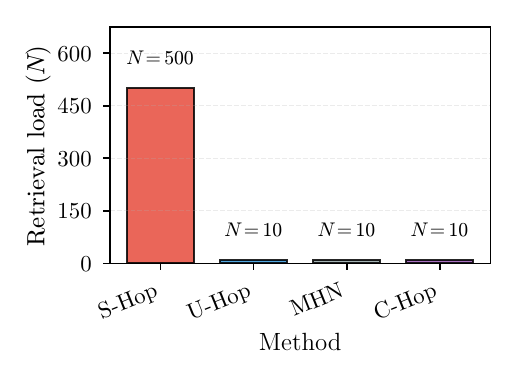}
    \caption{Aligned SSD criterion on CIFAR-10: retrieval-load summary. S-Hop passes through $N=500$ under the aligned threshold $\mathrm{SSD}<0.1d$. Baseline bars mark threshold-crossing loads, following the original plotting convention.}
    \label{fig:appendix_ssd_capacity}
\end{figure}

\begin{figure}[htbp]
    \centering
    \begin{tikzpicture}
        \node[anchor=north west, inner sep=0] (imga) at (0,0)
            {\includegraphics[width=0.47\linewidth]{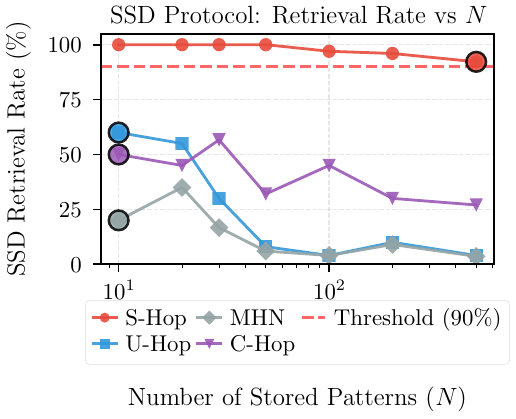}};
        \node[anchor=north west, inner sep=0] (imgb)
            at ([xshift=4pt]imga.north east)
            {\includegraphics[width=0.5\linewidth]{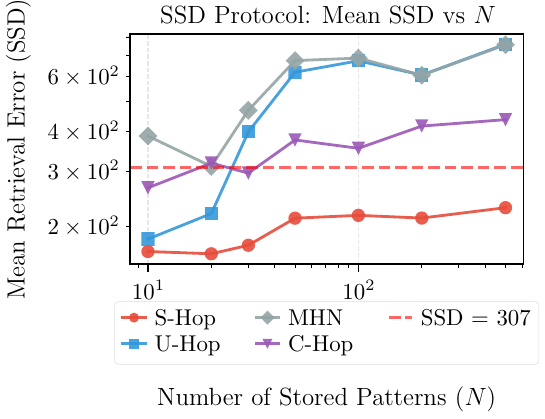}};
        \node[font=\small] at ([xshift=1pt, yshift=3pt]imga.north west) {(a)};
        \node[font=\small] at ([xshift=1pt, yshift=3pt]imgb.north west) {(b)};
    \end{tikzpicture}

    \begin{subfigure}[b]{0.48\linewidth}
        \phantomsubcaption\label{fig:ssd_curves_a}
    \end{subfigure}\hfill
    \begin{subfigure}[b]{0.48\linewidth}
        \phantomsubcaption\label{fig:ssd_curves_b}
    \end{subfigure}
    \caption{Aligned SSD criterion on CIFAR-10: retrieval-rate and error curves.
    (a)~SSD-based perfect retrieval rate versus the number of stored patterns
    under the aligned threshold $\mathrm{SSD}<0.1\cdot d \approx 307$.
    (b)~Mean retrieval SSD versus the number of stored patterns.
    S-Hop remains below the threshold across the full range
    $N\in[10,500]$, while all baselines cross it by $N\approx 30$.}
    \label{fig:appendix_ssd_curves}
\end{figure}

Figure~\ref{fig:appendix_ssd_curves} further illustrates that S-Hop achieves
the lowest mean SSD at every $N$ in the tested range, whereas U-Hop, MHN, and
C-Hop all cross the aligned threshold well before the end of the range.
The qualitative separation mirrors that of the main MSE protocol, providing
a check of the aligned MSE and SSD implementations. Other retrieval criteria are evaluated in Appendix~\ref{subsec:cr-metrics}.

\FloatBarrier

\subsection{Alternative U-Hop Evaluation Protocol (Avg vs. Max)}
\label{subsec:appendix-avg-max}
For each memory set, U-Hop trains its Stage-I projection for $100$ epochs and then fixes it for all queries. With an identity projection, its linear-kernel retrieval reduces to MHN. The alternative \emph{avg} experiment applies PCA spectral denoising ($\sigma^2_{\mathrm{PCA}}=0.01$) to all methods.
The legacy \emph{avg} implementation averages $\log\exp(-t\|u_i-u_j\|^2)$ over memory pairs with $t=2$ and appears in Table~\ref{tab:ablation}. The public \emph{avg} objective instead takes the logarithm after averaging the exponentials; its primary comparison is given in Appendix~\ref{subsec:cr-public-avg}.

\begin{figure}[htbp]
    \centering
    \includegraphics[width=0.48\linewidth]{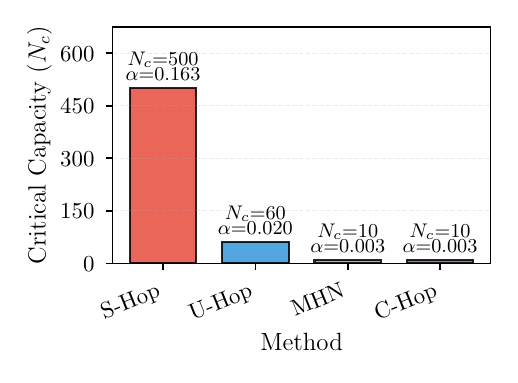}
    \caption{Legacy pairwise-average U-Hop protocol on CIFAR-10: retrieval-load summary. With MSE${}<0.1$, S-Hop passes through $500$; U-Hop fails at the plotted load $60$ ($81.7\%$ success).}
    \label{fig:appendix_avg_capacity}
\end{figure}

Figure~\ref{fig:appendix_avg_capacity} shows that S-Hop continues to achieve
the highest critical capacity under the \emph{avg} protocol.
Under this protocol, U-Hop achieves $81.7\%$ success at $N=60$. S-Hop remains above threshold through the $N=500$ endpoint.

\begin{figure}[htbp]
    \centering
    \begin{tikzpicture}
        \node[anchor=north west, inner sep=0] (imga) at (0,0)
            {\includegraphics[width=0.48\linewidth]{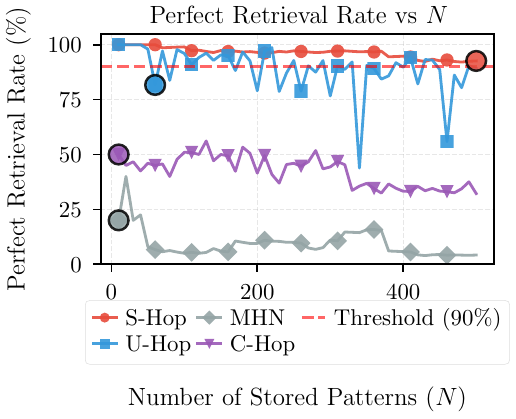}};
        \node[anchor=north west, inner sep=0] (imgb) 
            at ([xshift=4pt]imga.north east)
            {\includegraphics[width=0.47\linewidth]{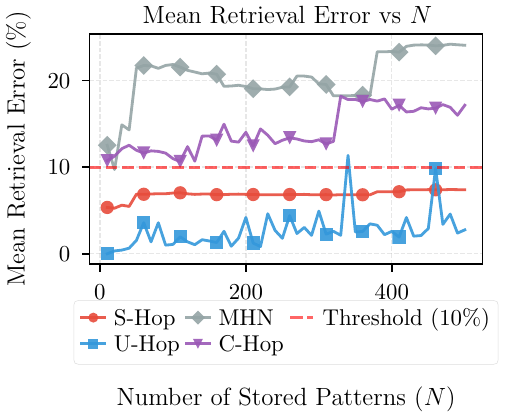}};
        \node[font=\small] at ([xshift=1pt, yshift=1pt]imga.north west) {(a)};
        \node[font=\small] at ([xshift=1pt, yshift=1pt]imgb.north west) {(b)};
    \end{tikzpicture}

    \begin{subfigure}[b]{0.48\linewidth}
        \phantomsubcaption\label{fig:avg_curves_a}
    \end{subfigure}\hfill
    \begin{subfigure}[b]{0.48\linewidth}
        \phantomsubcaption\label{fig:avg_curves_b}
    \end{subfigure}
    \caption{Legacy pairwise-average U-Hop protocol on CIFAR-10: retrieval curves.
    (a)~Perfect retrieval rate versus the number of stored patterns.
    S-Hop maintains above-threshold retrieval up to $N \approx 500$,
    while U-Hop is below $90\%$ at $N=60$.
    (b)~Mean retrieval error versus the number of stored patterns.
    U-Hop attains lower error at small $N$ due to its learned separation
    loss that uniformizes memory distribution under low load~\cite{pmlr-v235-wu24i}.
    Its error rises steeply as $N$ grows;
    S-Hop sustains consistently low error across the full capacity range,
    retaining its advantage at larger loads.}
    \label{fig:appendix_avg_curves}
\end{figure}

Figure~\ref{fig:appendix_avg_curves} further confirms that S-Hop maintains
above-threshold retrieval through the full tested range and lower error at larger loads.
The qualitative separation between S-Hop and all baseline variants closely
matches that observed under the \emph{max} protocol, supporting the main empirical conclusions.

\FloatBarrier

\subsection{Comparison with a Centered Dot-Product Control}
\label{subsec:appendix-orthogonality}

Recent distance-overlap dense associative memory variants~\cite{hoover2024dense, hoover2025epanechnikov}
achieve capacity gains through an energy of the form
\begin{equation}
E_{\text{H}}(x) \;=\; -\frac{1}{\beta}\log\sum_{j=1}^{N}\exp\!\left(-\tfrac{\beta}{2}\,\|x - \xi_j\|^2\right),
\label{eq:hoover-energy}
\end{equation}
where $x$ is the query and $\xi_j$ are memory patterns. The per-pattern norm penalty $-\tfrac{\beta}{2}\|\xi_j\|^2$ reshapes basin
volumes in a manner structurally distinct from the dot-product framework
developed in this work.
Our phase kernel modifies the feature embedding while the construction
of~\cite{hoover2024dense} modifies the overlap form itself, so the two design
axes are orthogonal: any admissible $F$ from our family can in principle be
composed with a distance-based overlap, and vice versa.
Here we compare the centered dot-product variant H-Hop$^{-}$ in Figures~\ref{fig:appendix_orthogonal_capacity}--\ref{fig:appendix_orthogonal_curves}.

\begin{figure}[htbp]
    \centering
    \includegraphics[width=0.48\linewidth]{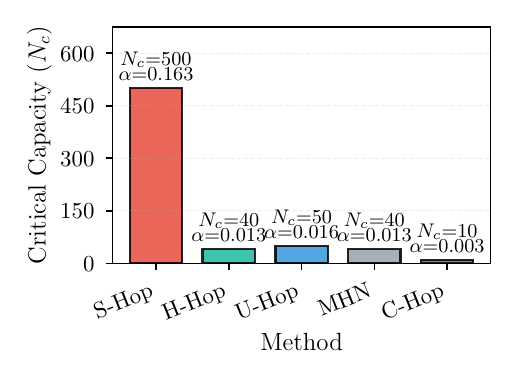}
    \caption{Centered dot-product control on CIFAR-10: H-Hop$^{-}$ and S-Hop retrieval loads. All methods use pattern-space centering $\tilde{\xi}_j = \xi_j - \bar{\xi}$,
    $\tilde{x} = x - \bar{\xi}$, where $\bar{\xi}=N^{-1}\sum_j\xi_j$.
    H-Hop$^{-}$ denotes the dot-product degenerate variant
    of~\cite{hoover2024dense}, obtained by removing the $\|\xi_j\|^2$ penalty
    from Eq.~\eqref{eq:hoover-energy} while retaining the pattern-space
    centering.
    S-Hop passes through $N=500$; the H-Hop$^{-}$ bar marks $N=40$ under the original plotting convention.}
    \label{fig:appendix_orthogonal_capacity}
\end{figure}

Figure~\ref{fig:appendix_orthogonal_capacity} compares H-Hop$^{-}$ with the four main methods under a centered one-step retrieval protocol. S-Hop supports larger tested loads than this dot-product variant. Figure~\ref{fig:appendix_orthogonal_curves} gives the corresponding retrieval and error curves.

\begin{figure}[htbp]
    \centering
    \begin{tikzpicture}
        \node[anchor=north west, inner sep=0] (imga) at (0,0)
            {\includegraphics[width=0.48\linewidth]{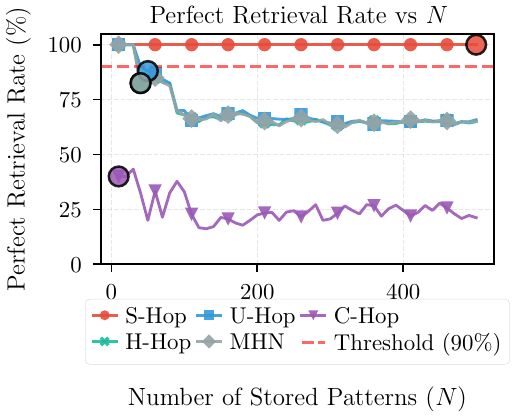}};
        \node[anchor=north west, inner sep=0] (imgb)
            at ([xshift=4pt]imga.north east)
            {\includegraphics[width=0.47\linewidth]{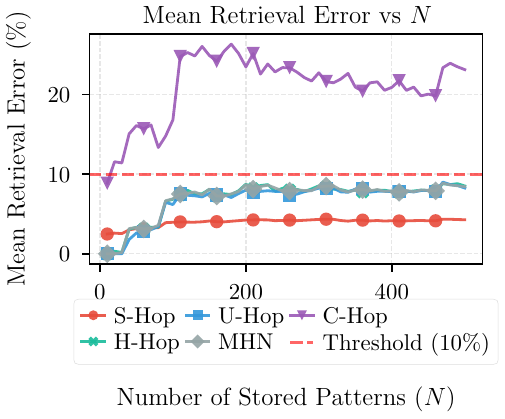}};
        \node[font=\small] at ([xshift=1pt, yshift=1pt]imga.north west) {(a)};
        \node[font=\small] at ([xshift=1pt, yshift=1pt]imgb.north west) {(b)};
    \end{tikzpicture}

    \begin{subfigure}[b]{0.48\linewidth}
        \phantomsubcaption\label{fig:orth_curves_a}
    \end{subfigure}\hfill
    \begin{subfigure}[b]{0.48\linewidth}
        \phantomsubcaption\label{fig:orth_curves_b}
    \end{subfigure}
    \caption{Centered dot-product control on CIFAR-10: H-Hop$^{-}$ and S-Hop retrieval curves.
    (a)~Perfect retrieval rate versus the number of stored patterns.
    (b)~Mean retrieval error versus the number of stored patterns.
    With the norm-penalty removed, H-Hop$^{-}$ degrades toward the linear
    baseline, whereas S-Hop preserves low error throughout the full range,
    showing the retrieval benefit of S-Hop relative to this centered dot-product control.}
    \label{fig:appendix_orthogonal_curves}
\end{figure}

Figure~\ref{fig:appendix_orthogonal_curves} further illustrates that removing
the $\|\xi_j\|^2$ penalty degrades H-Hop$^{-}$ toward the linear baseline,
while S-Hop preserves low error throughout the full range.
This pattern is compatible with the strong performance of the full
distance-based method of~\cite{hoover2024dense}, which benefits from the
norm penalty that H-Hop$^{-}$ discards by construction, and positions our
contribution as a structurally complementary axis: composing an admissible
phase kernel $F$ with a distance-based overlap is a natural direction for
future work.

\FloatBarrier

\subsection{Extended Ablations and Diagnostics}

\begin{figure}[htbp]
    \centering
    \begin{tikzpicture}
        \node[anchor=north west, inner sep=0] (imga) at (0,0)
            {\includegraphics[width=0.48\linewidth]{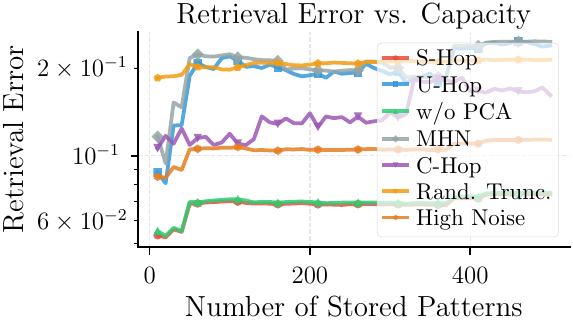}};
        \node[anchor=north west, inner sep=0] (imgb) 
            at ([xshift=4pt]imga.north east)
            {\includegraphics[width=0.482\linewidth]{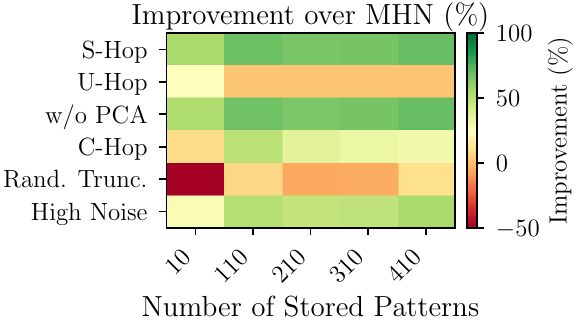}};
        \node[font=\small] at ([xshift=2pt, yshift=-3pt]imga.north west) {(a)};
        \node[font=\small] at ([xshift=2pt, yshift=-3pt]imgb.north west) {(b)};
    \end{tikzpicture}
    \begin{subfigure}[b]{0.48\linewidth}
        \phantomsubcaption\label{fig:ablation_a}
    \end{subfigure}\hfill
    \begin{subfigure}[b]{0.48\linewidth}
        \phantomsubcaption\label{fig:ablation_b}
    \end{subfigure}
    \caption{CIFAR-10 kernel and preprocessing ablations: retrieval error and gains over MHN. 
(a) Retrieval error under increasing memory load for ablated variants 
and stress-test conditions, including removal of the nonlinear kernel, 
PCA preprocessing, cosine-kernel substitution, random truncation, and 
high-noise evaluation ($\sigma = 0.5$). 
(b) Percentage improvement in retrieval accuracy over the MHN baseline 
as a function of the number of stored patterns, isolating 
the contributions of kernel choice and preprocessing.}
    \label{fig:appendix_ablation}
\end{figure}

Figure~\ref{fig:appendix_ablation} aggregates retrieval error versus memory load
for multiple ablated variants and stress tests, together with an
improvement-over-baseline view.
Removing the nonlinear kernel induces the largest degradation across the entire
capacity range, while PCA primarily stabilizes retrieval under moderate noise
and reduces error variation under noise.
The random-truncation and cosine-kernel variants yield higher retrieval error than S-Hop in these comparisons, consistent with Section~\ref{sec:results}.

\FloatBarrier

\subsection{Gating Wrapper: Additional Retrieval Gains}

This section reports the full results for the optional gating wrapper described
in the main text.
The wrapper proposes a one-step MHN, U-Hop, or S-Hop update, mixes it with the current state through a gate, and updates an auxiliary gated state. Figures~\ref{fig:appendix_gating_capacity}--\ref{fig:appendix_gating_curves} show the results.

\begin{figure}[htbp]
    \centering
    \includegraphics[width=0.48\linewidth]{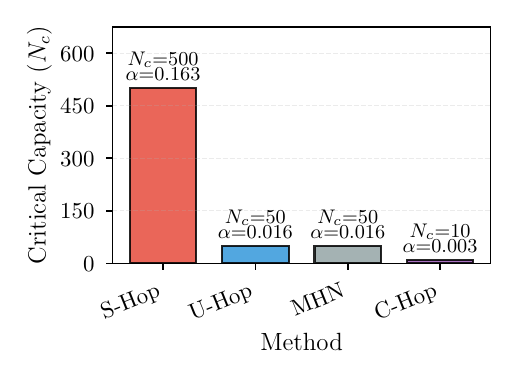}
    \caption{Gating control on CIFAR-10: retrieval-load summary.
    S-Hop passes through $N=500$; baseline bars show the plotted
    threshold-crossing loads.}
    \label{fig:appendix_gating_capacity}
\end{figure}

Figure~\ref{fig:appendix_gating_capacity} compares retrieval-load summaries
for the gated MHN, U-Hop, and S-Hop variants.
S-Hop retains the largest passing load among the gated variants, supporting the use of the phase-kernel update within this wrapper.

\begin{figure}[htbp]
    \centering
    \begin{tikzpicture}
        \node[anchor=north west, inner sep=0] (imga) at (0,0)
            {\includegraphics[width=0.48\linewidth]{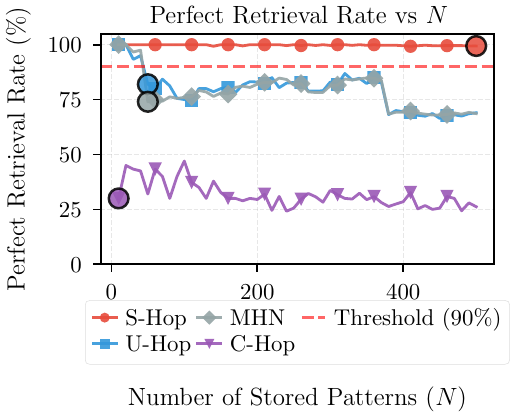}};
        \node[anchor=north west, inner sep=0] (imgb) 
            at ([xshift=4pt]imga.north east)
            {\includegraphics[width=0.47\linewidth]{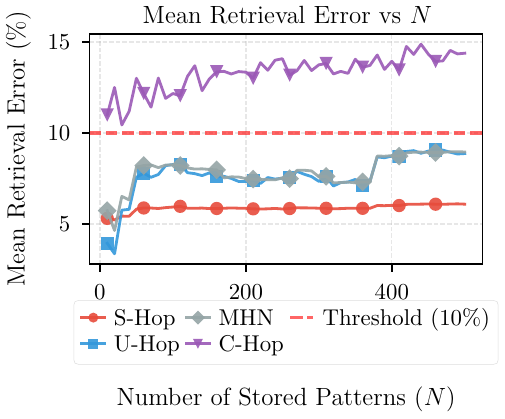}};
        \node[font=\small] at ([xshift=1pt, yshift=1pt]imga.north west) {(a)};
        \node[font=\small] at ([xshift=1pt, yshift=1pt]imgb.north west) {(b)};
    \end{tikzpicture}
    \begin{subfigure}[b]{0.48\linewidth}
        \phantomsubcaption\label{fig:gating_curves_a}
    \end{subfigure}\hfill
    \begin{subfigure}[b]{0.48\linewidth}
        \phantomsubcaption\label{fig:gating_curves_b}
    \end{subfigure}
    \caption{Gating control on CIFAR-10: retrieval-rate and error curves. (a) Perfect retrieval rate and (b) mean retrieval error versus the number of stored patterns.}
    \label{fig:appendix_gating_curves}
\end{figure}

Figure~\ref{fig:appendix_gating_curves} further illustrates that gating improves
retrieval robustness under increasing load, as reflected in both higher perfect
retrieval rates and lower mean MSE.
These curves support combining the phase-kernel update with the gating wrapper for improved retrieval in the tested settings.

\FloatBarrier

\subsection{Additional Generalization: MIL}
\label{subsec:mil}
This section evaluates whether the proposed design choices transfer to a
different learning setting beyond the CIFAR-10 classifier in the main paper.
Figure~\ref{fig:appendix_mil} reports results on a multiple-instance learning
benchmark across random seeds.
The learning curves show the evolution of test accuracy, while the final comparison reports the best test accuracy across the repeated runs.

\begin{figure}[htbp]
    \centering
    \begin{tikzpicture}
        \node[anchor=north west, inner sep=0] (imga) at (0,0)
            {\includegraphics[width=0.48\linewidth]{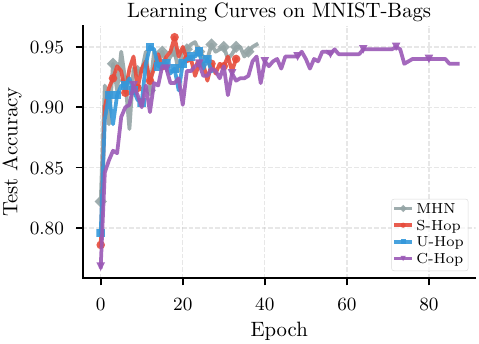}};
        \node[anchor=north west, inner sep=0] (imgb) 
            at ([xshift=4pt]imga.north east)
            {\includegraphics[width=0.475\linewidth]{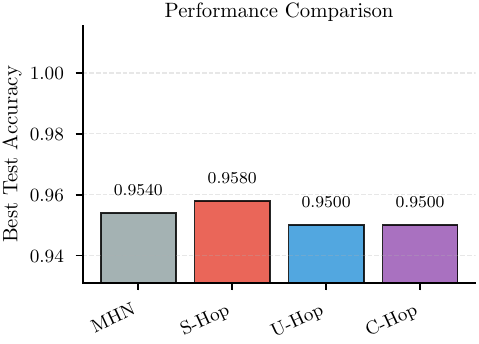}};
        \node[font=\small] at ([xshift=2pt, yshift=-3pt]imga.north west) {(a)};
        \node[font=\small] at ([xshift=2pt, yshift=-3pt]imgb.north west) {(b)};
    \end{tikzpicture}
    \begin{subfigure}[b]{0.48\linewidth}
        \phantomsubcaption\label{fig:mil_a}
    \end{subfigure}\hfill
    \begin{subfigure}[b]{0.48\linewidth}
        \phantomsubcaption\label{fig:mil_b}
    \end{subfigure}
    \caption{Downstream generalization to multiple-instance learning: training and final performance.
    (a) Learning curves with mean $\pm$ standard deviation.
    (b) Best test accuracy over repeated runs.}
    \label{fig:appendix_mil}
\end{figure}

\FloatBarrier

Across alternative U-Hop evaluation modes, extended ablations, the optional
gating wrapper, and MIL generalization, the empirical evidence remains
consistent with improved retrieval under the stated protocols, with further gains from the tested preprocessing and gating choices.

\subsection{Energy-Shaped Classification on CIFAR-10}

The classifier uses six blocks with $4\times4$ patches, embedding width $384$, and six attention heads. The hybrid places two S-Hop blocks before four MHN blocks. Training uses AdamW with learning rate $3\times10^{-4}$, weight decay $0.05$, batch size $128$, and label smoothing $0.1$. The training budget is $100$ epochs, with a $10$-epoch warm-up followed by cosine decay and seed $784$. We record test accuracy at each epoch and the highest observed value.

\begin{figure}[htbp]
    \centering
    \includegraphics[width=0.7\linewidth]{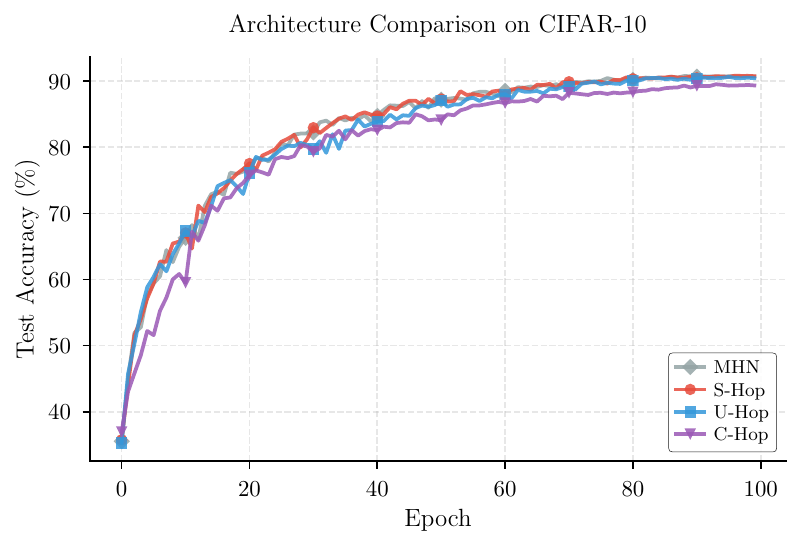}
    \caption{Downstream CIFAR-10 classification: hybrid S-Hop/MHN architectures. Hybrid architectures combining
    two early S-Hop layers with four later MHN layers reach approximately $90\%$ peak test accuracy, about $0.2$ percentage points above the best baseline peak (seed $784$).}
    \label{fig:cifar_main}
\end{figure}

To evaluate the phase-kernel update in supervised learning, we integrate S-Hop
layers into a CIFAR-10 classifier by replacing the attention-based aggregation
modules.
A hybrid architecture with early S-Hop layers followed by standard MHN
layers achieves the highest test accuracy among the compared architectures.
As shown in Figure~\ref{fig:cifar_main}, this hybrid achieves test accuracy
around $90\%$, approximately $0.2$ percentage points above the best baseline peak in the $100$-epoch run.
The test-accuracy trajectories for all four architectures are shown in Figure~\ref{fig:cifar_main}.
Figures~\ref{fig:appendix_gating_curves}--\ref{fig:appendix_mil} cover gated retrieval and MIL.

\FloatBarrier


\section{Additional Retrieval Analyses}
\label{sec:camera-ready-controls}

This appendix examines the effects of feature scale, retrieval criterion, update count, and kernel components on the empirical retrieval load. All experiments use the stated protocol; passing sweep endpoints are reported as lower bounds.

\subsection{Scale, Centering, and Minimum-Margin Controls}
\label{subsec:cr-scale}

\paragraph{Protocol.}
The scale controls use CIFAR-10 at $32\times32$ resolution ($d=3072$), Gaussian query noise $\sigma=0.3$, one retrieval update, base $\beta=1$, no PCA, and MSE${}<0.1$ success with a $90\%$ load-level criterion. U-Hop uses the implemented \emph{max} separation objective, trained for $100$ epochs on the first half of each memory set and then fixed for all queries.

\paragraph{Three-seed comparison.}
The sweep over $N\in\{10,20,50,100,200,300,400,500\}$ uses seeds $42,43,44$ for every method, with a fixed memory set and shared noisy queries within each seed. Noise is added before taking the elementwise absolute value and clipping to $[0,1]$. At $N=500$, normalized logits yield $100\%$ success for S-Hop and $90.60\%$ for each baseline. After logit-scale matching, success is $91.47\pm0.76\%$ for S-Hop, $3.40\pm0.00\%$ for MHN, and $4.40\pm0.69\%$ for U-Hop (mean $\pm$ sample standard deviation over seeds). Figure~\ref{fig:cr-scale-three-seed} shows the corresponding retrieval curves.

\begin{figure}[htbp]
\centering
\includegraphics[width=0.49\linewidth]{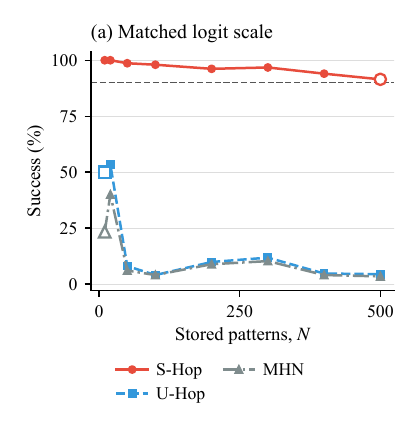}\hfill
\includegraphics[width=0.49\linewidth]{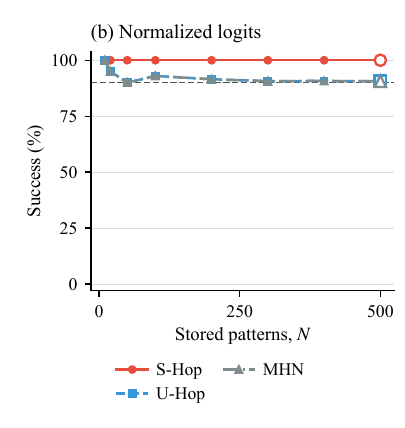}
\caption{Three-seed CIFAR-10 scale controls with the U-Hop \emph{max} objective. Left: matched logit scale; right: normalized logits. Curves average seeds $42,43,44$, with shared memories and noisy queries within each seed. Success means MSE${}<0.1$ after one update at $\sigma=0.3$; dashed lines mark $90\%$. Outlined points mark first failures or passing sweep endpoints.}
\label{fig:cr-scale-three-seed}
\end{figure}

\paragraph{Scale diagnostics.}
Table~\ref{tab:cr-diagnostics} reports statistics from the same three-seed experiment at $N=500$. Logit spread and entropy are averaged over $64$ calibration queries per seed; feature norms are averaged over stored memories. Normalized logits have entropy near $\log 500\approx6.215$ for all methods. The associated retrieval curves are shown in Figure~\ref{fig:cr-scale-three-seed}.

U-Hop's Stage-I optimization uses stochastic gradient descent (SGD) with learning rate $10$. At $N=500$, the mean separation loss over the three seeds decreases from $-0.10573$ at initialization to $-0.12867$ at epoch index $90$; the last objective evaluated during the $100$-epoch training is $-0.13142$. The learned projection is then fixed for retrieval. Figure~\ref{fig:cr-stage1} shows the training trajectory.

\begin{table}[htbp]
\centering
\small
\caption{Feature and logit diagnostics at $N=500$, averaged over three seeds. $H$ is softmax entropy in nats at base $\beta=1$; $\sigma_z$ is the mean within-query logit standard deviation.}
\label{tab:cr-diagnostics}
\begin{tabular}{lrrrrr}
\toprule
Method & Mean feature norm & Raw $\sigma_z$ & Normalized $\sigma_z$ & Raw $H$ & Normalized $H$ \\
\midrule
MHN & $28.84$ & $180.38$ & $0.05683$ & $0.06556$ & $6.21306$ \\
U-Hop & $1508.50$ & $4.8213\times10^5$ & $0.06093$ & $0$ & $6.21283$ \\
S-Hop & $95.63$ & $506.83$ & $0.00282$ & $4.05\times10^{-19}$ & $6.21460$ \\
\bottomrule
\end{tabular}
\end{table}

\paragraph{Extended sweeps.}
Table~\ref{tab:cr-scale} summarizes the extended scans in Figures~\ref{fig:cr-normalization} and~\ref{fig:cr-centering-margin}. MHN and U-Hop results are averaged over seeds $42,43,44$; S-Hop uses seed $42$.

\begin{table}[htbp]
\centering
\caption{CIFAR-10 scale controls: last passing load / first failing load under one-step retrieval with $\sigma=0.3$, base $\beta=1$, MSE${}<0.1$, and $90\%$ success. A bound $\geq$ denotes a passing sweep endpoint; $<10$ denotes failure at the smallest tested load. U-Hop uses the implemented \emph{max} objective. S-Hop uses one seed and the baselines use three.}
\label{tab:cr-scale}
\small
\begin{tabular}{lccc}
\toprule
Control & MHN & U-Hop & S-Hop \\
\midrule
Normalized logits & $870/880$ & $880/890$ & $\geq5000$ \\
Matched logit scale & $<10$ & $<10$ & $790/800$ \\
Input centering & $500/510$ & $\geq1000$ & $\geq1000$ \\
Matched $\beta\Delta_{\min}$ & $<10$ & $<10$ & $790/800$ \\
\bottomrule
\end{tabular}
\end{table}

\paragraph{Normalized logits.}
For method $m$, the normalized score is
\begin{equation}
z_{m,i}^{\mathrm{norm}}(q)
=\frac{x_{m,i}^{\top}q_m}{\|x_{m,i}\|_2\|q_m\|_2}.
\end{equation}
Here $x_{m,i}$ and $q_m$ are method-specific memory and query features, with raw score $z_{m,i}(q)=x_{m,i}^{\top}q_m$. We normalize the scoring vectors and retain each method's value aggregation and state update. The sweep tests $N=10,20,\ldots,1000$ for all methods and extends S-Hop in increments of $100$ to $5000$. The last passing loads are $870$ for MHN and $880$ for U-Hop, with first failures at $880$ and $890$, respectively. S-Hop passes through $5000$, yielding $N_c\geq5000$ and tested-load ratios of approximately $5.75$ and $5.68$, respectively (Figure~\ref{fig:cr-normalization}, left).

\paragraph{Empirical logit-scale matching.}
For each memory set, $64$ calibration queries estimate
\begin{equation}
\sigma_m=\mathbb{E}_q\!\left[\operatorname{Std}_i z_{m,i}(q)\right],
\qquad a_m=\sigma_{\mathrm{MHN}}/\sigma_m.
\end{equation}
The multiplier $a_m$ is applied \emph{inside} softmax; the base update coefficient remains unchanged. At $\beta=1$, this matches the empirical spread of softmax inputs. S-Hop last passes at $790$ and first fails at $800$; both baselines fail at the smallest tested load, $10$. Both baseline loads are left-censored at $10$, so the sweep resolves no finite capacity ratio (Figure~\ref{fig:cr-normalization}, right). Together with the normalized-logit bounds of $5.75$ and $5.68$, these results quantify the controlled comparisons at their measured resolution.

\begin{figure}[htbp]
\centering
\includegraphics[width=0.49\linewidth]{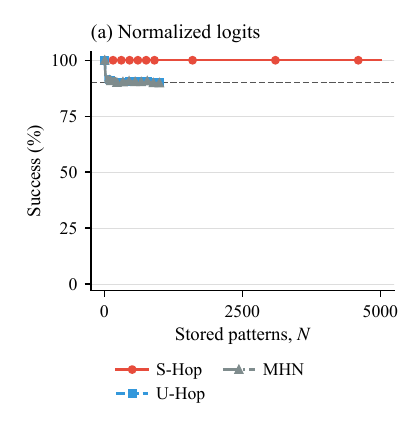}\hfill
\includegraphics[width=0.49\linewidth]{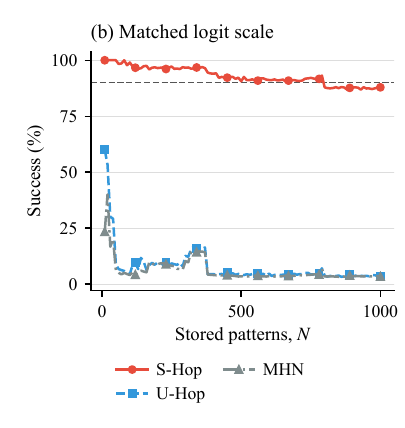}
\caption{Extended CIFAR-10 scale sweeps with the U-Hop \emph{max} objective. Left: normalized logits; right: matched logit scale. Both use one update, $\sigma=0.3$, and MSE${}<0.1$. S-Hop uses seed $42$; baselines average seeds $42,43,44$. Dashed lines mark $90\%$ success. With normalized logits, S-Hop passes through $N=5000$.}
\label{fig:cr-normalization}
\end{figure}

\paragraph{Input centering.}
The centering control uses $F(\psi-\bar\psi)$, subtracting the memory-set mean in input/phase space before embedding and applying the same shift to queries. MHN last passes at $500$ and fails at $510$; S-Hop and U-Hop both pass through $1000$ (Figure~\ref{fig:cr-centering-margin}, left). Full nonlinear-feature centering, $F(\psi)-\overline{F(\psi)}$, is a further extension involving queries, values, and dynamics. Centering only stored keys shifts all logits for each query by the same scalar, preserving the softmax weights.

\paragraph{Signed minimum-margin matching.}
The additional control computes
\begin{equation}
\Delta_{\min,m}=\min_i\min_{j\ne i}
\left(x_{m,i}^{\top}x_{m,i}-x_{m,i}^{\top}x_{m,j}\right)
\end{equation}
and uses a positive multiplier $a_m=\Delta_{\min,\mathrm{MHN}}/\Delta_{\min,m}$ inside softmax to match $\beta\Delta_{\min}$. This calibration requires nonzero margins of the same sign so that the multiplier is positive. With the same one-step MSE criterion and seed allocation as above, S-Hop passes at $790$ and fails at $800$, while both baselines fail at $10$. Figure~\ref{fig:cr-centering-margin} (right) shows retrieval under this signed-margin calibration.

\begin{figure}[htbp]
\centering
\includegraphics[width=0.49\linewidth]{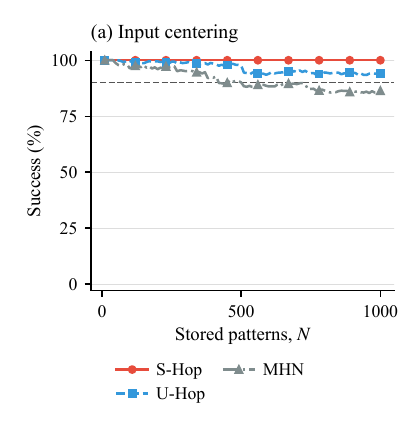}\hfill
\includegraphics[width=0.49\linewidth]{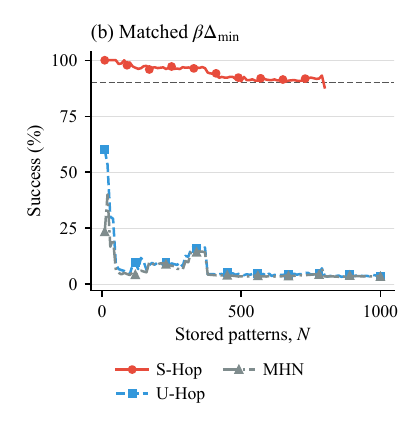}
\caption{CIFAR-10 geometry controls: input centering and signed minimum-margin matching. Left: $F(\psi-\bar\psi)$; right: matched $\beta\Delta_{\min}$. Success means MSE${}<0.1$ after one update at $\sigma=0.3$; dashed lines mark $90\%$. S-Hop uses seed $42$; baselines average seeds $42,43,44$.}
\label{fig:cr-centering-margin}
\end{figure}
\FloatBarrier

\subsection{Retrieval Criteria and Multiple Updates}
\label{subsec:cr-metrics}

\paragraph{MSE thresholds and sweep boundaries.}
A separate CIFAR-10 sweep uses seed $42$, no PCA, one update, and $\sigma=0.3$ to evaluate thresholds $0.01$, $0.03$, $0.05$, and $0.1$. It stops after two consecutive failures of the $90\%$ criterion at MSE${}<0.1$. In this sweep, S-Hop last passes at $750$ and first fails at $800$ under MSE${}<0.1$; the final tested load is $850$. At the three stricter thresholds, S-Hop already fails the $90\%$ success criterion at $N=10$. Figure~\ref{fig:cr-thresholds} shows the success curves for all four thresholds.

\paragraph{Output-based identification.}
For output $\hat\psi$ and target $\psi_i$, cosine-threshold success means
$\cos(\hat\psi,\psi_i)\geq0.95$. Output-cosine top-1 means
$\arg\max_j\cos(\hat\psi,\psi_j)=i$ over the stored memories.
At $N=500$, S-Hop has $91.53\%$ MSE success, $13.4\%$ cosine-threshold success, and $1.53\%$ output-cosine top-1 accuracy. Figure~\ref{fig:cr-cosine} presents the two output-based identification curves; Appendix~\ref{subsec:appendix-ssd} gives the corresponding SSD evaluation.

\begin{figure}[htbp]
\centering
\includegraphics[width=\linewidth]{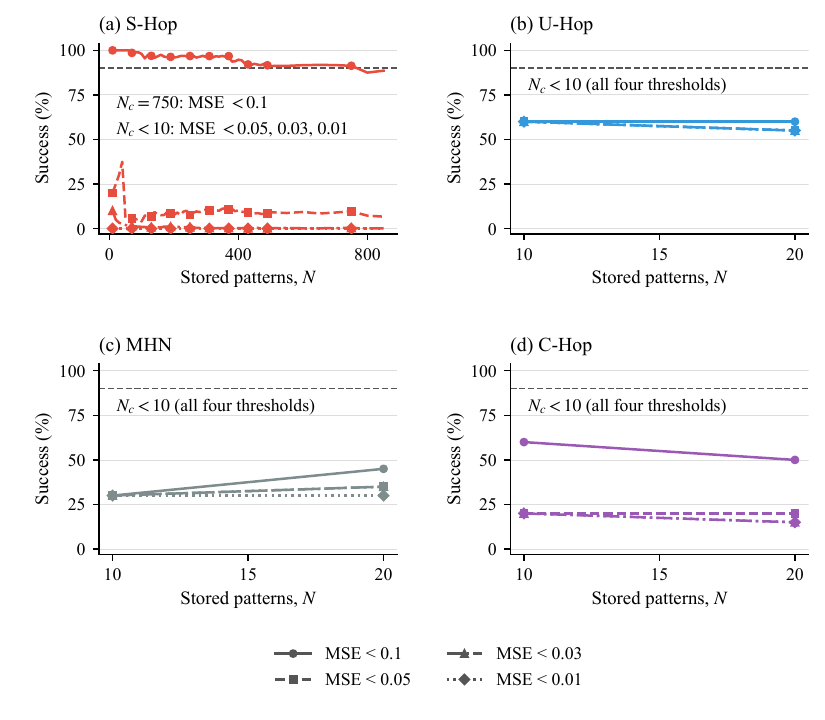}
\caption{CIFAR-10 criterion sensitivity: four MSE thresholds. One-step retrieval uses thresholds $0.01$, $0.03$, $0.05$, and $0.1$, with $\sigma=0.3$, no PCA, and seed $42$. Dashed lines mark $90\%$ success. At threshold $0.1$, S-Hop last passes at $750$ and first fails at $800$; stricter thresholds fail at the smallest tested load, $10$.}
\label{fig:cr-thresholds}
\end{figure}
\begin{figure}[htbp]
\centering
\includegraphics[width=0.49\linewidth]{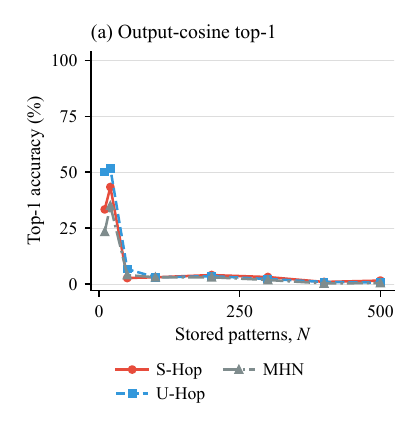}\hfill
\includegraphics[width=0.49\linewidth]{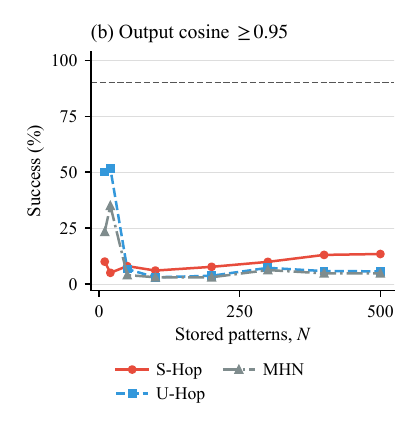}
\caption{CIFAR-10 output identification: cosine top-1 and cosine-threshold criteria. Retrieval uses one update at $\sigma=0.3$. Left: top-1 accuracy selects the stored memory with highest cosine similarity to the retrieved output. Right: success requires output--target cosine similarity $\geq0.95$; the dashed line marks $90\%$ success.}
\label{fig:cr-cosine}
\end{figure}

\paragraph{Multiple updates.}
The common-norm experiment rescales stored inputs to a sphere whose radius is their original mean norm and iterates retrieval at $t\in\{0,1,2,5,10,20\}$. At $N=10$, S-Hop's mean MSE decreases from $0.04177$ after one update to $0.01653$ after $20$, while nearest-memory accuracy changes from $100\%$ to $90\%$. At $N=60$, S-Hop's mean MSE decreases from $0.05026$ after one update to $0.04497$ after two, then rises to $0.05836$ at step $20$. Nearest-memory accuracy drops from $100\%$ to $75\%$ and then to $15\%$ by step $5$, remaining at $15\%$ through step $20$. At step $20$, the successive-update RMSE is $0.00250$. The MSE and identification trajectories are shown in Figures~\ref{fig:cr-multistep-lowload} and~\ref{fig:cr-multistep}, with numerical diagnostics in Table~\ref{tab:cr-multistep}.

\begin{figure}[htbp]
\centering
\includegraphics[width=\linewidth]{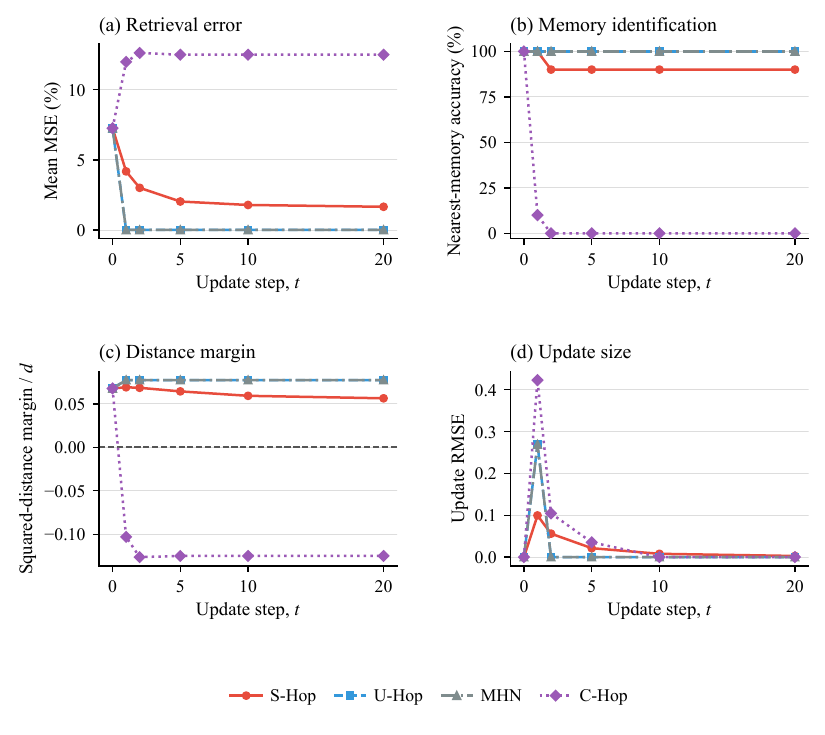}
\caption{Common-norm CIFAR-10 dynamics at low load ($N=10$). Initial query noise is $\sigma=0.3$. Panels show mean MSE, Euclidean nearest-memory accuracy, the nearest-competitor minus target squared-distance margin divided by $d$, and successive-update RMSE at $t\in\{0,1,2,5,10,20\}$.}
\label{fig:cr-multistep-lowload}
\end{figure}

\begin{table}[htbp]
\centering
\caption{S-Hop multiple-update diagnostics at common norm, CIFAR-10, $N=60$ (60 queries). Nearest-memory accuracy uses Euclidean distance. The margin is the mean nearest-competitor squared distance minus target squared distance, divided by $d$.}
\label{tab:cr-multistep}
\small
\begin{tabular}{rrrrr}
\toprule
Step & Mean MSE & Accuracy (\%) & Mean margin & Update RMSE \\
\midrule
0 & 0.07218 & 100 & 0.04342 & --- \\
1 & 0.05026 & 100 & 0.02206 & 0.11148 \\
2 & 0.04497 & 75 & 0.00600 & 0.06249 \\
5 & 0.04729 & 15 & $-0.02372$ & 0.02959 \\
10 & 0.05517 & 15 & $-0.03981$ & 0.01019 \\
20 & 0.05836 & 15 & $-0.04347$ & 0.00250 \\
\bottomrule
\end{tabular}
\end{table}
\begin{figure}[htbp]
\centering
\includegraphics[width=\linewidth]{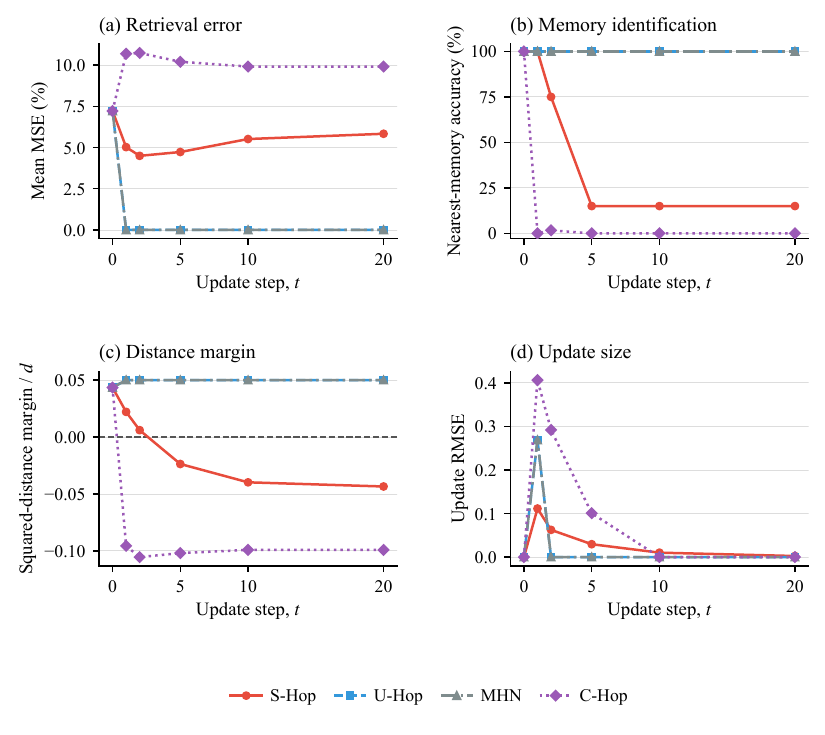}
\caption{Common-norm CIFAR-10 dynamics at higher load ($N=60$). Panels use the four diagnostics in Figure~\ref{fig:cr-multistep-lowload}, with initial noise $\sigma=0.3$. The dashed line marks zero margin; S-Hop values appear in Table~\ref{tab:cr-multistep}.}
\label{fig:cr-multistep}
\end{figure}
\FloatBarrier

\subsection{Similarity, Dynamics, Dimension, and Preprocessing}
\label{subsec:cr-components}

Let $\Psi=[\psi_1,\ldots,\psi_N]$ and $s=\operatorname{softmax}(\beta X^\top F(\varphi))$. The similarity-only variant uses $\varphi^+=\Psi s$. The joint variant uses
$\varphi^+=h^{-1}(J_F(\varphi)^\top F(\Psi s))$, retaining raw-memory aggregation before the Jacobian/local-field transformation. Full S-Hop instead uses the embedded-memory aggregate $Xs$ in Eq.~\eqref{eq:phi-update}.
In the CIFAR-10 sweep through $N=500$, similarity alone falls below $90\%$ success at $N=10$, whereas the joint variant tracks full S-Hop and remains above threshold through $500$. Figure~\ref{fig:cr-components} (left) shows the effect of jointly restoring the Jacobian and inverse-field operations. These arise together from the interaction-energy gradient and the CCCP minimization, respectively; the ablation measures their combined effect.

The dimension control equips U-Hop with a learned $2d\times d$ projection. S-Hop achieves higher retrieval success than this $2d$ linear baseline (Figure~\ref{fig:cr-components}, right). P-Hop and the alternative U-Hop objective are reported in Appendices~\ref{subsec:appendix-phop} and~\ref{subsec:appendix-avg-max} with their respective protocols.

\begin{figure}[htbp]
\centering
\includegraphics[width=0.49\linewidth]{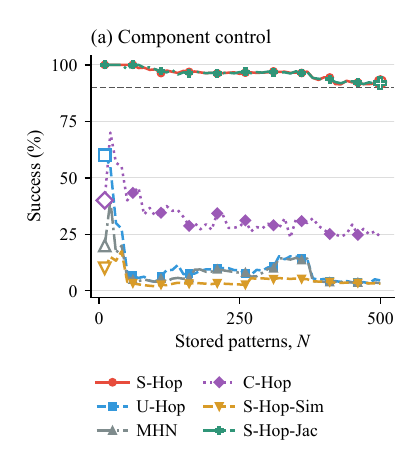}\hfill
\includegraphics[width=0.49\linewidth]{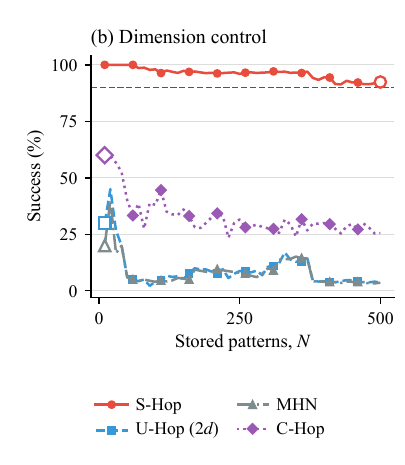}
\caption{CIFAR-10 mechanism controls: state-update components and embedding dimension. Both use one update, $\beta=1$, $\sigma=0.3$, no PCA, and seed $42$. S-Hop-Sim uses phase-kernel scores with raw-memory averaging; S-Hop-Jac restores the Jacobian and inverse local field jointly. Right: U-Hop learns a $2d\times d$ projection. Success means MSE${}<0.1$; dashed lines mark $90\%$. Outlined points mark first failures or passing sweep endpoints.}
\label{fig:cr-components}
\end{figure}

\looseness=-1 Figure~\ref{fig:cr-pca} compares S-Hop with and without PCA alongside kernel, truncation, and higher-noise controls across memory loads. PCA fits the memory covariance, retains $\min(N,d)$ components, scales each centered component by $\lambda_k/(\lambda_k+\sigma^2_{\mathrm{PCA}})$, and reconstructs the memories with their mean restored. Here $\lambda_k$ is the corresponding covariance eigenvalue and $\sigma^2_{\mathrm{PCA}}=0.01$ is the shrinkage regularizer.
\begin{figure}[htbp]
\centering
\includegraphics[width=\linewidth]{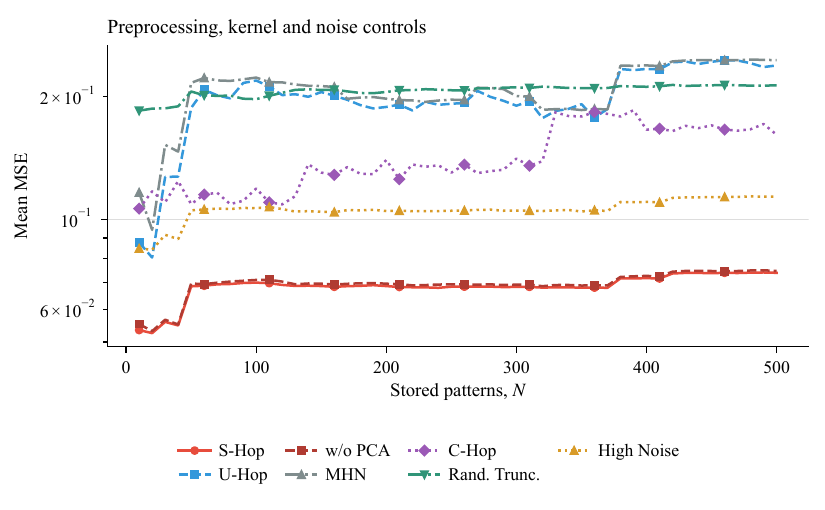}
\caption{CIFAR-10 preprocessing controls: PCA, random truncation, and higher query noise. ``w/o PCA'' removes S-Hop's PCA preprocessing; ``Rand. Trunc.'' retains $1024$ random input coordinates. ``High Noise'' uses $\sigma=0.5$; other curves use $\sigma=0.3$. Retrieval uses one update at $\beta=1$; the error axis is logarithmic.}
\label{fig:cr-pca}
\end{figure}
\FloatBarrier

\subsection{Cross-Dataset Separation and ImageNet}
\label{subsec:cr-datasets}

The cross-dataset diagnostic uses CIFAR-10 and TinyImageNet resized to $32\times32$. It measures median nearest-neighbor squared distance per input dimension and rescales inputs to match this statistic. Figure~\ref{fig:cr-datasets} presents the retrieval curves and distance statistics before and after matching. This run uses a resampled TinyImageNet subset; the fixed-subset control below isolates the resizing effect.

\begin{figure}[htbp]
\centering
\includegraphics[width=0.49\linewidth]{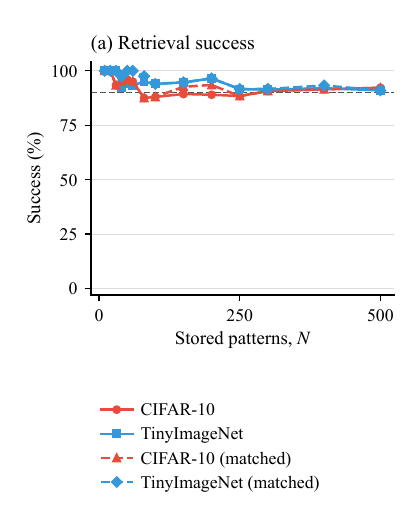}\hfill
\includegraphics[width=0.49\linewidth]{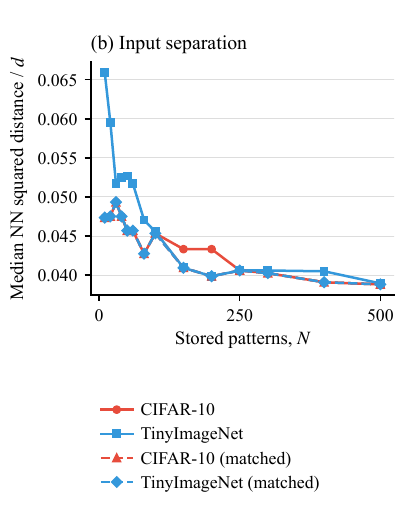}
\caption{Cross-dataset separation control: CIFAR-10 and $32\times32$ TinyImageNet. Curves compare S-Hop before and after input-distance matching. Left: one-step MSE${}<0.1$ success at $\sigma=0.3$, with the $90\%$ threshold dashed. Right: median nearest-neighbor squared distance per input dimension.}
\label{fig:cr-datasets}
\end{figure}

\paragraph{Dimension and retrieval load.}
The original TinyImageNet subset has a retrieval-load threshold of $40$ for S-Hop and $10$ for each baseline, a $4\times$ gain (Table~\ref{tab:tiny-original}). The following controls examine memory geometry at fixed dimension, complementing the theorem's dimension scaling.

\begin{table}[htbp]
\centering
\caption{Original TinyImageNet subset: $64\times64$ images, one update, $\sigma=0.3$, and a $90\%$ MSE${}<0.1$ threshold. Loads mark the first threshold crossing. Public \emph{avg} uses three-seed mean success.}
\label{tab:tiny-original}
\small
\begin{tabular}{lc}
\toprule
Method & Retrieval load \\
\midrule
MHN & $10$ \\
U-Hop (\emph{max}) & $10$ \\
U-Hop (public \emph{avg}) & $10$ \\
C-Hop & $10$ \\
S-Hop & $40$ \\
\bottomrule
\end{tabular}
\end{table}

\begin{figure}[htbp]
\centering
\includegraphics[width=0.48\linewidth]{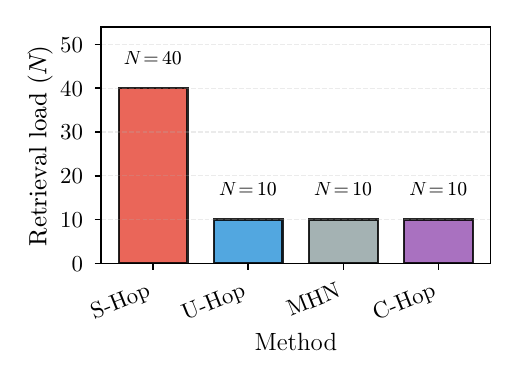}
\caption{Original TinyImageNet retrieval-load comparison. S-Hop reaches $40$, versus $10$ for the baselines, a $4\times$ gain. U-Hop uses the original \emph{max} objective.}
\label{fig:tiny-original-capacity}
\end{figure}

\paragraph{TinyImageNet subset composition and separation.}
The original TinyImageNet experiment uses the first $500$ images in class-sorted order, all from the goldfish class. These natural images have correlated content and varying feature norms, rather than the independent phase coordinates assumed in the capacity theorem. Both image similarity and norm differences affect the signed separation $\Delta_i=\min_{j\ne i}x_i^\top(x_i-x_j)$: at $N=500$, only one of the original memories has positive $\Delta_i$. Keeping these $500$ images fixed and resizing them from $64\times64$ to $32\times32$ raises mean MSE${}<0.1$ success from $50.67\%$ to $54.40\%$ over noise seeds $42,43,44$. This modest improvement motivates examining subset geometry at the original resolution.

The geometry-selected $500$-image subset used in Figure~\ref{fig:tiny-selected} is constructed at $64\times64$ using clean-memory geometry. A candidate pool contains $50$ randomly sampled images from each of the $200$ classes (sampling seed $20261009$). Within the $1000$-image band with the smallest range of squared embedding norms, greedy selection maximizes the minimum of the two directed pairwise margins. The resulting subset spans $183$ classes and has $\Delta_{\min}=80.83$, with all $500$ margins positive. Selection precedes noisy-query evaluation and leaves the images, kernel, and update unchanged.

\looseness=-1 The fixed-subset comparison uses $\beta=1$, $\sigma=0.3$, one update, no PCA or normalization, and the same absolute-value and clipping transforms. Over noise seeds $20261010$, $20261011$, and $20261012$, the original subset achieves $50.53\%$ MSE${}<0.1$ success, mean MSE $0.10121$, and $7.60\%$ nearest-memory accuracy at $N=500$. Under these same seeds, the selected subset achieves $100\%$ MSE-based success and $95.27\pm0.46\%$ nearest-memory accuracy (Figure~\ref{fig:tiny-selected}). One update reduces query MSE from $0.06274$ to $0.02697$ ($57.02\%$); removing the target memory raises MSE to $0.04476$ and lowers nearest-memory accuracy to $69.00\%$. Nearest-memory accuracy identifies the output's closest stored image by Euclidean distance; error bars are sample standard deviations across the three noise seeds on each fixed subset. These results isolate the effect of memory composition at the same input dimension.

\begin{figure}[htbp]
\centering
\includegraphics[width=\linewidth]{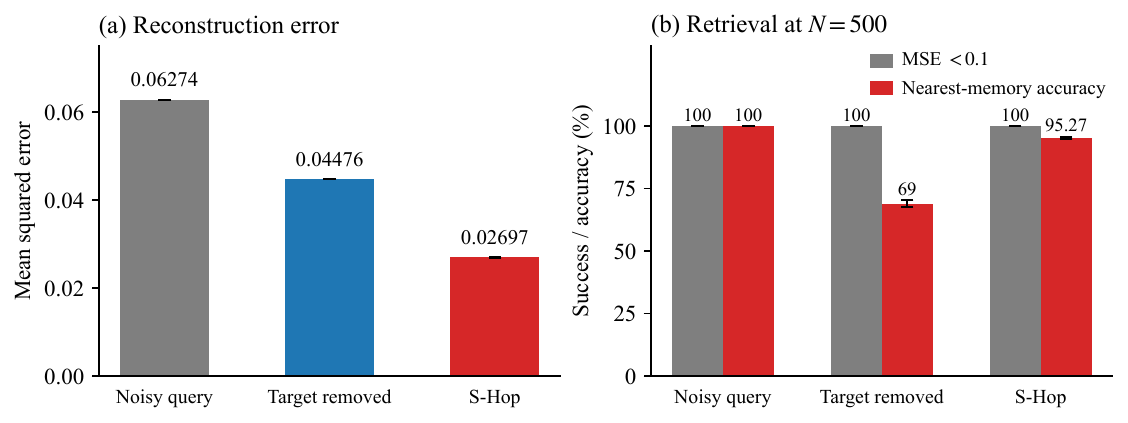}
\caption{\looseness=-1 Geometry-selected TinyImageNet memories ($N=500$, $64\times64$). (a) Reconstruction error. (b) MSE${}<0.1$ success and Euclidean nearest-memory accuracy. Bars show means and sample standard deviations across three noise seeds: one update, $\beta=1$, $\sigma=0.3$, no PCA or normalization.}
\label{fig:tiny-selected}
\end{figure}

\paragraph{ImageNet protocol.}
We further extend the evaluation to ImageNet-1K, using the first $500$ training images in class-sorted directory order, all from the tench class. Images are resized and center-cropped to $64\times64$, or resized to $256$ pixels on the shorter side and center-cropped to $224\times224$, then converted to RGB values in $[0,1]$. The original \emph{max} runs use seed $42$, no PCA, $\beta=1$, one update, and Gaussian noise $\sigma=0.3$ followed by the absolute-value and clipping transform. The sweep uses $N=10,20,\ldots,500$, with one noisy query per memory. U-Hop uses the \emph{max} objective for $100$ epochs; the $224\times224$ run uses its block-diagonal projection implementation with block size $64$. The high-resolution U-Hop sweep was completed using an equivalent closest-pair loss calculation that avoids the original pairwise-gradient memory allocation.

Figure~\ref{fig:cr-imagenet} shows the resulting retrieval curves at $64\times64$ and $224\times224$. S-Hop achieves $91.4\%$ and $98.2\%$ MSE${}<0.1$ success at $N=500$, respectively, giving $N_c\geq500$ in both settings, compared with U-Hop's largest passing load of $10$. Both resolutions include all four methods. At $224\times224$, U-Hop (block) achieves $2.0\%$ success at $N=500$. These MSE-based results at $d=12288$ and $d=150528$, together with the fixed-resolution TinyImageNet subset control, show that input dimension alone does not explain the TinyImageNet degradation; image distribution and separation also affect retrieval.

\begin{figure}[htbp]
\centering
\includegraphics[width=0.49\linewidth]{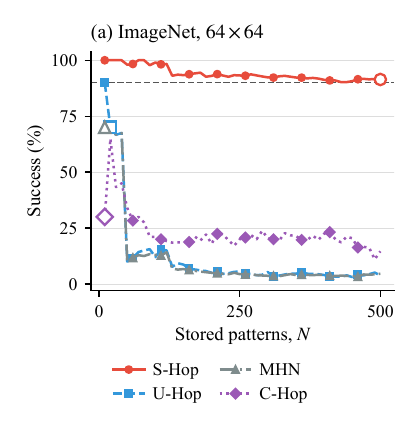}\hfill
\includegraphics[width=0.49\linewidth]{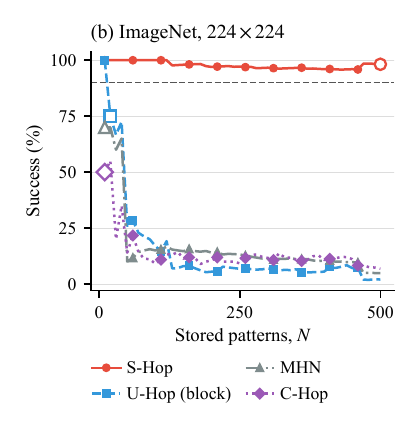}
\caption{ImageNet resolution comparison with U-Hop (\emph{max}): one-step retrieval at $64\times64$ and $224\times224$. Left and right use the same first $500$ training images, seed $42$, no PCA, $\beta=1$, $\sigma=0.3$, and MSE${}<0.1$. Dashed lines mark $90\%$ success; outlined points mark first failures or passing endpoints. S-Hop passes through $N=500$ in both runs. U-Hop uses a dense projection at $64\times64$ and a block-diagonal projection (block size $64$) at $224\times224$.}
\label{fig:cr-imagenet}
\end{figure}
\FloatBarrier

\FloatBarrier
\subsection{Public Average-Separation Baseline and Training Diagnostics}
\label{subsec:cr-public-avg}
The public U-Hop \emph{avg} objective is $\log\operatorname{mean}_{i<j}\exp(-2\|u_i-u_j\|_2^2)$, where $u_i=W\psi_i/\|W\psi_i\|_2$ and $W\in\mathbb{R}^{d\times d}$. We train on the first half of each memory set using SGD with learning rate $10$ for $100$ epochs, starting from a standard-normal projection, and then fix $W$. With $W=I$, the implementation reproduces MHN exactly in the checked runs.

The experiment uses the first $500$ CIFAR-10 training images, no PCA, seeds $42,43,44$, and $N\in\{10,20,50,60,100,200,500\}$. Memories and noisy queries are shared across methods within each seed; the noise transform, $\beta=1$, one-update budget, and MSE${}<0.1$ criterion follow Appendix~\ref{subsec:cr-scale}. Logit matching uses the first $\min(64,N)$ queries for calibration. Runs use float32 on an RTX 5060 Laptop graphics processing unit (GPU), with TensorFloat-32 (TF32) disabled.

At $N=500$, raw-logit success is $92.07\pm0.42\%$ for S-Hop, $48.40\pm3.83\%$ for U-Hop, and $3.33\pm0.12\%$ for MHN. Normalization yields $100\%$, $90.80\%$, and $90.60\%$, respectively, with zero between-seed standard deviation at this load. Logit matching yields $92.07\pm0.42\%$, $48.87\pm3.93\%$, and $3.33\pm0.12\%$. Thus normalization substantially improves the linear baselines, while S-Hop retains higher success in the three comparisons (Figure~\ref{fig:cr-public-avg}). The raw-logit U-Hop curve last passes at $60$ and first fails at $100$ on this grid; S-Hop passes through $500$.

The new MNIST, TinyImageNet, and ImageNet public \emph{avg} sweeps use seeds $42,43,44$ and $N=10,20,\ldots,500$, with fixed memories and independent projection and query generators initialized by multiplying each seed by $100000$ and adding $N$. MNIST uses the first $500$ training images; TinyImageNet and ImageNet use the subsets in Appendix~\ref{subsec:cr-datasets}. ImageNet compares all four methods on shared queries, and repeats U-Hop (\emph{max}) on the same grid. Runs use float32 on an RTX 5090, with TF32 disabled and a block size of $64$ for the $224\times224$ projection. Pairwise squared distances are evaluated through their equivalent Gram-matrix expression. Load bounds use mean success over the three seeds.

\looseness=-1 On MNIST, public \emph{avg} reaches $92.93\pm1.17\%$ success at $N=500$. On the original TinyImageNet subset, it fails at $N=10$ and reaches $3.20\%$ at $N=500$. At $N=500$ on ImageNet, public \emph{avg} reaches $7.00\pm1.06\%$ at $64\times64$ and $4.67\pm0.12\%$ at $224\times224$; S-Hop reaches $91.47\pm0.12\%$ and $98.53\pm0.12\%$, respectively. The warmed-up MNIST timing at $N=300$, averaged over the three seeds, is $1.24$ ms per query after amortizing projection training over $256$ queries, including $0.17$ ms for retrieval.

\raggedbottom
\begingroup
\makeatletter
\setlength{\@fptop}{0pt}
\setlength{\@fpsep}{18pt}
\makeatother
\par\noindent\begin{minipage}{\linewidth}
\captionsetup{type=figure}
\centering
\includegraphics[width=\linewidth,trim=0 5pt 0 3pt,clip]{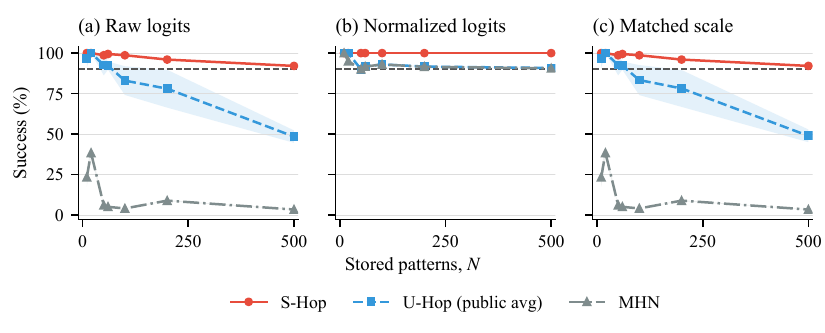}
\caption{CIFAR-10 scale controls with the public U-Hop \emph{avg} objective. (a) Raw, (b) normalized, and (c) matched-scale logits complement the \emph{max} controls in Figure~\ref{fig:cr-scale-three-seed}. Lines and shading show means and sample standard deviations over seeds $42,43,44$. Methods share memories and queries: no PCA, one update, $\beta=1$, and $\sigma=0.3$. Success means MSE${}<0.1$; dashed lines mark $90\%$.}
\label{fig:cr-public-avg}
\end{minipage}\par

\par\noindent\begin{minipage}{\linewidth}
\captionsetup{type=figure}
\centering
\includegraphics[width=\linewidth,trim=0 10pt 0 5pt,clip]{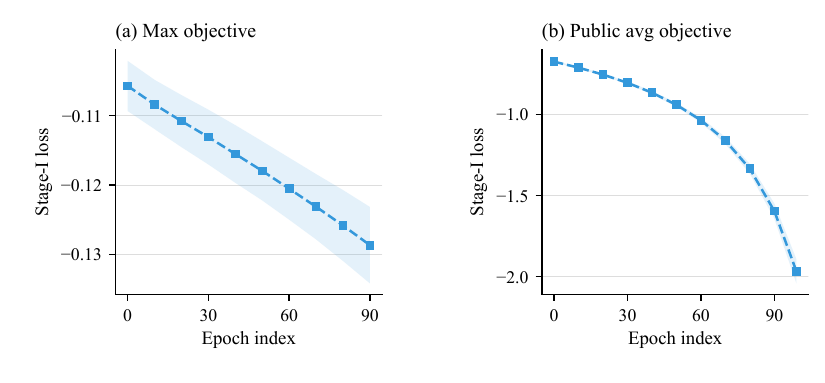}
\caption{U-Hop training diagnostics: \emph{max} and public \emph{avg} objectives at $N=500$. (a) The \emph{max} objective used for Figure~\ref{fig:cr-scale-three-seed}, logged every $10$ epochs through index $90$. (b) The public \emph{avg} objective used for Figure~\ref{fig:cr-public-avg}, including index $99$. Both runs use $100$ epochs and seeds $42,43,44$; lines and bands show means and sample standard deviations on separate loss scales.}
\label{fig:cr-stage1}
\end{minipage}\par
\endgroup

\begin{figure}[htbp]
    \centering
    \begin{tikzpicture}
        \node[anchor=north west, inner sep=0] (imga) at (0,0)
            {\includegraphics[width=0.48\linewidth]{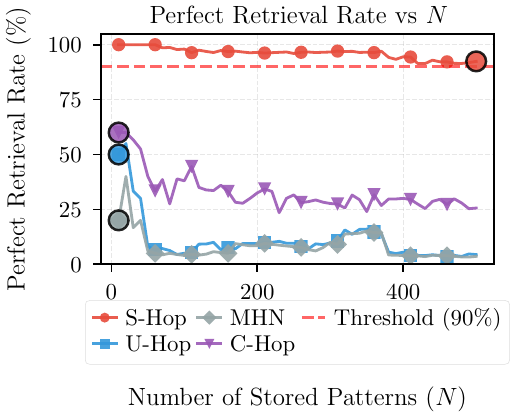}};
        \node[anchor=north west, inner sep=0] (imgb) 
            at ([xshift=4pt]imga.north east)
            {\includegraphics[width=0.47\linewidth]{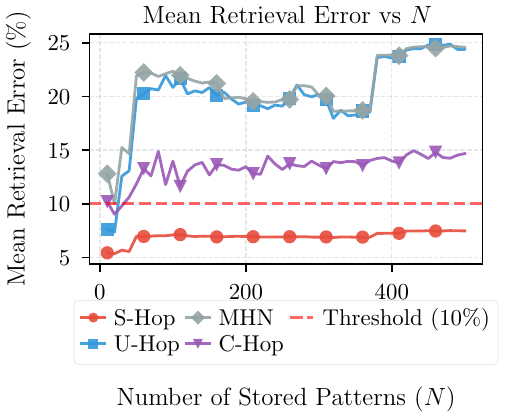}};
        \node[font=\small] at ([xshift=1pt, yshift=1pt]imga.north west) {(a)};
        \node[font=\small] at ([xshift=1pt, yshift=1pt]imgb.north west) {(b)};
    \end{tikzpicture}

    \begin{subfigure}[b]{0.48\linewidth}
        \phantomsubcaption\label{fig:cifar-original-a}
    \end{subfigure}\hfill
    \begin{subfigure}[b]{0.48\linewidth}
        \phantomsubcaption\label{fig:cifar-original-b}
    \end{subfigure}
    \caption{CIFAR-10 retrieval curves under the original one-step protocol with U-Hop (\emph{max}).
    (a) Perfect retrieval rate under the $\mathrm{MSE} < 0.1$ criterion (dots mark the first drop below $90\%$).
    (b) Mean retrieval error as a function of the number of stored patterns.}
    \label{fig:cifar-original}
\end{figure}

\subsection{Empirical Findings and Future Directions}
\label{subsec:cr-limitations}
The scale controls in Figures~\ref{fig:cr-scale-three-seed} and~\ref{fig:cr-normalization} show S-Hop's retrieval advantage under normalization and logit matching. The component comparison in Figure~\ref{fig:cr-components} highlights the role of the combined similarity and state update. Figures~\ref{fig:cr-thresholds}--\ref{fig:cr-multistep} characterize retrieval across success criteria and update counts, and Figures~\ref{fig:cr-datasets}--\ref{fig:cr-imagenet} extend the evaluation across datasets and resolutions.

\paragraph{Future directions.}
First, extending the capacity analysis to correlated memory distributions would help explain the dataset and subset differences. Learned or structured embeddings that improve memory separation offer a related practical direction.
Second, formally incorporating the gating wrapper into the surrogate
Hopfield energy would extend the Lyapunov analysis to the gated dynamics.
Third, composing admissible phase kernels with the distance-based overlaps
of~\cite{hoover2024dense,hoover2025epanechnikov} is a natural next step,
combining changes to memory features with the norm penalty in the retrieval scores.
Fourth, approximating the coupled square-root maps with fast polynomials could improve scalability and reduce computational overhead in applications requiring low latency.

\FloatBarrier
\newpage
\flushbottom
\section*{NeurIPS Paper Checklist}

The checklist is designed to encourage best practices for responsible machine learning research, addressing issues of reproducibility, transparency, research ethics, and societal impact. Do not remove the checklist: {\bf The papers not including the checklist will be desk rejected.} The checklist should follow the references and follow the (optional) supplemental material.  The checklist does NOT count towards the page
limit. 

Please read the checklist guidelines carefully for information on how to answer these questions. For each question in the checklist:
\begin{itemize}
    \item You should answer \answerYes{}, \answerNo{}, or \answerNA{}.
    \item \answerNA{} means either that the question is Not Applicable for that particular paper or the relevant information is Not Available.
    \item Please provide a short (1--2 sentence) justification right after your answer (even for \answerNA). 
\end{itemize}

{\bf The checklist answers are an integral part of your paper submission.} They are visible to the reviewers, area chairs, senior area chairs, and ethics reviewers. You will also be asked to include it (after eventual revisions) with the final version of your paper, and its final version will be published with the paper.

The reviewers of your paper will be asked to use the checklist as one of the factors in their evaluation. While \answerYes{} is generally preferable to \answerNo{}, it is perfectly acceptable to answer \answerNo{} provided a proper justification is given (e.g., error bars are not reported because it would be too computationally expensive'' or ``we were unable to find the license for the dataset we used''). In general, answering \answerNo{} or \answerNA{} is not grounds for rejection. While the questions are phrased in a binary way, we acknowledge that the true answer is often more nuanced, so please just use your best judgment and write a justification to elaborate. All supporting evidence can appear either in the main paper or the supplemental material, provided in appendix. If you answer \answerYes{} to a question, in the justification please point to the section(s) where related material for the question can be found.

IMPORTANT, please:
\begin{itemize}
    \item {\bf Delete this instruction block, but keep the section heading ``NeurIPS Paper Checklist"},
    \item  {\bf Keep the checklist subsection headings, questions/answers and guidelines below.}
    \item {\bf Do not modify the questions and only use the provided macros for your answers}.
\end{itemize}


\begin{enumerate}

\item {\bf Claims}
    \item[] Question: Do the main claims made in the abstract and introduction accurately reflect the paper's contributions and scope?
    \item[] Answer: \answerYes{} 
    \item[] Justification: {The abstract and introduction explicitly state the theoretical guarantees (exponential storage capacity, fixed-point convergence) and empirical evaluations (superior critical capacity and retrieval robustness), which are fully addressed in Sections \ref{sec:method} and \ref{sec:results}.}
    \item[] Guidelines:
    \begin{itemize}
        \item The answer \answerNA{} means that the abstract and introduction do not include the claims made in the paper.
        \item The abstract and/or introduction should clearly state the claims made, including the contributions made in the paper and important assumptions and limitations. A \answerNo{} or \answerNA{} answer to this question will not be perceived well by the reviewers. 
        \item The claims made should match theoretical and experimental results, and reflect how much the results can be expected to generalize to other settings. 
        \item It is fine to include aspirational goals as motivation as long as it is clear that these goals are not attained by the paper. 
    \end{itemize}

\item {\bf Limitations}
    \item[] Question: Does the paper discuss the limitations of the work performed by the authors?
    \item[] Answer: \answerYes{} 
    \item[] Justification: {Section~\ref{sec:limitations} states the theoretical and empirical limitations, including the sampling and margin assumptions, criterion and scale dependence, and joint component ablations. Appendix~\ref{sec:camera-ready-controls} details the controls.}
    \item[] Guidelines:
    \begin{itemize}
        \item The answer \answerNA{} means that the paper has no limitation while the answer \answerNo{} means that the paper has limitations, but those are not discussed in the paper. 
        \item The authors are encouraged to create a separate ``Limitations'' section in their paper.
        \item The paper should point out any strong assumptions and how robust the results are to violations of these assumptions (e.g., independence assumptions, noiseless settings, model well-specification, asymptotic approximations only holding locally). The authors should reflect on how these assumptions might be violated in practice and what the implications would be.
        \item The authors should reflect on the scope of the claims made, e.g., if the approach was only tested on a few datasets or with a few runs. In general, empirical results often depend on implicit assumptions, which should be articulated.
        \item The authors should reflect on the factors that influence the performance of the approach. For example, a facial recognition algorithm may perform poorly when image resolution is low or images are taken in low lighting. Or a speech-to-text system might not be used reliably to provide closed captions for online lectures because it fails to handle technical jargon.
        \item The authors should discuss the computational efficiency of the proposed algorithms and how they scale with dataset size.
        \item If applicable, the authors should discuss possible limitations of their approach to address problems of privacy and fairness.
        \item While the authors might fear that complete honesty about limitations might be used by reviewers as grounds for rejection, a worse outcome might be that reviewers discover limitations that aren't acknowledged in the paper. The authors should use their best judgment and recognize that individual actions in favor of transparency play an important role in developing norms that preserve the integrity of the community. Reviewers will be specifically instructed to not penalize honesty concerning limitations.
    \end{itemize}

\item {\bf Theory assumptions and proofs}
    \item[] Question: For each theoretical result, does the paper provide the full set of assumptions and a complete (and correct) proof?
    \item[] Answer: \answerYes{} 
    \item[] Justification: {Complete formal proofs, including all regularity assumptions and the contraction argument, are provided in Appendix \ref{sec:appendix-theory} and \ref{app:capacity-analysis}.}
    \item[] Guidelines:
    \begin{itemize}
        \item The answer \answerNA{} means that the paper does not include theoretical results. 
        \item All the theorems, formulas, and proofs in the paper should be numbered and cross-referenced.
        \item All assumptions should be clearly stated or referenced in the statement of any theorems.
        \item The proofs can either appear in the main paper or the supplemental material, but if they appear in the supplemental material, the authors are encouraged to provide a short proof sketch to provide intuition. 
        \item Inversely, any informal proof provided in the core of the paper should be complemented by formal proofs provided in appendix or supplemental material.
        \item Theorems and Lemmas that the proof relies upon should be properly referenced. 
    \end{itemize}

    \item {\bf Experimental result reproducibility}
    \item[] Question: Does the paper fully disclose all the information needed to reproduce the main experimental results of the paper to the extent that it affects the main claims and/or conclusions of the paper (regardless of whether the code and data are provided or not)?
    \item[] Answer: \answerYes{} 
    \item[] Justification: {Section~\ref{sec:results} and Appendix~\ref{sec:appendix-experiment} give the retrieval equations and baseline protocols. Appendix~\ref{sec:camera-ready-controls} specifies the additional control settings, seed counts, and evaluation criteria; code is supplied with the submission.}
    \item[] Guidelines:
    \begin{itemize}
        \item The answer \answerNA{} means that the paper does not include experiments.
        \item If the paper includes experiments, a \answerNo{} answer to this question will not be perceived well by the reviewers: Making the paper reproducible is important, regardless of whether the code and data are provided or not.
        \item If the contribution is a dataset and\slash or model, the authors should describe the steps taken to make their results reproducible or verifiable. 
        \item Depending on the contribution, reproducibility can be accomplished in various ways. For example, if the contribution is a novel architecture, describing the architecture fully might suffice, or if the contribution is a specific model and empirical evaluation, it may be necessary to either make it possible for others to replicate the model with the same dataset, or provide access to the model. In general. releasing code and data is often one good way to accomplish this, but reproducibility can also be provided via detailed instructions for how to replicate the results, access to a hosted model (e.g., in the case of a large language model), releasing of a model checkpoint, or other means that are appropriate to the research performed.
        \item While NeurIPS does not require releasing code, the conference does require all submissions to provide some reasonable avenue for reproducibility, which may depend on the nature of the contribution. For example
        \begin{enumerate}
            \item If the contribution is primarily a new algorithm, the paper should make it clear how to reproduce that algorithm.
            \item If the contribution is primarily a new model architecture, the paper should describe the architecture clearly and fully.
            \item If the contribution is a new model (e.g., a large language model), then there should either be a way to access this model for reproducing the results or a way to reproduce the model (e.g., with an open-source dataset or instructions for how to construct the dataset).
            \item We recognize that reproducibility may be tricky in some cases, in which case authors are welcome to describe the particular way they provide for reproducibility. In the case of closed-source models, it may be that access to the model is limited in some way (e.g., to registered users), but it should be possible for other researchers to have some path to reproducing or verifying the results.
        \end{enumerate}
    \end{itemize}

\item {\bf Open access to data and code}
    \item[] Question: Does the paper provide open access to the data and code, with sufficient instructions to faithfully reproduce the main experimental results, as described in supplemental material?
    \item[] Answer: \answerYes{} 
    \item[] Justification: {Anonymized source code, together with a README, a dependency list, and configuration files reproducing the main capacity, ablation, and MIL results in Section~\ref{sec:results} and Appendix~\ref{sec:appendix-experiment}, is provided in the supplemental material.}
    \item[] Guidelines:
    \begin{itemize}
        \item The answer \answerNA{} means that paper does not include experiments requiring code.
        \item Please see the NeurIPS code and data submission guidelines (\url{https://neurips.cc/public/guides/CodeSubmissionPolicy}) for more details.
        \item While we encourage the release of code and data, we understand that this might not be possible, so \answerNo{} is an acceptable answer. Papers cannot be rejected simply for not including code, unless this is central to the contribution (e.g., for a new open-source benchmark).
        \item The instructions should contain the exact command and environment needed to run to reproduce the results. See the NeurIPS code and data submission guidelines (\url{https://neurips.cc/public/guides/CodeSubmissionPolicy}) for more details.
        \item The authors should provide instructions on data access and preparation, including how to access the raw data, preprocessed data, intermediate data, and generated data, etc.
        \item The authors should provide scripts to reproduce all experimental results for the new proposed method and baselines. If only a subset of experiments are reproducible, they should state which ones are omitted from the script and why.
        \item At submission time, to preserve anonymity, the authors should release anonymized versions (if applicable).
        \item Providing as much information as possible in supplemental material (appended to the paper) is recommended, but including URLs to data and code is permitted.
    \end{itemize}

\item {\bf Experimental setting/details}
    \item[] Question: Does the paper specify all the training and test details (e.g., data splits, hyperparameters, how they were chosen, type of optimizer) necessary to understand the results?
    \item[] Answer: \answerYes{} 
    \item[] Justification: {Experimental settings, including datasets, the alternative U-Hop evaluation protocol (max vs. avg), and retrieval criteria, are described in Section \ref{sec:results} and Appendix \ref{sec:appendix-experiment}.}
    \item[] Guidelines:
    \begin{itemize}
        \item The answer \answerNA{} means that the paper does not include experiments.
        \item The experimental setting should be presented in the core of the paper to a level of detail that is necessary to appreciate the results and make sense of them.
        \item The full details can be provided either with the code, in appendix, or as supplemental material.
    \end{itemize}

\item {\bf Experiment statistical significance}
    \item[] Question: Does the paper report error bars suitably and correctly defined or other appropriate information about the statistical significance of the experiments?
    \item[] Answer: \answerYes{} 
    \item[] Justification: {The MIL experiment reports mean and standard deviation across repeated runs. Appendix~\ref{subsec:cr-scale} reports mean and sample standard deviation over three seeds for the scale controls through $N=500$ (Figure~\ref{fig:cr-scale-three-seed}); the extended scans specify their method-specific seed counts. Appendix~\ref{subsec:cr-public-avg} adds a three-seed comparison using the public average-separation objective.}
    \item[] Guidelines:
    \begin{itemize}
        \item The answer \answerNA{} means that the paper does not include experiments.
        \item The authors should answer \answerYes{} if the results are accompanied by error bars, confidence intervals, or statistical significance tests, at least for the experiments that support the main claims of the paper.
        \item The factors of variability that the error bars are capturing should be clearly stated (for example, train/test split, initialization, random drawing of some parameter, or overall run with given experimental conditions).
        \item The method for calculating the error bars should be explained (closed form formula, call to a library function, bootstrap, etc.)
        \item The assumptions made should be given (e.g., Normally distributed errors).
        \item It should be clear whether the error bar is the standard deviation or the standard error of the mean.
        \item It is OK to report 1-sigma error bars, but one should state it. The authors should preferably report a 2-sigma error bar than state that they have a 96\% CI, if the hypothesis of Normality of errors is not verified.
        \item For asymmetric distributions, the authors should be careful not to show in tables or figures symmetric error bars that would yield results that are out of range (e.g., negative error rates).
        \item If error bars are reported in tables or plots, the authors should explain in the text how they were calculated and reference the corresponding figures or tables in the text.
    \end{itemize}

\item {\bf Experiments compute resources}
    \item[] Question: For each experiment, does the paper provide sufficient information on the computer resources (type of compute workers, memory, time of execution) needed to reproduce the experiments?
    \item[] Answer: \answerYes{} 
    \item[] Justification: {The original capacity, retrieval, ablation, and MIL experiments were run on a workstation equipped with two NVIDIA Tesla T4 GPUs (16 GB each). The single-T4 timing in Table~\ref{tab:main} and Figure~\ref{fig:efficiency} includes training amortized over 256 queries at $N=300$, as described in Section~\ref{app:efficiency}. Compute for the original experiment set, including preliminary tuning, was estimated at fewer than 50 GPU-hours; this estimate excludes the extended control sweeps. The public-average comparison in Appendix~\ref{subsec:cr-public-avg} uses an NVIDIA RTX 5060 Laptop GPU (8 GB).}
    \item[] Guidelines:
    \begin{itemize}
        \item The answer \answerNA{} means that the paper does not include experiments.
        \item The paper should indicate the type of compute workers CPU or GPU, internal cluster, or cloud provider, including relevant memory and storage.
        \item The paper should provide the amount of compute required for each of the individual experimental runs as well as estimate the total compute. 
        \item The paper should disclose whether the full research project required more compute than the experiments reported in the paper (e.g., preliminary or failed experiments that didn't make it into the paper). 
    \end{itemize}
    
\item {\bf Code of ethics}
    \item[] Question: Does the research conducted in the paper conform, in every respect, with the NeurIPS Code of Ethics \url{https://neurips.cc/public/EthicsGuidelines}?
    \item[] Answer: \answerYes{} 
    \item[] Justification: {The research involves foundational theoretical and empirical evaluations in associative memory and strictly conforms to the NeurIPS Code of Ethics.}
    \item[] Guidelines:
    \begin{itemize}
        \item The answer \answerNA{} means that the authors have not reviewed the NeurIPS Code of Ethics.
        \item If the authors answer \answerNo, they should explain the special circumstances that require a deviation from the Code of Ethics.
        \item The authors should make sure to preserve anonymity (e.g., if there is a special consideration due to laws or regulations in their jurisdiction).
    \end{itemize}

\item {\bf Broader impacts}
    \item[] Question: Does the paper discuss both potential positive societal impacts and negative societal impacts of the work performed?
    \item[] Answer: \answerYes{} 
    \item[] Justification: {\emph{Positive impacts.} The nonlinear phase-kernel framework provides a principled route to associative memory and attention modules with provably exponential capacity and well-separated attractor basins, supporting more reliable long-context retrieval and reduced memory confusion in attention-based systems. \emph{Potential negative impacts.} Because dense associative memory is mathematically equivalent to self-attention~\cite{ramsauer2021hopfield}, capacity gains at this level can in principle propagate into Transformer-based applications with dual-use concerns (e.g., large-scale surveillance or automated disinformation). Our work is foundational and not tied to any specific deployment; we release no pretrained weights or scraped data, and the accompanying code is restricted to standard associative-memory benchmarks (MNIST, CIFAR-10, TinyImageNet, MNIST-Bags), which limits direct misuse.}
    \item[] Guidelines:
    \begin{itemize}
        \item The answer \answerNA{} means that there is no societal impact of the work performed.
        \item If the authors answer \answerNA{} or \answerNo, they should explain why their work has no societal impact or why the paper does not address societal impact.
        \item Examples of negative societal impacts include potential malicious or unintended uses (e.g., disinformation, generating fake profiles, surveillance), fairness considerations (e.g., deployment of technologies that could make decisions that unfairly impact specific groups), privacy considerations, and security considerations.
        \item The conference expects that many papers will be foundational research and not tied to particular applications, let alone deployments. However, if there is a direct path to any negative applications, the authors should point it out. For example, it is legitimate to point out that an improvement in the quality of generative models could be used to generate Deepfakes for disinformation. On the other hand, it is not needed to point out that a generic algorithm for optimizing neural networks could enable people to train models that generate Deepfakes faster.
        \item The authors should consider possible harms that could arise when the technology is being used as intended and functioning correctly, harms that could arise when the technology is being used as intended but gives incorrect results, and harms following from (intentional or unintentional) misuse of the technology.
        \item If there are negative societal impacts, the authors could also discuss possible mitigation strategies (e.g., gated release of models, providing defenses in addition to attacks, mechanisms for monitoring misuse, mechanisms to monitor how a system learns from feedback over time, improving the efficiency and accessibility of ML).
    \end{itemize}
    
\item {\bf Safeguards}
    \item[] Question: Does the paper describe safeguards that have been put in place for responsible release of data or models that have a high risk for misuse (e.g., pre-trained language models, image generators, or scraped datasets)?
    \item[] Answer: \answerNA{} 
    \item[] Justification: {The work focuses on fundamental associative memory algorithms and does not involve releasing high-risk models, generators, or scraped datasets.}
    \item[] Guidelines:
    \begin{itemize}
        \item The answer \answerNA{} means that the paper poses no such risks.
        \item Released models that have a high risk for misuse or dual-use should be released with necessary safeguards to allow for controlled use of the model, for example by requiring that users adhere to usage guidelines or restrictions to access the model or implementing safety filters. 
        \item Datasets that have been scraped from the Internet could pose safety risks. The authors should describe how they avoided releasing unsafe images.
        \item We recognize that providing effective safeguards is challenging, and many papers do not require this, but we encourage authors to take this into account and make a best faith effort.
    \end{itemize}

\item {\bf Licenses for existing assets}
    \item[] Question: Are the creators or original owners of assets (e.g., code, data, models), used in the paper, properly credited and are the license and terms of use explicitly mentioned and properly respected?
    \item[] Answer: \answerYes{} 
    \item[] Justification: {Standard benchmark datasets (MNIST, CIFAR-10, TinyImageNet, MNIST-Bags) and the public U-Hop codebase used in our experiments are properly cited in the text and references in Section~\ref{sec:results} and Appendix~\ref{sec:appendix-experiment}.}
    \item[] Guidelines:
    \begin{itemize}
        \item The answer \answerNA{} means that the paper does not use existing assets.
        \item The authors should cite the original paper that produced the code package or dataset.
        \item The authors should state which version of the asset is used and, if possible, include a URL.
        \item The name of the license (e.g., CC-BY 4.0) should be included for each asset.
        \item For scraped data from a particular source (e.g., website), the copyright and terms of service of that source should be provided.
        \item If assets are released, the license, copyright information, and terms of use in the package should be provided. For popular datasets, \url{paperswithcode.com/datasets} has curated licenses for some datasets. Their licensing guide can help determine the license of a dataset.
        \item For existing datasets that are re-packaged, both the original license and the license of the derived asset (if it has changed) should be provided.
        \item If this information is not available online, the authors are encouraged to reach out to the asset's creators.
    \end{itemize}

\item {\bf New assets}
    \item[] Question: Are new assets introduced in the paper well documented and is the documentation provided alongside the assets?
    \item[] Answer: \answerYes{} 
    \item[] Justification: {We release anonymized source code for the S-Hop and P-Hop kernels and the capacity-evaluation pipeline as supplemental material. The submission includes a README documenting environment setup and exact commands to reproduce Tables~\ref{tab:main}--\ref{tab:ablation} and Figures~\ref{fig:capacity_main}--\ref{fig:cifar_main}, a \texttt{requirements.txt} pinning dependency versions, and configuration files for each evaluation protocol (max/avg, MSE/SSD, with/without PCA, with/without gating). The code is released under the MIT license.}
    \item[] Guidelines:
    \begin{itemize}
        \item The answer \answerNA{} means that the paper does not release new assets.
        \item Researchers should communicate the details of the dataset\slash code\slash model as part of their submissions via structured templates. This includes details about training, license, limitations, etc. 
        \item The paper should discuss whether and how consent was obtained from people whose asset is used.
        \item At submission time, remember to anonymize your assets (if applicable). You can either create an anonymized URL or include an anonymized zip file.
    \end{itemize}

\item {\bf Crowdsourcing and research with human subjects}
    \item[] Question: For crowdsourcing experiments and research with human subjects, does the paper include the full text of instructions given to participants and screenshots, if applicable, as well as details about compensation (if any)? 
    \item[] Answer: \answerNA{} 
    \item[] Justification: {The research does not involve crowdsourcing or human subjects.}
    \item[] Guidelines:
    \begin{itemize}
        \item The answer \answerNA{} means that the paper does not involve crowdsourcing nor research with human subjects.
        \item Including this information in the supplemental material is fine, but if the main contribution of the paper involves human subjects, then as much detail as possible should be included in the main paper. 
        \item According to the NeurIPS Code of Ethics, workers involved in data collection, curation, or other labor should be paid at least the minimum wage in the country of the data collector. 
    \end{itemize}

\item {\bf Institutional review board (IRB) approvals or equivalent for research with human subjects}
    \item[] Question: Does the paper describe potential risks incurred by study participants, whether such risks were disclosed to the subjects, and whether Institutional Review Board (IRB) approvals (or an equivalent approval/review based on the requirements of your country or institution) were obtained?
    \item[] Answer: \answerNA{} 
    \item[] Justification: {The research does not involve human subjects.}
    \item[] Guidelines:
    \begin{itemize}
        \item The answer \answerNA{} means that the paper does not involve crowdsourcing nor research with human subjects.
        \item Depending on the country in which research is conducted, IRB approval (or equivalent) may be required for any human subjects research. If you obtained IRB approval, you should clearly state this in the paper. 
        \item We recognize that the procedures for this may vary significantly between institutions and locations, and we expect authors to adhere to the NeurIPS Code of Ethics and the guidelines for their institution. 
        \item For initial submissions, do not include any information that would break anonymity (if applicable), such as the institution conducting the review.
    \end{itemize}

\item {\bf Declaration of LLM usage}
    \item[] Question: Does the paper describe the usage of LLMs if it is an important, original, or non-standard component of the core methods in this research? Note that if the LLM is used only for writing, editing, or formatting purposes and does \emph{not} impact the core methodology, scientific rigor, or originality of the research, declaration is not required.
    \item[] Answer: \answerNA{} 
    \item[] Justification: {The core method development and theoretical derivations in this research do not involve LLMs as any important, original, or non-standard components.}
    \item[] Guidelines:
    \begin{itemize}
        \item The answer \answerNA{} means that the core method development in this research does not involve LLMs as any important, original, or non-standard components.
        \item Please refer to our LLM policy in the NeurIPS handbook for what should or should not be described.
    \end{itemize}

\end{enumerate}

\end{document}